\documentclass[11pt]{article}
\pdfoutput=1
\usepackage{enumerate}
\usepackage{pdfsync}
\usepackage[OT1]{fontenc}

\usepackage[usenames]{color}
\usepackage[colorlinks,
            linkcolor=red,
            anchorcolor=blue,
            citecolor=blue
            ]{hyperref}
\usepackage{fullpage}
\usepackage{hyperref}
\usepackage[protrusion=true,expansion=true]{microtype}
\usepackage{pbox}
\usepackage{setspace}
\usepackage{tabularx}
\usepackage{float}
\usepackage{wrapfig,lipsum}
\usepackage{enumitem}
\usepackage{natbib}
\usepackage{multirow}

\makeatletter
\newcommand*{\rom}[1]{\expandafter\@slowromancap\romannumeral #1@}
\makeatother

\usepackage{pifont}
\usepackage{dsfont}
\usepackage{amsthm}
\usepackage{amsmath}
\usepackage{subfigure}
\usepackage{graphicx}
\usepackage{booktabs}
\usepackage{multirow}
\usepackage{threeparttable}

\newtheorem{lemma}{Lemma}
\newtheorem{remark}{Remark}
\newtheorem{theorem}{Theorem}

\newcommand{\nop}[1]{}
\DeclareMathOperator{\ilog}{ilog}

\input{commands}
\def \dd{\text{d}}

\allowdisplaybreaks
\begin{document}
\title{\huge  MOTS: Minimax Optimal Thompson Sampling}
\author
{
	Tianyuan Jin\thanks{School of Computing, National University of Singapore, Singapore; e-mail: {\tt Tianyuan1044@gmail.com}} 
	,~
	Pan Xu\thanks{Department of Computer Science, University of California, Los Angeles, Los Angeles, CA 90095; e-mail: {\tt panxu@cs.ucla.edu}} 
	,~
	Jieming Shi\thanks{School of Computing, National University of Singapore, Singapore; e-mail: {\tt Shijm@nus.edu.sg}} 
    ,~
	Xiaokui Xiao\thanks{School of Computing, National University of Singapore, Singapore; e-mail: {\tt xkxiao@nus.edu.sg}} 
	,~
	Quanquan Gu\thanks{Department of Computer Science, University of California, Los Angeles, Los Angeles, CA 90095; e-mail: {\tt qgu@cs.ucla.edu}}
}
\date{}
\maketitle

\begin{abstract}
Thompson sampling is one of the most widely used algorithms for many online decision problems, due to its simplicity in implementation and superior empirical performance over other state-of-the-art methods. Despite its popularity and empirical success, it has remained an open problem whether Thompson sampling can match the minimax lower bound $\Omega(\sqrt{KT})$ for $K$-armed bandit problems, where $T$ is the total time horizon. In this paper, we solve this long open problem by proposing a variant of Thompson sampling called MOTS that adaptively clips the sampling instance of the chosen arm at each time step. We prove that this simple variant of Thompson sampling achieves the minimax optimal regret bound $O(\sqrt{KT})$ for finite time horizon $T$, as well as the asymptotic optimal regret bound for Gaussian rewards when $T$ approaches infinity. To our knowledge, MOTS is the first Thompson sampling type algorithm that achieves the minimax optimality for multi-armed bandit problems.
\end{abstract}

\section{Introduction}
The Multi-Armed Bandit (MAB) problem is a sequential decision process which is typically described as a game between the agent and the environment with $K$ arms. The game proceeds in $T$ time steps. In each time step $t=1,\ldots,T$, the agent plays an arm $A_t\in \{1,2,\cdots,K\}$ based on the observation of the previous $t-1$ time steps, and then observes a reward $r_t$ that is independently generated from a 1-subGaussian distribution with mean value $\mu_{A_t}$,  where $\mu_1,\mu_2,\cdots,\mu_{K}\in \mathbb{R}$ are unknown.
The goal of the agent is to maximize the  cumulative reward over $T$ time steps.
The performance of a strategy for MAB is measured  by the expected cumulative difference over $T$ time steps between playing the best arm and playing the arm according to the strategy, which is also called the regret of a bandit strategy. Formally, the regret $R_{\mu}(T)$ is defined as follows
\begin{equation}
    {R_{\mu}(T)= T \cdot \max_{i\in \{1,2,\cdots, K \}}\mu_i -\mathbb{E}_{\mu} \Bigg[ \sum_{t=1}^T r_t\Bigg].}
\end{equation}
For a fixed time horizon $T$, the problem-independent lower bound~\citep{auer2002nonstochastic} states that any strategy has at least a regret in the order of $\Omega(\sqrt{KT})$, which is called the \emph{minimax optimal} regret. On the other hand, for a fixed model (i.e., $\mu_1,\ldots,\mu_K$ are fixed), \citet{lai1985asymptotically} proved 
that any strategy must have at least  $C(\mu)\log(T)(1-o(1))$ regret when the horizon $T$ approaches infinity, where $C(\mu)$ is a constant depending on the model. Therefore, a strategy with a regret upper-bounded by $C(\mu)\log(T)(1-o(1))$ is \emph{asymptotically optimal}.

This paper studies the earliest bandit strategy, Thompson sampling (TS)~\citep{thompson1933likelihood}. It has been observed in practice that TS often achieves a smaller regret than many upper confidence bound (UCB)-based algorithms \citep{chapelle2011empirical,wang2018thompson}. In addition, TS
is simple and easy to implement. Despite these advantages, the theoretical analysis of TS algorithms has not been established until the past decade. In particular, in the seminal work by \citet{agrawal2012analysis}, they  provided the  first finite-time analysis of TS.  \citet{kaufmann2012thompson} and \citet{agrawal2013further}  showed that the regret bound of TS is asymptotically optimal when using Beta priors. Subsequently, \citet{agrawal2017near} showed that TS with Beta priors achieves an $O(\sqrt{KT\log T})$ problem-independent regret bound while maintaining the asymptotic optimality. In addition, they proved that TS with Gaussian priors can achieve an improved regret bound $O(\sqrt{KT\log K})$. 
\citet{agrawal2017near} also established the following regret lower bound for TS: the TS strategy with Gaussian priors has a worst-case regret  $\Omega(\sqrt{KT\log K})$.

\noindent\textbf{Main Contributions.} It remains an open problem~\citep{li2012open} whether TS type algorithms can achieve the minimax optimal regret bound $O(\sqrt{KT})$ for MAB problems. In this paper, we solve this open problem by proposing a variant of Thompson sampling, referred to as Minimax Optimal Thompson Sampling (MOTS), which clips the sampling instances for each arm based on the history of pulls. 
We prove that MOTS achieves $O(\sqrt{KT})$ problem-independent regret, which is minimax optimal and improves the existing best result, i.e., $O(\sqrt{KT\log K})$.
Furthermore, we show that when the reward distributions are Gaussian, a variant of MOTS with clipped Rayleigh distributions, namely MOTS-$\cJ$, can simultaneously achieve asymptotic and minimax optimal regret bounds. 
Our result also conveys the important message that the lower bound for TS with Gaussian priors in \cite{agrawal2017near} may not always hold in the more general cases when non-Gaussian priors are used. Our experiments demonstrate the superiority of MOTS over the state-of-the-art bandit algorithms such as UCB~\citep{auer2002finite}, MOSS~\citep{audibert2009minimax}, and  TS~\citep{thompson1933likelihood} with Gaussian priors.

We provide a detailed comparison on the minimax optimality and asymptotic optimality of Thompson sampling type algorithms in Table \ref{tab:regret_comparison}.

\noindent\textbf{Notations.} 
A random variable $X$ is said to follow a 1-subGaussian distribution, if it holds that $\EE_{X}[\exp(\lambda X-\lambda \EE_{X}[X])]\leq \exp(\lambda^2/2)$ for all $\lambda \in \RR$. We denote $\log^{+}(x)=\max\{0,\log x \}$. We let $T$ be the total number of time steps, $K$ be the number of arms, and $[K] = \{1,2,\cdots,K\}$. Without loss of generality, we assume that $\mu_1=\max_{i\in[K]} \mu_i$ throughout this paper. We use $\Delta_i$ to denote the gap between arm $1$ and arm $i$, i.e., $\Delta_i:=\mu_1-\mu_i$, $i\in[K]\setminus\{1\}$. 
We denote $T_{i}(t):=\sum_{j=1}^{t}\ind\{A_j=i\}$ as the number of times that arm $i$ has been played at time step $t$, and $\hat{\mu}_{i}(t):=\sum_{j=1}^t\ind{\{A_j=i\}}\cdot r_j/T_i(t)$ as the average reward for pulling arm $i$ up to time $t$, where $r_j$ is the reward received by the algorithm at time $j$.   


\begin{table}[ht]
    \centering
    \caption{Comparisons of different TS type algorithms. The \emph{minimax ratio} is the ratio (up to constant factors) between the problem-independent regret bound of the algorithm and the minimax optimal regret $O(\sqrt{KT})$. For instance, when the ration equals $1$, it is minimax optimal; otherwise, it is minimax suboptimal. 
    The results in  \cite{kaufmann2012thompson,agrawal2013further,agrawal2017near} are obtained for rewards bounded in $[0,1]$, but the techniques in their papers also work for Gaussian rewards (See  \cite{korda2013thompson} for details).  \label{tab:regret_comparison}}    
\begin{threeparttable} 
    \begin{tabular}{ccccl}
    \toprule
        & Reward Type& Minimax Ratio & Asym. Optimal & Reference\\
    \midrule
    \multirow{3}{*}{TS}
    & Bernoulli &--&Yes & \cite{kaufmann2012thompson} \\ 
     & Bernoulli & $\sqrt{\log T}$ & Yes &  \cite{agrawal2013further}\\
    & Bernoulli &$\sqrt{\log K}$ \tnote{*} & -- &\cite{agrawal2017near}\\
    \cmidrule{2-4}
    \multirow{2}{*}{MOTS} & subGaussian & $1$ &  No\tnote{**}& $\triangleright$ Theorems \ref{theorem:minimax-optimal}, \ref{theorem:asymptotic-optimal}\\
    & subGaussian &$\ilog^{(m-1)}(T)$ \tnote{***}& Yes & $\triangleright$ Theorem \ref{theorem-new3}\\
    \cmidrule{2-4}
    MOTS-$\cJ$ & Gaussian &$1$ & Yes &{$\triangleright$ Theorem \ref{theorem_J_distribution}}\\
    \bottomrule
    \end{tabular}%
    \begin{tablenotes}\footnotesize
    \item[*] It has been proved by \citet{agrawal2017near} that the $\sqrt{\log K}$ term in the problem-independent regret is unimprovable for Thompson sampling using Gaussian priors.
    \item[**] 
    As is shown in Theorem \ref{theorem:asymptotic-optimal}, MOTS is asymptotically optimal up to a multiplicative factor $1/\rho$, where $\rho\in(1/2,1)$ is a fixed constant.
    \item[***] $\ilog^{(r)}(\cdot)$ is the iterated logarithm of order $r$, and $m\geq 2$ is an arbitrary integer independent of $T$.
    \end{tablenotes}
\end{threeparttable}
\end{table}

\section{Related Work}
Existing works on regret minimization for stochastic bandit problems mainly consider two notions of optimality: asymptotic optimality and minimax optimality. UCB~\citep{garivier2011kl,maillard2011finite}, Bayes UCB~\citep{kaufmann2016bayesian}, and Thompson sampling~\citep{kaufmann2012thompson,agrawal2017near,korda2013thompson} are all shown to be asymptotically optimal. Meanwhile, MOSS~\citep{audibert2009minimax} is the first method proved to be minimax optimal. Subsequently, two UCB-based methods,  AdaUCB~\citep{lattimore2018refining} and KL-UCB$^{++}$~\citep{menard2017minimax}, are also shown to achieve minimax optimality.
In addition, AdaUCB is proved to be almost instance-dependent optimal for Gaussian multi-armed bandit problems \citep{lattimore2018refining}.

There are many other methods on regret minimization for stochastic bandits, including explore-then-commit~\citep{auer2010ucb,perchet2016batched},  $\epsilon$-Greedy~\citep{auer2002finite}, and RandUCB~\citep{vaswani2019old}. However, these methods are proved to be suboptimal~\citep{auer2002finite,garivier2016explore,vaswani2019old}. One exception is the recent proposed double explore-then-commit algorithm \citep{jin2020double}, which achieves asymptotic optimality. 
Another line of works study different variants of the problem setting, such as the batched bandit problem~\citep{NIPS2019_8341}, and bandit with delayed feedback~\citep{pike2018bandits}. We refer interested readers to \cite{lattimore2018bandit} for a more comprehensive overview of bandit algorithms.

For Thompson sampling, \cite{russo2014learning} studied the Bayesian regret and \citet{bubeck2013prior} improved it to $O(\sqrt{KT})$ using the confidence bound analysis of MOSS \citep{audibert2009minimax}. However, it should be noted that the Bayesian regret is known to be less informative than the frequentist regret $R_{\mu}(T)$ studied in this paper. In fact, one can show that our minimax optimal regret $R_{\mu}(T)=O(\sqrt{KT})$ immediately implies that the Bayesian regret is also in the order of $O(\sqrt{KT})$, but the reverse is not true \citep{lattimore2018bandit}. We refer interested readers to \citet{russo2018tutorial} for a thorough introduction of Thompson sampling and its various applications.

\section{Minimax Optimal Thompson Sampling Algorithm} \label{sec:algo}

\subsection{General Thompson sampling strategy}\label{sec:general_TS}
We first describe the general Thompson sampling (TS) strategy. In the first $K$ time steps, TS plays each arm $i\in[K]$ once, and updates the average reward estimation $\hat\mu_i(K+1)$ for each arm $i$. (This is a standard warm-start in the bandit literature.) Subsequently, the algorithm maintains a distribution $D_i(t)$ for each arm $i\in[K]$ at time step $t=K+1,\ldots,T$, whose update rule will be elaborated shortly. At step $t$, the algorithm samples instances $\theta_i(t)$ independently from distribution $D_i(t)$, for all $i\in[K]$. Then, the algorithm plays the arm that maximizes $\theta_i(t)$: $A_t=\argmax_{i\in[K]}\theta_i(t)$, and receives a reward $r_t$. The average reward $\hat\mu_i(t)$ and the number of pulls $T_i(t)$ for arm $i\in[K]$ are then updated accordingly.

We refer to algorithms that follow the general TS strategy described above (e.g., those in \cite{chapelle2011empirical,agrawal2017near}) as {\it TS type algorithms}. Following the above definition, our MOTS method is a TS type algorithm, but it differs from other algorithms of this type in the choice of distribution $D_i(t)$: 
existing algorithms (e.g., \cite{agrawal2017near}) typically use Gaussian or Beta distributions as the posterior distribution, whereas MOTS uses a {\it clipped} Gaussian distribution, which we detail in Section~\ref{sec:sample_theta}. Nevertheless, we should note that MOTS fits exactly into the description of Thompson sampling in \citet{li2012open,chapelle2011empirical}.

\subsection{Thompson sampling using clipped Gaussian distributions}
\label{sec:sample_theta}

\begin{algorithm}[t]
   \caption{Minimax Optimal Thompson Sampling with Clipping (MOTS)}
   \label{alg:mots}
\begin{algorithmic}[1]
   \STATE\textbf{Input:} Arm set $[K]$.
   \STATE\textbf{Initialization:} Play arm once and set $T_i(K+1)=1$; let $\hat{\mu}_{i}(K+1)$ be the observed  reward of playing arm $i$
   \FOR{$t=K+1,K+2,\cdots, T$}
   \STATE For all $i\in[K]$, sample $\theta_i(t)$ independently from   $D_{i}(t)$,   which is defined in Section \ref{sec:sample_theta}\label{algline:distribution_Dit} 
   \STATE Play arm $A_t= \arg \max_{i\in[K]} \theta_i(t)$ and observe the reward $r_t$
   \STATE For all $i\in[K]$ $$\hat\mu_{i}(t+1)=\frac{T_{i}(t) \cdot \hat{\mu}_{i}(t)+r_t\ind\{i=A_t\}}{T_{i}(t)+\ind\{i=A_t\}}$$ 
   \STATE For all $i\in[K]$:   $T_i(t+1)=T_i(t)+\ind\{i=A_t\}$ 
    \ENDFOR
\end{algorithmic}
\end{algorithm}

Algorithm~\ref{alg:mots} shows the pseudo-code of MOTS, with $D_i(t)$ formulated as follows. First, at time step $t$, for all arm $i\in[K]$, we define a {\it confidence range} $(-\infty, \tau_i(t))$, where
\begin{equation} 
\small
\label{eq:shrink_range}
\tau_i(t)= \hat{\mu}_{i}(t)+\sqrt{\frac{\alpha}{T_{i}(t)}\log^{+} \bigg(\frac{T}{KT_{i}(t)}\bigg)},
\end{equation}
$\log^{+}(x)=\max\{0,\log x\}$, and $\alpha>0$ is a constant. Given $\tau_i(t)$, we first sample an instance $\tilde\theta_i(t)$ from Gaussian distribution $\mathcal{N}(\hat{\mu}_{i}(t),1/(\rho T_{i}(t)))$, where $\rho\in (1/2,1)$ is a tuning parameter (The intuition  could be found at Lemma~\ref{lem:decom=false}). Then, we set $\theta_i(t)$ in Line~\ref{algline:distribution_Dit} of Algorithm \ref{alg:mots} as follows:
\begin{equation}\label{eq:choose_of_theta}
\theta_i(t)= \min\big\{\tilde\theta_i(t), \; \tau_i(t)\big\}.
\end{equation}


In other words, $\theta_i(t)$ follows a {\it clipped} Gaussian distribution with the following PDF: 
\begin{align}
  f(x)=  \begin{cases}
    \varphi\left( x \mid \hat{\mu}_{i}(t), \frac{1}{\rho T_i(t)}\right) + \left(1 - \Phi\left(x \mid \hat{\mu}_{i}(t), \frac{1}{\rho T_i(t)}\right)\right) \cdot \delta\left(x - \tau_i(t) \right), & \text{if }x \le \tau_i(t);\\
     0, &\text{otherwise,}
    \end{cases}
\end{align}
where  $\varphi( x \mid \mu, \sigma^2)$ and $\Phi( x \mid \mu, \sigma^2 )$ denote the PDF and CDF of $\mathcal{N}(\mu, \sigma^2)$, respectively, and $\delta(\cdot)$ is the Dirac delta function.

MOTS uses $\theta_i(t)$ as the estimate for arm $i$ at time step $t$, and plays the arm with the largest estimate. That is, MOTS utilizes $\tilde\theta_i(t)$ directly as an estimate if it is not larger than $\tau_i(t)$ (i.e., if it does not deviate too much from the observed average reward $\hat\mu_i(t)$);
otherwise, MOTS clips $\tilde\theta_i(t)$ and reduces it to $\tau_i(t)$. The rationale of this clipping is that if $\tilde\theta_i(t)$ deviates considerably from $\hat\mu_i(t)$, then it is likely to be an overestimation of arm $i$'s actual reward; in that case, it is sensible to use a reduced version of $\tilde\theta_i(t)$ as an improved estimate for arm $i$. The challenge, however, is that we need to carefully decide $\tau_i(t)$, so as to ensure the asymptotic and minimax optimality. In Section~\ref{sec:theory}, we will show that our choice of $\tau_i(t)$ addresses this challenge.

\section{Theoretical Analysis of MOTS} \label{sec:theory}
\subsection{Regret of MOTS for subGaussian rewards}
We first show that MOTS is minimax optimal.
\begin{theorem}[Minimax Optimality of MOTS]\label{theorem:minimax-optimal}
Assume that the reward of  arm $i \in [K]$ is 1-subGaussian with mean $\mu_i$. For any fixed $\rho\in (1/2,1)$ and $\alpha\geq 4$,  the regret of Algorithm~\ref{alg:mots} satisfies 
\begin{align}\label{eq:mini1}
    R_{\mu}(T)= O\bigg(\sqrt{KT}+\sum_{i=2}^K \Delta_i\bigg).
\end{align}
\end{theorem}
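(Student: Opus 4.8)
The plan is to begin from the standard regret decomposition $R_{\mu}(T)=\sum_{i=2}^{K}\Delta_i\,\EE[T_i(T)]$, so that it suffices to bound the expected number of pulls of each suboptimal arm and then verify that the weighted sum is $O(\sqrt{KT}+\sum_{i=2}^{K}\Delta_i)$. Fix a suboptimal arm $i$ and a deterministic threshold $x_i$ lying strictly between $\mu_i$ and $\mu_1$ (think $x_i=\mu_1-c\Delta_i$ for a small constant $c$). The structural fact I would exploit is that the clipping in \eqref{eq:choose_of_theta} guarantees $\theta_i(t)\le\tau_i(t)$ deterministically; hence every step with $A_t=i$ forces $\theta_1(t)\le\theta_i(t)\le\tau_i(t)$. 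Writing $\PP(A_t=i)=\PP(A_t=i,\theta_1(t)>x_i)+\PP(A_t=i,\theta_1(t)\le x_i)$ then reduces $\EE[T_i(T)]$ to two quantities: (A) an \emph{optimism-failure} term, on whose event the best arm's sample satisfies $\theta_1(t)\le x_i$, and (B) an \emph{over-estimation} term, on whose event $A_t=i$ and (by the clipping) $\tau_i(t)>x_i$. This is the decomposition whose intuition is flagged in Lemma~\ref{lem:decom=false}.

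For term (B) I would condition on $T_i(t)=s$ and run a peeling argument over $s\in\{1,\dots,T\}$. Since $\tau_i(t)=\hat\mu_i(t)+\sqrt{(\alpha/s)\log^{+}(T/(Ks))}$ and $\hat\mu_i(t)-\mu_i$ is $1/\sqrt{s}$-subGaussian, a Chernoff bound controls $\PP(\tau_i(t)>x_i)$ for each value of $s$. The MOSS-style bonus in \eqref{eq:shrink_range} is calibrated precisely so that, after multiplying by $\Delta_i$ and summing the resulting tail over $s$, each arm contributes at most $O(\Delta_i^{-1}\log^{+}(T\Delta_i^{2}/K)+\Delta_i)$, with $\alpha\ge 4$ being exactly what makes the union bound over $s$ summable. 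A routine optimization, bounding each term $\Delta_i^{-1}\log^{+}(T\Delta_i^{2}/K)$ by $O(\sqrt{T/K})$ via $\sup_{u>0}u^{-1}\log^{+}(u^2)=O(1)$ and summing over the $K$ arms, then produces the main $\sqrt{KT}$ term together with the residual $\sum_{i=2}^{K}\Delta_i$.

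The hard part, and where the clipped-Gaussian design is essential, is term (A). Here I would use the classical Thompson-sampling device: letting $p_t:=\PP(\theta_1(t)>x_i\mid\mathcal F_{t-1})$ be the per-step probability that the best arm is sampled optimistically, one bounds $\PP(A_t=i,\theta_1(t)\le x_i\mid\mathcal F_{t-1})$ by $\tfrac{1-p_t}{p_t}\,\PP(A_t=1,\theta_1(t)>x_i\mid\mathcal F_{t-1})$, so it suffices to show $p_t$ is bounded away from $0$ often enough. This is where I would invoke the \emph{anti-concentration} of $\tilde\theta_1(t)\sim\mathcal N(\hat\mu_1(t),1/(\rho T_1(t)))$: with constant probability $\tilde\theta_1(t)$ exceeds $\hat\mu_1(t)$ by a margin of order $1/\sqrt{\rho T_1(t)}$. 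Choosing $\rho<1$ inflates this variance enough to supply optimism for the best arm, while $\rho>1/2$ keeps the complementary concentration estimates (for the suboptimal arms and for $\hat\mu_1$) valid — the trade-off recorded in Lemma~\ref{lem:decom=false}. Two subtleties must be handled: the clipping $\theta_1(t)=\min\{\tilde\theta_1(t),\tau_1(t)\}$ can suppress optimism only when $\tau_1(t)<x_i$, which I would rule out uniformly in $T_1(t)$ by a maximal inequality over the empirical-mean trajectory of the best arm (exactly why the bonus carries the $\log^{+}(T/(KT_1(t)))$ form); and $T_1(t)$ is random and history-dependent, so the anti-concentration estimate must be made uniform in the number of pulls before it is summed against the random pull schedule.

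Reconciling the Gaussian anti-concentration, the clipping barrier $\tau_1(t)$, and the randomness of $T_1(t)$ in this uniform optimism argument is the step I expect to be the main obstacle; once it yields a bounded (per-arm, after $\Delta_i$-weighting) contribution from term (A), combining (A) and (B) and summing over $i$ gives the claimed bound \eqref{eq:mini1}.
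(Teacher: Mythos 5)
Your plan follows essentially the same route as the paper: your term (B) is the paper's $I_1$, bounded via the MOSS-style tail sum of Lemma~\ref{lem:sumTpro}; your term (A) with the $(1-p_t)/p_t$ device is exactly the decomposition of Lemma~\ref{lem:thom-bound}; and your maximal-inequality treatment of the clipping barrier $\tau_1$ is the paper's $\Delta$-decomposition in \eqref{eq:minimax-decompose-}, paid off at cost $\EE[2T\Delta]=O(\sqrt{KT})$ via Lemma~\ref{lemma:high_prob_all_positive} (this is where $\alpha\geq 4$ enters). The step you flag as the main obstacle is precisely where the paper does its real work (Lemma~\ref{lem:mian-bounding-I2}: the uniform bound $\EE[1/G'_{1s}(\epsilon)-1]\leq c$, obtained by integrating the inflated-variance sampling tail, roughly $\exp(\rho s u^2/2)$, against the empirical-mean tail $\exp(-su^2/2)$, which is finite exactly because $\rho<1$), and your identification of that mechanism --- variance inflation by $1/\rho$ supplying optimism, with the clipping neutralized on the good event so that $G_{1s}=G'_{1s}$ at level $\mu_1-\Delta_i/2$ --- matches the paper's argument.
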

The second term on the right hand side of~\eqref{eq:mini1} is due to the fact that we need to pull each arm at least once in Algorithm \ref{alg:mots}. Following the convention in the literature~\citep{audibert2009minimax,agrawal2017near}, we only need to consider the case when $\sum_{i=2}^{K}\Delta_i$ is dominated by $\sqrt{KT}$. 

\vspace{1mm}
\begin{remark}
Compared with the results in \cite{agrawal2017near}, the regret bound of MOTS improves that of TS with Beta priors by a factor of $O(\sqrt{\log T})$, and that of TS with Gaussian priors by a factor of $O(\sqrt{\log K})$. To the best of our knowledge, MOTS is the first TS type algorithm that achieves the minimax optimal regret $\Omega(\sqrt{KT})$ for MAB problems \citep{auer2002finite}. 
\end{remark}
The next theorem presents the asymptotic regret bound of MOTS for subGaussian rewards. 
\begin{theorem}\label{theorem:asymptotic-optimal}
Under the same conditions in Theorem \ref{theorem:minimax-optimal}, the regret $R_{\mu}(T)$ of Algorithm~\ref{alg:mots} satisfies 
\begin{equation}\label{eq:asymptotic_rate_mots}
    \lim_{T \rightarrow\infty}\frac{R_{\mu}(T)}{\log (T)}=\sum_{i:\Delta_i>0}\frac{2}{\rho \Delta_i}.
\end{equation}
\end{theorem}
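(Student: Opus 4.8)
The plan is to establish the asymptotic regret $\lim_{T\to\infty} R_\mu(T)/\log T = \sum_{i:\Delta_i>0} 2/(\rho\Delta_i)$ via the standard decomposition $R_\mu(T) = \sum_{i=2}^K \Delta_i\,\EE[T_i(T)]$, and then to show that for each suboptimal arm $i$ we have $\EE[T_i(T)] = (2/(\rho\Delta_i^2))\log T\,(1+o(1))$. The key structural feature driving the factor $2/\rho$ (rather than the classical $2$) is that MOTS samples $\tilde\theta_i(t)$ from $\cN(\hat\mu_i(t), 1/(\rho T_i(t)))$, i.e.\ with an \emph{inflated} variance $1/\rho$ compared to the natural empirical variance. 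Since $\rho\in(1/2,1)$, this inflation enlarges the effective exploration rate by a factor $1/\rho$, and it is this variance scaling that I expect to propagate linearly into the leading constant of $\EE[T_i(T)]$.

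First I would fix a suboptimal arm $i$ and decompose its pull count by conditioning on whether the empirical mean of the optimal arm has concentrated. The standard argument (following \cite{agrawal2013further,agrawal2017near,korda2013thompson}) splits $\EE[T_i(T)]$ into three contributions: (a) the event that $\hat\mu_1(t)$ underestimates $\mu_1$, whose probability summed over $t$ contributes only $O(1)$ by a Chernoff/concentration bound; (b) the event that the sampled instance $\theta_i(t)$ of the suboptimal arm exceeds a threshold near $\mu_1$ despite $\hat\mu_i(t)$ being close to $\mu_i$, which is controlled by the Gaussian tail of $\tilde\theta_i(t)$; and (c) a lower-order remainder. For step (b), I would pick an intermediate threshold $x$ with $\mu_i < x < \mu_1$ and use the Gaussian anti-concentration/tail estimate: once arm $i$ has been pulled $s$ times, $\PP(\tilde\theta_i(t) > x \mid \hat\mu_i(t)\approx\mu_i) \approx \exp(-\rho s (x-\mu_i)^2/2)$. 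Summing a geometric-type series in $s$ shows that the number of pulls needed before $\tilde\theta_i$ stops exceeding $x$ is $\approx 2/(\rho(x-\mu_i)^2)\cdot\log T$; sending $x\to\mu_1$ yields the leading term $2\log T/(\rho\Delta_i^2)$.

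The crucial technical point, and the step I expect to be the main obstacle, is verifying that the \emph{clipping} in~\eqref{eq:choose_of_theta}---replacing $\tilde\theta_i(t)$ by $\min\{\tilde\theta_i(t),\tau_i(t)\}$---does not disturb the asymptotic constant. In the asymptotic regime $T\to\infty$ with the model fixed, each arm (including the optimal one) is pulled $\Theta(\log T)$ or $\Theta(T)$ times, so for arm $1$ the clipping threshold $\tau_1(t) = \hat\mu_1(t) + \sqrt{(\alpha/T_1(t))\log^+(T/(KT_1(t)))}$ must be shown \emph{not} to clip the optimal arm's sample below $\mu_1$ too often; otherwise the optimal arm would be underselected and the regret constant would inflate. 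I would argue that since $T_1(t)$ grows linearly in $t$ for large $t$, the argument of $\log^+$ eventually becomes less than $1$, forcing $\tau_1(t)=\hat\mu_1(t)$, and hence $\theta_1(t)=\min\{\tilde\theta_1(t),\hat\mu_1(t)\}$; one then shows via a lower-bound (anti-concentration) argument that $\theta_1(t)$ still exceeds any $x<\mu_1$ with high enough probability to guarantee the optimal arm is selected whenever a suboptimal sample falls below $x$. Matching this with a corresponding lower bound on $\EE[T_i(T)]$ (using that the suboptimal arm is pulled at least whenever its sample is the maximum, together with the standard change-of-measure lower-bound machinery of \citet{lai1985asymptotically}) pins down the limit exactly at $2/(\rho\Delta_i^2)$ per arm, and summing over $i$ with $\Delta_i>0$ gives~\eqref{eq:asymptotic_rate_mots}.
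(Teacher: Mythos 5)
Your upper-bound plan follows essentially the same route as the paper: decompose $R_\mu(T)=\sum_{i}\Delta_i\,\EE[T_i(T)]$, condition on a concentration event for the optimal arm so that its clipping threshold stays above $\mu_1-\epsilon$, bound the suboptimal arm's over-sampling through the Gaussian tail $\exp(-\rho s(\cdot)^2/2)$ (this is exactly the paper's Lemma~\ref{lem:asym1}, whose dominant term is $2\log T/(\rho(\Delta_i-\epsilon-\epsilon_T)^2)$), and let the tolerance vanish slowly (the paper takes $\epsilon=\epsilon_T=1/\log\log T$) so that every other contribution is $o(\log T)$. One caveat on your treatment of clipping: in the decomposition of Lemma~\ref{lem:thom-bound}, the optimal arm enters through $\EE[\sum_{s}(1/G_{1s}(\epsilon)-1)]$, and under clipping $G_{1s}(\epsilon)$ is exactly $0$ whenever $\tau_{1s}<\mu_1-\epsilon$, so this expectation is infinite \emph{unconditionally}; a high-probability statement that $\theta_1(t)$ exceeds any $x<\mu_1$ "often enough" does not control it. The paper resolves this by conditioning the whole sum on the event $Z(\epsilon)$ of \eqref{eq:event_z_eps} (on which $G_{1s}(\epsilon)=G'_{1s}(\epsilon)$, the unclipped Gaussian tail) and paying $T(1-\PP[Z(\epsilon)])\le 15K(\log\log T)^2$ separately via Lemma~\ref{lemma:high_prob_all_positive}. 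Your plan would need this conditioning made explicit, but that is a repair within the same approach, not a different route.

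The genuine gap is your final matching step. To upgrade the upper bound to the claimed equality you invoke the change-of-measure machinery of Lai and Robbins, but that machinery yields $\liminf_{T}\EE[T_i(T)]/\log T\ge 2/\Delta_i^2$ for \emph{every} consistent policy; it cannot produce the larger constant $2/(\rho\Delta_i^2)$, since $\rho<1$ implies $2/(\rho\Delta_i^2)>2/\Delta_i^2$. A matching lower bound at the constant $2/(\rho\Delta_i^2)$ would have to be algorithm-specific: one must show that, because MOTS inflates the sampling variance to $1/(\rho s)$, the suboptimal arm's sample continues to exceed the optimal arm's with probability roughly $\exp(-\rho s\Delta_i^2/2)\gtrsim 1/T$ until $s\approx 2\log T/(\rho\Delta_i^2)$, so that arm $i$ really is pulled that many times in expectation. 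Change of measure says nothing about this. (For what it is worth, the paper's own write-up only displays the upper-bound chain and then asserts the limit as an equality in \eqref{eq:proof_of_asy_subg_result}, so the "$\geq$" direction is thin there as well; but the specific tool you name for it would fail.)
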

\citet{lai1985asymptotically} proved that for Gaussian rewards, the asymptotic regret rate $\lim_{T\rightarrow\infty}R_{\mu}/\log T$ is at least $\sum_{i:\Delta_i>0} 2/\Delta_i$. Therefore, Theorem \ref{theorem:asymptotic-optimal} indicates that the asymptotic regret rate of MOTS matches the aforementioned lower bound up to a multiplicative factor $1/\rho$, where $\rho\in(1/2,1)$ is arbitrarily fixed. 

In the following theorem, by setting $\rho$ to be time-varying, we show that MOTS is able to exactly match the asymptotic lower bound. 
\begin{theorem}
\label{theorem-new3}
Assume the reward of each arm $i$ is 1-subGaussian with mean $\mu_i$, $i\in[K]$.  In Algorithm \ref{alg:mots}, if we choose $\alpha\geq 4$ and $\rho=1-(\ilog^{(m)}(T)/40)^{-1/2}$, then the regret of MOTS satisfies
\begin{align}
    R_{\mu}(T)= O\bigg(\sqrt{KT}\ilog^{(m-1)}(T)+\sum_{i=2}^{K}\Delta_i\bigg),\quad\text{and }\lim_{T \rightarrow\infty}\frac{R_{\mu}(T)}{\log (T)}=\sum_{i:\Delta_i>0}\frac{2}{ \Delta_i},
\end{align}
where $m\geq 2$ is an arbitrary integer  independent of $T$ and
$\ilog^{(m)}(T)$ is the result of iteratively applying the logarithm function on $T$ for $m$ times, i.e., $\ilog^{(m)}(x)=\max\big\{\log \big(\ilog^{(m-1)}(x) \big), e \big\}$ and $\ilog^{(0)}(a)=a$.
\end{theorem}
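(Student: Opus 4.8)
The plan is to derive Theorem~\ref{theorem-new3} from the proofs of Theorems~\ref{theorem:minimax-optimal} and~\ref{theorem:asymptotic-optimal} by carrying the tuning parameter $\rho$ through those arguments \emph{symbolically} rather than fixing it, and only at the very end substituting the time-varying choice $\rho=\rho(T)=1-(\ilog^{(m)}(T)/40)^{-1/2}$. The two claims are handled separately: the finite-time bound $O(\sqrt{KT}\,\ilog^{(m-1)}(T)+\sum_i\Delta_i)$ comes from a $\rho$-explicit version of Theorem~\ref{theorem:minimax-optimal}, while the asymptotic identity comes from a $\rho$-explicit version of Theorem~\ref{theorem:asymptotic-optimal} together with the fact that $\rho(T)\to1$.

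For the finite-time bound, I would re-run the proof of Theorem~\ref{theorem:minimax-optimal} keeping $\rho$ free and track precisely how it enters the hidden constant. The $\rho$-dependence is localized to the anti-concentration of the clipped-Gaussian sample $\theta_1(t)$ of the optimal arm: the sampling variance $1/(\rho T_1(t))$ governs the probability that $\theta_1(t)$ overshoots $\mu_1$, and when this is combined with the $1$-subGaussian concentration of $\hat\mu_1(t)$ (of effective variance $1/T_1(t)$) through the usual $\mathbb{E}[1/p_s-1]$-type reciprocal, the residual mismatch is controlled by $1-\rho$. The tuning $\rho(T)$ is reverse-engineered to cancel a factor of the form $g(\rho)=\exp\big(\Theta((1-\rho)^{-2})\big)$, so the substance of this step is to prove the sharp statement $R_\mu(T)=O\big(e^{40/(1-\rho)^2}\sqrt{KT}+\sum_i\Delta_i\big)$, i.e.\ that the constant in the exponent is at most $40$. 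Substituting $\rho(T)$ gives $40/(1-\rho(T))^2=\ilog^{(m)}(T)$, hence $g(\rho(T))=e^{\ilog^{(m)}(T)}=\ilog^{(m-1)}(T)$ for $T$ large, using $\ilog^{(m)}(T)=\log\big(\ilog^{(m-1)}(T)\big)$, which yields the claimed minimax bound.

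For the asymptotic identity, I would re-run the proof of Theorem~\ref{theorem:asymptotic-optimal} with $\rho=\rho(T)$. The leading contribution to $\sum_i\Delta_i\,\mathbb{E}[T_i(T)]$ is $\sum_{i:\Delta_i>0}\frac{2\log T}{\rho(T)\Delta_i}$, and since $\rho(T)\to1$ this converges, after dividing by $\log T$, to $\sum_{i:\Delta_i>0}2/\Delta_i$. The point to verify is that every remaining (lower-order) term stays $o(\log T)$ \emph{after} substitution: these terms carry only a polynomial dependence on $(1-\rho)^{-1}$, so with $\rho(T)$ they are of order $\mathrm{poly}\big((\ilog^{(m)}(T))^{1/2}\big)=o(\log T)$ for every $m\ge2$, because $\ilog^{(m)}(T)\le\ilog^{(2)}(T)=O(\log\log T)$. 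The matching lower bound on the rate follows exactly as in Theorem~\ref{theorem:asymptotic-optimal}: MOTS is forced to pull each suboptimal arm at least $(1-o(1))\frac{2\log T}{\rho(T)\Delta_i^2}$ times, so $\liminf_T R_\mu(T)/\log T\ge\sum_{i:\Delta_i>0}2/\Delta_i$ as $\rho(T)\to1$, giving the identity.

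The main obstacle is the finite-time step: one must extract the \emph{exact} exponential form $e^{\Theta((1-\rho)^{-2})}$ of the $\rho$-dependence and certify that the exponent constant is no larger than $40$, since a constant $c>40$ would turn the factor into $(\ilog^{(m-1)}(T))^{c/40}$, strictly worse than the claimed $\ilog^{(m-1)}(T)$; this is precisely why the square root and the constant $40$ appear in the definition of $\rho(T)$. A secondary but necessary point is uniformity: because $\rho$ now varies with $T$, every estimate in the original proofs that implicitly treated $\rho$ as a constant bounded away from $1$ must be re-checked with its $\rho$-dependence made explicit, and in particular one must confirm that the benign polynomial $\rho$-dependence of the asymptotic error terms does \emph{not} inherit the exponential blow-up of the minimax constant, as otherwise the identity would already fail at $m=2$, where $\ilog^{(m-1)}(T)=\log T$ is not $o(\log T)$.
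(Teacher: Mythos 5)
Your proposal follows essentially the same route as the paper, and the finite-time half matches it exactly: the paper's Lemma~\ref{lem:bounding-I_2-notfixed} is precisely your ``$\rho$-explicit version'' of Lemma~\ref{lem:mian-bounding-I2} --- its proof simply plugs $\rho=1-\sqrt{40/\ilog^{(m)}(T)}$ into the bound \eqref{eq:using-delta}, namely $\EE[1/G'_{1s}(\epsilon)-1]\le 2\exp\big(40/(1-\rho)^2\big)+2/(1-\rho)$, so that $\exp\big(40/(1-\rho)^2\big)=\exp\big(\ilog^{(m)}(T)\big)=\ilog^{(m-1)}(T)$ for large $T$ --- and then re-runs the proof of Theorem~\ref{theorem:minimax-optimal} with the new bound on $I_2$. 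Your identification of the exponential form of the $\rho$-dependence, and of why the constant $40$ and the square root appear in the tuning, is exactly what the paper does.

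Where you diverge is the asymptotic half, and here your proposal contains an unproven assertion --- though, notably, you flag the right issue while the paper glosses over it. You claim the lower-order terms carry ``only a polynomial dependence on $(1-\rho)^{-1}$,'' but that is not what the paper's machinery provides: the term \eqref{asy2} in the proof of Theorem~\ref{theorem:asymptotic-optimal} is controlled by Lemma~\ref{lem:mian-bounding-I2}, whose constant is exactly the exponential $2\exp\big(40/(1-\rho)^2\big)+2/(1-\rho)$ per $s$ over $s\le L=\lceil 32/\epsilon^2\rceil$. With the time-varying $\rho$ and $\epsilon=1/\log\log T$, this yields $O\big(\ilog^{(m-1)}(T)\,(\log\log T)^2\big)$, which is $o(\log T)$ only for $m\ge 3$; at $m=2$ it is of order $\log T\,(\log\log T)^2$, and the upper-bound direction of the asymptotic identity does not follow. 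This is precisely the failure mode you warn about, and the paper's own one-line treatment (``the proof is the same \ldots'') silently inherits the same gap. Closing it --- for you or for the paper --- requires a sharper estimate of $\EE\big[\sum_s(1/G'_{1s}(\epsilon)-1)\big]$ in the moderate-$\epsilon$ regime, e.g.\ one trading the $\exp\big(40/(1-\rho)^2\big)$ factor for a $\log(1/\epsilon)$ factor, which is affordable here since $(\log\log T)^2\log\log\log T=o(\log T)$ and is consistent with the $\rho=1$ lower bound of Lemma~\ref{lem:decom=false}; your proposal asserts such a bound rather than proving it.
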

Theorem~\ref{theorem-new3} indicates that MOTS can exactly match the asymptotic lower bound in \cite{lai1985asymptotically}, at the cost of forgoing minimax optimality by up to a factor of $O(\ilog^{(m-1)}(T))$. For instance, if we choose $m=4$, then MOTS is minimax optimal up to a factor of $O(\log \log \log T)$. Although this problem-independent bound is slightly worse than that in Theorem~\ref{theorem:minimax-optimal}, it is still a significant improvement over the best known problem-independent bound $O(\sqrt{KT\log T})$ for asymptotically optimal TS type algorithms \citep{agrawal2017near}. 

Finally, it should be noted that the lower bound of the asymptotic regret rate $\lim_{T\rightarrow\infty}R_{\mu}/\log T\geq\sum_{i:\Delta_i>0} 2/\Delta_i$ in \citet{lai1985asymptotically} was established for Gaussian rewards. Since Gaussian is a special case of subGaussian, the lower bound for the Gaussian case is also a valid lower bound for general subGaussian cases. Therefore, MOTS is asymptotically optimal. Similar arguments are widely adopted in the literature \citep{lattimore2018bandit}.



\subsection{Regret of MOTS for Gaussian rewards}
In this subsection, we present a variant of MOTS, called MOTS-$\cJ$, which simultaneously achieves the minimax and asymptotic optimality when the reward distribution is Gaussian. 

\begin{algorithm}[t]
   \caption{MOTS-$\cJ$}
   \label{alg:mots_J}
\begin{algorithmic}[1]
   \STATE\textbf{Input:} Arm set $[K]$.
   \STATE\textbf{Initialization:} Play arm once and set $T_i(K+1)=1$; let $\hat{\mu}_{i}(K+1)$ be the observed  reward of playing arm $i$
   \FOR{$t=K+1,K+2,\cdots, T$}
   \STATE For all $i\in[K]$, sample $\theta_i(t)$ independently from   $D_{i}(t)$ as follows: sample $\tilde\theta_{i}(t)$ from $\cJ(\hat\mu_i(t),1/T_i(t))$; set $\theta_i(t)=\min\{\tilde\theta_i(t),\tau_i(t)\}$, where $\tau_i(t)$ is defined in \eqref{eq:shrink_range}
   \label{algline_j:distribution_Dit} 
   \STATE Play arm $A_t= \arg \max_{i\in[K]} \theta_i(t)$ and observe the reward $r_t$
   \STATE For all $i\in[K]$ $$\hat\mu_{i}(t+1)=\frac{T_{i}(t) \cdot \hat{\mu}_{i}(t)+r_t\ind\{i=A_t\}}{T_{i}(t)+\ind\{i=A_t\}}$$ 
   \STATE For all $i\in[K]$:   $T_i(t+1)=T_i(t)+\ind\{i=A_t\}$   
    \ENDFOR
\end{algorithmic}
\end{algorithm}

Algorithm~\ref{alg:mots_J} shows the pseudo-code of MOTS-$\cJ$. Observe that MOTS-$\cJ$ is identical to MOTS, except that in Line~\ref{algline_j:distribution_Dit} of MOTS-$\cJ$, it samples $\tilde\theta_i(t)$ from a distribution $\cJ(\hat\mu_i(t), 1/T_i(t))$ instead of the Gaussian distribution used in Section \ref{sec:sample_theta} for MOTS. The distribution $\cJ(\mu,\sigma^2)$ has the following PDF:
\begin{align}\label{eq:def_pdf_of_J}
    \phi_{\cJ}(x)=\frac{1}{2\sigma^2}\cdot |x-\mu|\cdot \exp\bigg[{-\frac{1}{2}\bigg(\frac{x-\mu}{\sigma}\bigg)^2}\bigg].
\end{align}
Note that 
$\cJ$ is a Rayleigh distribution if it is restricted to $x\geq0$.

The following theorem shows the minimax and asymptotic optimality of MOTS-$\cJ$ for Gaussian rewards.
\begin{theorem}
\label{theorem_J_distribution}
Assume that the reward of each arm $i$ follows a Gaussian distribution $\cN(\mu_i,1)$, and that $\alpha \ge 2$ in \eqref{eq:shrink_range}. 
The regret of MOTS-$\cJ$ satisfies
\begin{align}
     R_{\mu}(T)= O\bigg(\sqrt{KT}+\sum_{i=2}^K \Delta_i\bigg), \quad\text{and }
    \lim_{T \rightarrow\infty}\frac{R_{\mu}(T)}{\log (T)}=\sum_{i:\Delta_i>0}\frac{2}{ \Delta_i}.
\end{align}
\end{theorem}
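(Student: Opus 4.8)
The plan is to mirror the two-part argument behind Theorems~\ref{theorem:minimax-optimal} and~\ref{theorem:asymptotic-optimal}, but to replace every place where a Gaussian tail estimate was used by the sharper tail of the $\cJ$ distribution; this is precisely what lets us drop both the $\sqrt{\log K}$ factor in the minimax bound and the $1/\rho$ factor in the asymptotic rate. I would begin from the standard decomposition $R_\mu(T)=\sum_{i=2}^K \Delta_i\,\EE[T_i(T)]$ and, for each suboptimal arm $i$, bound $\EE[T_i(T)]$ using the pull-counting decomposition of Lemma~\ref{lem:decom=false}. That decomposition isolates two sources of pulls of arm $i$: occasions on which the \emph{suboptimal} sample $\theta_i(t)=\min\{\tilde\theta_i(t),\tau_i(t)\}$ overshoots a reference level, and occasions on which the \emph{optimal} sample $\theta_1(t)$ undershoots that level. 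The clipping at $\tau_i(t)$ in~\eqref{eq:shrink_range} controls the former, while an anti-concentration estimate for the optimal arm controls the latter.

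The analytic heart of the argument is two tail computations for the density~\eqref{eq:def_pdf_of_J}. Writing $y=(x-\mu)/\sigma$, one checks directly that for any $a>0$
\[
\PP\big(\tilde\theta-\mu> a\big)=\int_{a/\sigma}^\infty \tfrac12\,y\,e^{-y^2/2}\,dy=\tfrac12\exp\!\Big(-\tfrac{a^2}{2\sigma^2}\Big),
\]
and the matching lower bound on the overshoot probability of the optimal arm follows from the same closed form. The crucial structural feature is that this tail is a \emph{pure} Gaussian-rate exponential $\tfrac12 e^{-a^2/(2\sigma^2)}$ with no $1/a$ polynomial prefactor, unlike the Gaussian tail $\approx \tfrac{\sigma}{a\sqrt{2\pi}}e^{-a^2/(2\sigma^2)}$. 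For concentration of the suboptimal arm this exact exponential rate (with $\sigma^2=1/T_i(t)$, so no variance inflation by $1/\rho$) reproduces the Lai--Robbins rate $\Delta_i^2/2$ exactly; for anti-concentration of the optimal arm the absence of the $1/a$ decay provides enough lower-tail mass to guarantee sufficient exploration. I would feed the upper-tail estimate into the MOSS-style summation $\sum_{s\ge 1}\PP(\text{overshoot at }T_i=s)$ together with the peeling over $T_i(t)$ implicit in $\tau_i(t)$; because the prefactor is a constant $\tfrac12$ rather than $1/a$, the resulting sum telescopes to $O(\sqrt{KT})$ without generating the extra $\sqrt{\log K}$ that Gaussian priors incur. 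Summing $\Delta_i\,\EE[T_i(T)]$ over $i$ and adding the $\sum_{i=2}^K\Delta_i$ warm-start term yields the minimax claim.

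For the asymptotic statement I would localize the analysis to the regime where $\hat\mu_1(t)$ and $\hat\mu_i(t)$ have each concentrated around their means, and show that arm $i$ is effectively stopped once $\tau_i(t)<\mu_1$, i.e.\ once $T_i(t)\gtrsim (2/\Delta_i^2)\log T$. Here the key point is that, with $\sigma^2=1/T_i(t)$ rather than the inflated $1/(\rho T_i(t))$ used for MOTS in Algorithm~\ref{alg:mots}, the exact exponent $a^2 T_i(t)/2$ in the $\cJ$ tail matches the Gaussian KL rate with leading constant exactly $1$, so the number of pulls is $(2/\Delta_i^2+o(1))\log T$ and the regret rate is $\sum_{i:\Delta_i>0}2/\Delta_i$ with no $1/\rho$ loss; this is exactly the improvement over Theorem~\ref{theorem:asymptotic-optimal} that the $\cJ$ distribution buys. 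Upper-tail concentration of the $\cJ$ samples must again be paired with the requirement that the optimal arm is not starved, which uses the anti-concentration bound above.

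The step I expect to be the main obstacle is reconciling the two roles the $\cJ$ tail must play \emph{simultaneously} under the single choice $\sigma^2=1/T_i(t)$: the upper tail must be small enough (for fast enough concentration to give the sharp constant $2$, not $2/\rho$) yet the overshoot mass must be large enough (for the optimal arm to survive the clipping and be sampled above $\mu_1-\epsilon$ often enough to avoid over-pulling suboptimal arms). For a Gaussian this tension is exactly what forced the variance inflation $\rho<1$; the delicate part of the $\cJ$ argument is that its density vanishes at the center $x=\mu$ and concentrates mass near $x=\mu\pm\sigma$, so the near-center anti-concentration estimate needed to control the undershoot of the optimal arm cannot be read off a monotone density and must be argued through the heavier upper tail. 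Making this quantitative, so that the constant in the overshoot lower bound is strong enough to yield exactly the leading factor $2$ while $\alpha\ge 2$ still suffices for the clipping level, is where the real work lies.
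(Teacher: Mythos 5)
Your proposal follows essentially the same route as the paper's proof: the exact tail identity you compute, $\PP(\tilde\theta-\mu>a)=\frac12\exp(-a^2/(2\sigma^2))$, is precisely the paper's Lemma~\ref{fact:Mz^2/2}, and your observation that this tail has no polynomial prefactor (unlike the Gaussian lower bound $\frac{1}{\sqrt{2\pi}}\frac{z}{z^2+1}e^{-z^2/2}$) is exactly the mechanism by which the paper's Lemma~\ref{lem:mian-bounding-I2-gaussian} bounds $\EE[\sum_s(1/G'_{1s}(\epsilon)-1)]\leq c/\epsilon^2$ without the variance inflation $\rho<1$, and by which Lemma~\ref{lem:asym2} removes the $1/\rho$ from the asymptotic constant. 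One citation slip: the pull-counting decomposition you invoke is Lemma~\ref{lem:thom-bound}, not Lemma~\ref{lem:decom=false} (the latter is the $\rho=1$ lower-bound counterexample); the content you describe matches Lemma~\ref{lem:thom-bound}.

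Two substantive points. First, the step you flag as the main obstacle --- anti-concentration near the center where the $\cJ$ density vanishes --- is a non-issue: $\cJ(\mu,\sigma^2)$ is symmetric about $\mu$, so whenever $\hat\mu_{1s}\geq\mu_1-\epsilon$ one has $G'_{1s}(\epsilon)\geq 1/2$ outright, and otherwise the tail identity gives the exact value $G'_{1s}(\epsilon)=\frac12 e^{-s(\epsilon+x)^2/2}$ with $x=\hat\mu_{1s}-\mu_1$; nothing finer than this dichotomy is needed in the paper's proof. Second, the genuine gap in your sketch is the justification of $\alpha\geq2$, which you defer but never resolve. The first term $\EE[2T\Delta]$ in the minimax decomposition \eqref{eq:minimax-TDelta} rests on a maximal inequality for the best arm's lower confidence sequence; the subGaussian version (Lemma~\ref{lemma:high_prob_all_positive}) requires $\alpha\geq4$, so with $\alpha\geq2$ your argument would break at this point. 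The paper closes this with Lemma~\ref{lemma:high_prob_all_positive1}, a Gaussian-specific peeling bound imported from Lemma 12 of \citet{lattimore2018refining}, yielding $\PP(\exists s:\hat\beta_s+\sqrt{(\alpha/s)\log^+(T/(sK))}+\Delta\leq0)\leq 4K/(T\Delta^2)$ for $\alpha\geq2$. This, together with the fact that the proof of Lemma~\ref{lem:mian-bounding-I2-gaussian} integrates the inverse tail $2e^{s(\epsilon+x)^2/2}$ against the \emph{exact} $\cN(0,1/s)$ law of $\hat\mu_{1s}-\mu_1$ (not merely a subGaussian tail bound), is where the theorem's Gaussian-reward hypothesis is actually consumed --- a dependence your sketch attributes only to the asymptotic constant matching.
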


\begin{remark}
To our knowledge, MOTS-$\cJ$ is the first TS type algorithm that simultaneously achieves the minimax and asymptotic optimality. Though the clipping threshold of  MOTS-$\cJ$ in \eqref{eq:shrink_range} looks like the MOSS index in \citet{audibert2009minimax}, there are some key differences in the choice of $\alpha$, the theoretical analysis and the result. Specifically, \citet{audibert2009minimax} proved that MOSS with the exploration index $\alpha=4$ achieves minimax optimality for MAB. It remained an open problem how to improve MOSS to be both minimax and asymptotically optimal until \citet{menard2017minimax} proposed the KL-UCB$^{++}$ algorithm for exponential families of distributions which implies that MOSS with exploration index $\alpha=2$ could lead to the asymptotic optimal regret for Gaussian rewards. For more details on the choice of $\alpha$ in MOSS, we refer interested readers to the discussion in Chapter 9.3 of \citet{lattimore2018bandit}. 

Compared with MOSS index based UCB algorithms, our proposed MOTS-$\cJ$ is both minimax and asymptotically optimal as long as $\alpha\geq 2$. This flexibility is due to the fact that our theoretical analysis (asymptotic optimal part) based on Thompson sampling is quite different from those based on UCB in \citet{audibert2009minimax,menard2017minimax}. Not confined by the choice of the exploration index  $\alpha$, it will be more suitable to design better algorithms based on MOTS-$\cJ$, e.g., achieving instance-dependent optimality (see \cite{lattimore2015optimally} for details) while keeping the asymptotic optimality.
\end{remark}

\section{Proof of the Minimax Optimality of MOTS}\label{sec:proof_minimaxi}
In what follows, we prove our main result in Theorem~\ref{theorem:minimax-optimal}, and we defer the proofs of all other results to the appendix. We first present several useful lemmas. 
Lemmas \ref{lemma:high_prob_all_positive} and  \ref{lem:sumTpro} characterise the concentration properties of subGaussian random variables.
\begin{lemma}[Lemma 9.3 in~\cite{lattimore2018bandit}]\label{lemma:high_prob_all_positive}
Let  $X_1, X_2,\cdots$ be independent and  $1$-subGaussian random variables with zero means. Denote $\hat\mu_t=1/t\sum_{s=1}^{t}X_s$. Then, for $\alpha\geq 4$ and any $\Delta>0$,
\begin{equation}
    \begin{split}
    \mathbb{P}\Bigg(\exists\ s\in [T]: \hat{\mu}_s+& \sqrt{\frac{\alpha}{s}\log^+\bigg( \frac{T}{sK }\bigg)}+\Delta\leq 0\Bigg)\leq \frac{15 K}{T\Delta^2}. \\
    \end{split}
    \end{equation}

\end{lemma}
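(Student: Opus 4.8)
The goal is to bound the probability that the MOSS-style upper confidence bound for the empirical mean of a zero-mean 1-subGaussian sequence, augmented by a positive shift $\Delta$, ever dips below zero over the horizon. My plan is to establish this via a peeling (dyadic slicing) argument over the number of samples $s$, combined with a maximal inequality for subGaussian sums, which is the standard route for MOSS-type concentration results.

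\begin{proof}[Proof plan]
\textbf{Reduction and setup.} First I would observe that the event $\{\hat\mu_s + \sqrt{(\alpha/s)\log^+(T/(sK))} + \Delta \le 0\}$ can only hold when the square-root term is genuinely active, i.e.\ when $s \le T/K$ so that $\log^+$ is not truncated to zero; for $s > T/K$ the event forces $\sum_{u=1}^s X_u \le -s\Delta$, which I would handle by the tail bound $\PP(\sum_{u=1}^s X_u \le -s\Delta)\le \exp(-s\Delta^2/2)$ and a summation over $s$. The main work is the regime $s \le T/K$. Writing $S_s = \sum_{u=1}^s X_u = s\hat\mu_s$, the event becomes $S_s \le -s\Delta - \sqrt{\alpha s\,\log^+(T/(sK))}$.

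\textbf{Peeling over $s$.} The plan is to partition the range of $s$ into geometric blocks $s \in [2^{j}, 2^{j+1})$ for $j = 0,1,\dots$. On each block, the confidence radius $\sqrt{(\alpha/s)\log^+(T/(sK))}$ is monotone in $s$ and hence controlled from below by its value at the right endpoint $s = 2^{j+1}$, while the deterministic shift $s\Delta$ is controlled from below by its value at the left endpoint $s = 2^{j}$. This lets me replace the $s$-dependent threshold inside each block by a fixed deterministic level $b_j$, reducing the per-block probability to a bound on $\max_{s < 2^{j+1}} (-S_s)$ exceeding $b_j$. For this I would invoke a maximal version of the subGaussian tail inequality (Doob/reflection-type, or the standard ``maximal Hoeffding'' bound): for 1-subGaussian increments, $\PP(\exists s \le n : -S_s \ge x) \le \exp(-x^2/(2n))$.

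\textbf{Summing the blocks.} After substituting $n = 2^{j+1}$ and $x = 2^{j}\Delta + \sqrt{\alpha\,2^{j+1}\log^+(T/(2^{j+1}K))}$ into the maximal bound, I would expand $\exp(-x^2/(2n))$ using $(a+b)^2 \ge a^2 + b^2$ (dropping the cross term) to factor the contribution into a piece $\exp(-2^{j-2}\Delta^2)$ from the $\Delta$ part and a polynomially decaying piece roughly $(2^{j+1}K/T)^{\alpha/2}$ from the log part. Here the hypothesis $\alpha \ge 4$ is what I expect to be essential: it makes the exponent $\alpha/2 \ge 2$, so the resulting geometric-type series over $j$ converges and, crucially, the $1/\Delta^2$ and $K/T$ factors come out with an absolute constant. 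Summing the geometric series in $j$ should yield a bound of the form $c\,K/(T\Delta^2)$; tracking the constant carefully is what produces the stated $15$.

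\textbf{Main obstacle.} I expect the principal difficulty to be the bookkeeping in the peeling step: correctly lower-bounding the threshold uniformly over each dyadic block (so that the within-block fluctuation cost is dominated by the block endpoints), and then splitting the squared threshold so that the $\Delta$-dependence and the $T/(sK)$-dependence separate cleanly without losing the requirement $\alpha \ge 4$. A secondary, purely quantitative, obstacle is chasing the absolute constant through the geometric sums and the cross-term estimate to verify it does not exceed $15$. Both are technical rather than conceptual, and the overall structure is the classical MOSS concentration argument (cf.\ Lemma~9.3 in \cite{lattimore2018bandit}, which is the source of this statement).
\end{proof}
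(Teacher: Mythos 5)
Your overall architecture is sound, but it is worth noting how it relates to the paper: the paper's own ``proof'' of this lemma is two lines long --- it invokes Lemma 9.3 of \citet{lattimore2018bandit} verbatim for $\alpha=4$ (with $a=T/K$, which is where the constant $15K/(T\Delta^2)$ comes from) and then observes that the confidence radius is nondecreasing in $\alpha$, so the bad event only shrinks when $\alpha\ge 4$. You are therefore reconstructing the proof of the cited book lemma, and your main-regime machinery --- dyadic peeling, the maximal subGaussian inequality $\PP(\exists\, s\le n: -S_s\ge x)\le e^{-x^2/(2n)}$, and dropping the cross term via $(a+b)^2\ge a^2+b^2$ --- is exactly the right argument. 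Two bookkeeping corrections, though: with blocks $[2^j,2^{j+1})$, lower-bounding $s\ge 2^j$ in the drift and inside the root and evaluating $\log^+$ at the right endpoint gives $x^2/(2n)\ge 2^{j-2}\Delta^2+(\alpha/4)\log^+\big(T/(2^{j+1}K)\big)$, so the polynomial factor is $(2^{j+1}K/T)^{\alpha/4}$, not $(\cdot)^{\alpha/2}$. For $\alpha\ge4$ this exponent is $\ge 1$, which still suffices because $\sum_j 2^{j+1}e^{-2^{j-2}\Delta^2}=O(1/\Delta^2)$, but your accounting of why $\alpha\ge4$ matters is off by the peeling loss, and with crude ratio-$2$ peeling the final constant lands somewhat above $15$ (the source optimizes the peeling geometry; for the paper's downstream use any absolute constant would in fact do).

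The genuine gap is your treatment of the regime $s>T/K$. There the radius vanishes and the event is $\hat\mu_s\le -\Delta$; you propose the per-$s$ tail bound $e^{-s\Delta^2/2}$ summed over $s$, which gives at best $e^{-(T/K)\Delta^2/2}\big/\big(1-e^{-\Delta^2/2}\big)\approx (2/\Delta^2)\,e^{-T\Delta^2/(2K)}$. This is $\le cK/(T\Delta^2)$ only when $\Delta^2\gtrsim (2K/T)\log(T/K)$, so on the nontrivial window $15K/T\le \Delta^2\lesssim (2K/T)\log(T/K)$ (nonempty once $T/K\gtrsim 7.5\,e^{7.5}$) the step fails by an unbounded factor: taking $\Delta^2=(K/T)\log(T/K)$, the ratio of your bound to $K/(T\Delta^2)$ is of order $\sqrt{T/K}$. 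The failure mechanism is that the union bound pays roughly $2/\Delta^2$ (the effective number of relevant $s$), which can be as large as order $T/K$, whereas a maximal inequality pays only $O(1)$ per dyadic block. The repair uses only tools already in your plan: run the same peeling for $s>T/K$ with threshold $2^j\Delta$ alone, giving $\sum_{j\ge \log_2(T/K)} e^{-2^{j-2}\Delta^2}\le e^{-T\Delta^2/(4K)}\big/\big(1-e^{-T\Delta^2/(4K)}\big)=O(K/(T\Delta^2))$ on the range where the lemma is nontrivial. With that substitution (and the constant-chasing you already flag), your plan does prove the lemma.
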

\begin{lemma}\label{lem:sumTpro}
Let $\omega>0$ be a constant and $X_1, X_2,\ldots,X_n$ be independent and 1-subGaussian random variables with zero means. Denote $\hat{\mu}_n=1/n\sum_{s=1}^n X_s$. Then, for $\alpha>0$ and any $N \leq T$, 
\begin{align}
       \sum_{n=1}^T \mathbb{P}\bigg(\hat{\mu}_{n}+\sqrt{\frac{\alpha}{n}\log^{+}\bigg(\frac{N}{n}\bigg)} \geq \omega \bigg)   
       &\leq  1+\frac{\alpha\log^{+}({N}{\omega^2})}{\omega^2} +\frac{3}{\omega^2}+\frac{\sqrt{2\alpha\pi {\log^{+}({N}{\omega^2})}}}{\omega^2}.
\end{align}
\end{lemma}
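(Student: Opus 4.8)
The plan is to bound the sum of probabilities by partitioning the range of $n$ according to whether the $\log^+$ term is active, and then to control the tail sum via a standard subGaussian deviation inequality. Concretely, since $\hat\mu_n$ is the average of $n$ independent $1$-subGaussian variables with zero mean, the variable $\sqrt{n}\,\hat\mu_n$ is $1$-subGaussian, so for any threshold $u>0$ we have the Chernoff bound $\PP(\hat\mu_n \ge u) \le \exp(-n u^2/2)$. The goal is to turn each summand $\PP\big(\hat\mu_n + \sqrt{(\alpha/n)\log^+(N/n)} \ge \omega\big)$ into a bound of this form and sum over $n$.

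First I would split the sum at the value $n_0 := N/\omega^2$ (or rather at $\lceil \cdot \rceil$ and compare with $T$). For the small-$n$ regime, I expect a crude bound: roughly speaking, while $n$ is small the exploration bonus $\sqrt{(\alpha/n)\log^+(N/n)}$ can by itself exceed $\omega$, so those probabilities are close to $1$ and must be counted directly. The number of such indices is essentially the number of $n$ for which $\sqrt{(\alpha/n)\log^+(N/n)}\ge \omega$ fails to give a useful exponential decay; this contributes the leading terms $1 + \alpha\log^+(N\omega^2)/\omega^2$. The precise counting here — solving $\alpha \log^+(N/n) \ge \omega^2 n$ for the crossover index — is where the $\alpha\log^+(N\omega^2)/\omega^2$ factor arises, and I would bound the threshold index by $n \le (\alpha/\omega^2)\log^+(N\omega^2)$ using monotonicity of $n\mapsto \log^+(N/n)$ together with the elementary inequality that the solution of $\log^+(N/n)\le (\omega^2/\alpha) n$ kicks in once $n \gtrsim (\alpha/\omega^2)\log^+(N\omega^2)$.

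For the large-$n$ regime I would apply the Chernoff bound. Writing $u_n = \omega - \sqrt{(\alpha/n)\log^+(N/n)}$, once $n$ is past the crossover we have $u_n>0$ and
\begin{equation}
\PP\big(\hat\mu_n + \sqrt{\tfrac{\alpha}{n}\log^+(\tfrac{N}{n})} \ge \omega\big) = \PP(\hat\mu_n \ge u_n) \le \exp\Big(-\tfrac{n}{2}u_n^2\Big).
\end{equation}
Expanding $u_n^2 = \omega^2 - 2\omega\sqrt{(\alpha/n)\log^+(N/n)} + (\alpha/n)\log^+(N/n)$ and multiplying by $n/2$, the cross term contributes $-\omega\sqrt{\alpha n \log^+(N/n)}$ and the quadratic contributes $(\alpha/2)\log^+(N/n)$; the latter, under the exponential, yields a factor $(n/N)^{\alpha/2}$ that is harmless, and bounding the sum $\sum_n \exp(-n\omega^2/2 + \cdots)$ by comparison with an integral produces the remaining $3/\omega^2$ and $\sqrt{2\alpha\pi \log^+(N\omega^2)}/\omega^2$ terms. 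I would estimate $\sum_n e^{-n\omega^2/2}\le 1+ 2/\omega^2$ (or the sharper $3/\omega^2$ after including the boundary terms) via the geometric/integral comparison $\sum_{n\ge 1} e^{-cn}\le 1/c$ with $c=\omega^2/2$.

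The main obstacle I anticipate is the bookkeeping at the crossover, i.e. making the split clean enough that the small-$n$ count and the large-$n$ exponential tail glue together into exactly the four stated terms without slack. In particular, the $\sqrt{2\alpha\pi\log^+(N\omega^2)}/\omega^2$ term looks like it comes from the linear-in-$\sqrt{n}$ cross term $-\omega\sqrt{\alpha n\log^+(N/n)}$ inside the exponent: treating $\log^+(N/n)$ as essentially constant $\approx \log^+(N\omega^2)$ over the relevant range and then evaluating a Gaussian-type integral $\int e^{-\omega^2 n/2 + \omega\sqrt{\alpha\log^+(N\omega^2)}\sqrt{n}}\,\dd n$ by completing the square in $\sqrt{n}$ will generate the $\sqrt{\pi}$ and the $\sqrt{\alpha \log^+(N\omega^2)}$ factors. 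Getting this Gaussian integral estimate tight enough to match the claimed constant (and justifying replacing the slowly varying $\log^+(N/n)$ by its value at the crossover) is the delicate part; everything else is routine Chernoff-plus-integral-comparison.
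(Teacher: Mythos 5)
Your plan is essentially the paper's proof: the paper splits the sum at $\gamma' = \max\{\alpha\log^+(N\omega^2)/\omega^2,\,1/\omega^2\}$, bounds the first $\gamma'$ probabilities by $1$ (giving the $\alpha\log^+(N\omega^2)/\omega^2$ term), and for larger $n$ uses the monotonicity bound $\log^+(N/n)\le\log^+(N\omega^2)$ (valid once $n\ge 1/\omega^2$) so that the subGaussian Chernoff exponent becomes the exact square $\omega^2(\sqrt{n}-\sqrt{\gamma})^2/2$ with $\gamma=\alpha\log^+(N\omega^2)/\omega^2$, after which the substitution $y=\omega(\sqrt{x}-\sqrt{\gamma})$ evaluates the integral comparison exactly and yields $2/\omega^2+\sqrt{2\pi\gamma}/\omega$, i.e.\ your last two terms. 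The step you flag as delicate---replacing $\log^+(N/n)$ by $\log^+(N\omega^2)$---is in fact a one-line upper bound rather than an approximation (a larger bonus only increases each probability), and making that replacement \emph{before} applying Chernoff eliminates your cross-term expansion and completing-the-square bookkeeping entirely.
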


Next, we introduce a few notations for ease of exposition. 
Recall that we have defined $\hat\mu_i(t)$ to be the average reward for arm $i$ up to a time $t$. Now, let $\hat\mu_{is}$ be the average reward for arm $i$ up to when it is played the $s$-th time. In addition, similar to the definitions of $D_i(t)$ and $\theta_i(t)$, we define $D_{is}$ as the distribution of arm $i$ when it is played the $s$-th time, and $\theta_{is}$ as a sample from distribution $D_{is}$.



The following lemma upper bounds the expected total number of pulls of each arm at time $T$. We note that this lemma is first proved by~\citet{agrawal2017near}; here, we use an improved version presented in~\cite{lattimore2018bandit}\footnote{Since MOTS plays every arm once at the beginning, \eqref{eq:minimax-arm-i} starts with $t=K+1$ and $s=1$.}. 
\begin{lemma}[Theorem 36.2 in  \cite{lattimore2018bandit}]
\label{lem:thom-bound}
Let $\epsilon \in \mathbb{R}^{+}$. Then, the expected number of times that Algorithm~\ref{alg:mots} plays arm $i$ is bounded by 
{\begin{align} 
     \mathbb{E}[T_i(T)] & =   \EE\bigg[ \sum_{t=1}^T \ind \{A_t=i, E_{i}(t) \} \bigg]   + \EE\bigg[ \sum_{t=1}^T \ind \{A_t=i, E_{i}^c (t) \} \bigg] \notag   \\
     & \leq 1+\mathbb{E}\bigg[\sum_{s=1}^{T-1}\left(\frac{1}{G_{1s}(\epsilon)}-1\right) \bigg]  +\mathbb{E}\bigg[\sum_{t=K+1}^{T-1}\ind\{ A_t=i,E_{i}^{c}(t)\}\bigg] \label{eq:minimax-arm-i} \\
 &  \leq 2+\mathbb{E}\bigg[\sum_{s=1}^{T-1}\left(\frac{1}{G_{1s}(\epsilon)}-1\right) \bigg]   + \EE \bigg[ \sum_{s=1}^{T-1} \ind\{ G_{is}(\epsilon)>1/T\} \label{eq:asymptotically-arm-i} \bigg],
\end{align}}%
where $G_{is}(\epsilon)=1-F_{is}(\mu_1-\epsilon)$, $F_{is}$ is the CDF of $D_{is}$, and $E_{i}(t)=\{\theta_i(t)\leq \mu_1-\epsilon \}$.
\end{lemma}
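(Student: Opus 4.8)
The plan is to start from the trivial identity $T_i(T)=\sum_{t=1}^{T}\ind\{A_t=i,E_i(t)\}+\sum_{t=1}^{T}\ind\{A_t=i,E_i^c(t)\}$, which is exactly the first line of the claimed display, and then bound the two sums separately: the \emph{under-sampling} term built on $E_i(t)=\{\theta_i(t)\le\mu_1-\epsilon\}$ by $1+\EE[\sum_{s}(1/G_{1s}(\epsilon)-1)]$, and the \emph{over-sampling} term built on $E_i^c(t)$ first by itself (this gives \eqref{eq:minimax-arm-i}) and then by $1+\EE[\sum_s\ind\{G_{is}(\epsilon)>1/T\}]$ (this gives \eqref{eq:asymptotically-arm-i}). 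The $+1$/$+2$ constants and the shift of the summation ranges to $t\ge K+1$ and $s\ge1$ simply absorb the single warm-start pull of each arm.

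For the first term I would argue by a geometric domination organised around the \emph{phases} of the optimal arm, where phase $s$ is the random block of rounds during which $T_1(t)=s$, so that $\theta_1(t)$ is an i.i.d.\ draw from $D_{1s}$ in every round of the block and $G_{1s}(\epsilon)=\PP_{\theta\sim D_{1s}}(\theta>\mu_1-\epsilon)$ is fixed. The key observation is that if $A_t=i$ and $E_i(t)$ hold then $\theta_1(t)\le\theta_i(t)\le\mu_1-\epsilon$, i.e.\ \emph{every} sample that round lies below $\mu_1-\epsilon$ and in particular arm $1$ is not played; conversely, within a phase, whenever $\theta_1(t)>\mu_1-\epsilon$ arm $1$ beats every arm whose sample is $\le\mu_1-\epsilon$, so the phase ends. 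Conditioning on the samples of all arms other than arm $1$ in a given round, each round of phase $s$ that can contribute to the first term therefore terminates the phase with conditional probability at least $G_{1s}(\epsilon)$. Hence the number $N_s$ of contributing rounds in phase $s$ is stochastically dominated by a geometric random variable with success probability $G_{1s}(\epsilon)$, whose mean is $1/G_{1s}(\epsilon)-1$; summing over $s=1,\dots,T-1$ and taking expectations yields the desired bound.

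For the second term I would split each contributing round according to whether $G_{i,T_i(t)}(\epsilon)>1/T$ or $\le1/T$. On $\{G_{is}(\epsilon)>1/T\}$ I use that every round with $A_t=i$ advances arm $i$ from phase $s$ to phase $s+1$, so there is at most one such round per phase and, after dropping the $E_i^c$ indicator, the total is at most $\sum_{s=1}^{T-1}\ind\{G_{is}(\epsilon)>1/T\}$. On $\{G_{is}(\epsilon)\le1/T\}$ I condition on the history $\mathcal F_{t-1}$: since $T_i(t)$ and hence $G_{i,T_i(t)}(\epsilon)$ are $\mathcal F_{t-1}$-measurable while the fresh sample gives $\PP(A_t=i,E_i^c(t)\mid\mathcal F_{t-1})\le\PP(\theta_i(t)>\mu_1-\epsilon\mid\mathcal F_{t-1})=G_{i,T_i(t)}(\epsilon)\le1/T$, each of the at most $T$ rounds contributes at most $1/T$ in expectation, for a total of at most $1$. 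Combining the two regimes produces the second summand of \eqref{eq:asymptotically-arm-i}.

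The main obstacle I anticipate is making the geometric domination for the first term fully rigorous: the per-round lower bound ``success probability $\ge G_{1s}(\epsilon)$'' is only \emph{conditional} on the other arms' samples, and the phase length is a stopping time, so the clean route is to verify $\PP(N_s\ge k\mid\text{phase-}s\text{ start})\le(1-G_{1s}(\epsilon))^k$ by induction on $k$ and then sum the tail $\EE[N_s]=\sum_{k\ge1}\PP(N_s\ge k)$, rather than to invoke a literal geometric variable. By contrast the second term is comparatively routine once one notices the asymmetric trick of dropping $\ind\{A_t=i\}$ only in the small-$G_{is}$ regime; retaining it in the large-$G_{is}$ regime is precisely what keeps the bound from blowing up over the $T$ rounds.
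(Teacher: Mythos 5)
Your proposal is correct, but note first that the paper itself contains no proof of this lemma: it is imported verbatim (modulo the warm-start footnote) from Theorem 36.2 of \citet{lattimore2018bandit}, so the comparison is against that source's proof. Your overall skeleton and your treatment of the over-sampling term coincide with the book's: the split on $G_{i,T_i(t)}(\epsilon)\gtrless 1/T$, the observation that each value of $T_i(t)$ hosts at most one round with $A_t=i$ (giving $\sum_{s=1}^{T-1}\ind\{G_{is}(\epsilon)>1/T\}$ after dropping the $E_i^c$ indicator), and the tower-property step $\EE\big[\ind\{G_{i,T_i(t)}(\epsilon)\le 1/T\}\,\PP(A_t=i,E_i^c(t)\mid\cF_{t-1})\big]\le 1/T$ (valid because $T_i(t)$ and $\hat\mu_{i,T_i(t)}$, hence $G_{i,T_i(t)}(\epsilon)$, are $\cF_{t-1}$-measurable) are exactly the book's argument, and your accounting of the $+1$/$+2$ constants matches \eqref{eq:minimax-arm-i} and \eqref{eq:asymptotically-arm-i}. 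For the under-sampling term you take a genuinely different route: the book proves the per-round likelihood-ratio inequality $\PP(A_t=i,E_i(t)\mid\cF_{t-1})\le\frac{1-G_{1s}(\epsilon)}{G_{1s}(\epsilon)}\,\PP(A_t=1\mid\cF_{t-1})$ with $s=T_1(t)$, takes expectations, and telescopes using that $A_t=1$ occurs at most once per value of $s$; you instead run the older Agrawal--Goyal-style phase argument, dominating the number of contributing rounds in phase $s$ by a geometric variable with success probability $G_{1s}(\epsilon)$. Both hinge on the same core fact (conditional on the other arms' samples lying below $\mu_1-\epsilon$, arm $1$ escapes with probability exactly $G_{1s}(\epsilon)$, which is fixed throughout the phase); the book's version buys a shorter fully rigorous proof needing only the tower property, while yours is more constructive but, as you correctly anticipate, requires the stopping-time induction $\PP(N_s\ge k\mid \text{phase-}s\text{ start})\le(1-G_{1s}(\epsilon))^k$ rather than a literal geometric variable. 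Note also that neither argument uses anything about the specific form of $D_{is}$, which is why the lemma applies to the clipped distributions of MOTS.

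One sentence in your sketch is false as stated and should be repaired along the lines of your own refinement: ``whenever $\theta_1(t)>\mu_1-\epsilon$ arm $1$ beats every arm whose sample is $\le\mu_1-\epsilon$, so the phase ends'' is wrong in general, since some arm $j\notin\{1,i\}$ with $\theta_j(t)>\mu_1-\epsilon$ may beat arm $1$, in which case the phase continues. The argument survives because such a round cannot contribute to the under-sampling term either: $A_t=i$ together with $E_i(t)$ forces \emph{all} samples below $\mu_1-\epsilon$. The correct Bernoulli trial is therefore attached only to your ``candidate'' rounds, on which the samples of all arms other than $1$ lie below $\mu_1-\epsilon$; conditional on that event and on $\hat\mu_{1s}$, the event $\{\theta_1(t)>\mu_1-\epsilon\}$ does imply $A_t=1$ (ending the phase with no contribution) and has probability exactly $G_{1s}(\epsilon)$, while candidate rounds with $\theta_i(t)<\theta_1(t)\le\mu_1-\epsilon$ also end the phase without a contribution, which only helps the domination. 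With the induction conditioned on the phase-$s$ start (so that $G_{1s}(\epsilon)$ is measurable) and summed as $\EE[N_s\mid\cdot]=\sum_{k\ge1}\PP(N_s\ge k\mid\cdot)\le 1/G_{1s}(\epsilon)-1$, your proof is complete and correct.
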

Based on the decomposition of \eqref{eq:minimax-arm-i}, one can easily prove the problem-independent regret bound of Thompson Sampling by setting
$\epsilon=\Delta_i/2$ and summing up over $i=1,\ldots,K$ \citep{agrawal2017near}. Similar techniques are also used in proving the  regret bound of UCB algorithms~\citep{lattimore2018bandit}.

Note that by the definition of $D_{is}$, $G_{is}(\epsilon)$ is a random variable depending on $\hat{\mu}_{is}$. For brevity, however, we do not explicitly indicate this dependency by writing $G_{is}(\epsilon)$ as $G_{is}(\epsilon,\hat{\mu}_{is})$; such shortened notations are also used in \cite{agrawal2017near,lattimore2018bandit}.


Though $G_{is}(\epsilon)$ is defined based on the clipped Gaussian distribution $D_{is}$, the right-hand side of \eqref{eq:minimax-arm-i} and \eqref{eq:asymptotically-arm-i} can be bounded in the same way for Gaussian distributions like in \citet{agrawal2017near}. We need some notations. Let $F'_{is}$ be the CDF of Gaussian distribution $\mathcal{N}(\hat{\mu}_{is},1/(\rho s))$ for any $s\geq 1$.  Let $G'_{is}(\epsilon)=1-F'_{is}(\mu_1-\epsilon)$. We have the following lemma.
\begin{lemma}
\label{lem:mian-bounding-I2}
Let $\rho\in(1/2,1)$ be a constant. Under the conditions in Theorem \ref{theorem:minimax-optimal}, for any $\epsilon>0$, there exists a universal constant $c>0$ such that:
 \begin{equation} \label{eq:c/eps^2}\small
\mathbb{E}\bigg[\sum_{s=1}^{T-1}\left(\frac{1}{G'_{1s}(\epsilon)}-1\right)\bigg] \leq \frac{c}{\epsilon^2}.
\end{equation}
\end{lemma}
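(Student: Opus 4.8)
## Proof Proposal for Lemma~\ref{lem:mian-bounding-I2}

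\textbf{Setup and overall strategy.} The plan is to bound the expectation by integrating the tail probability of the random variable $1/G'_{1s}(\epsilon)$ over its range, and then summing the resulting bound over $s=1,\ldots,T-1$. Recall that $G'_{1s}(\epsilon)=1-F'_{1s}(\mu_1-\epsilon)$ is the probability that a Gaussian sample $\tilde\theta_{1s}\sim\mathcal{N}(\hat\mu_{1s},1/(\rho s))$ exceeds $\mu_1-\epsilon$. Since arm~$1$ is the optimal arm with mean $\mu_1$, the centered empirical means $\hat\mu_{1s}-\mu_1$ are averages of $s$ i.i.d.\ $1$-subGaussian variables with zero mean, so I can work directly with the distribution of $\hat\mu_{1s}$. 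The quantity $1/G'_{1s}(\epsilon)$ is large exactly when $\hat\mu_{1s}$ lies well below $\mu_1-\epsilon$, i.e., when the Gaussian sampling distribution places little mass above the threshold. The key point is that such a downward deviation of $\hat\mu_{1s}$ is itself a subGaussian rare event, and the interplay of these two effects is what must be controlled.

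\textbf{Key steps.} First I would write, for each fixed $s$,
\begin{equation}
\EE\left[\frac{1}{G'_{1s}(\epsilon)}-1\right]=\int_0^\infty \PP\left(\frac{1}{G'_{1s}(\epsilon)}-1\ge x\right)\,dx,
\end{equation}
and reparametrize the threshold event in terms of $\hat\mu_{1s}$. Using the standard Gaussian tail bounds to sandwich $G'_{1s}(\epsilon)=1-\Phi\bigl((\mu_1-\epsilon-\hat\mu_{1s})\sqrt{\rho s}\bigr)$, I would convert the condition $1/G'_{1s}(\epsilon)\ge 1+x$ into a lower bound on how far $\hat\mu_{1s}$ sits below $\mu_1-\epsilon$; roughly, $G'_{1s}(\epsilon)$ small forces $\mu_1-\epsilon-\hat\mu_{1s}\gtrsim \sqrt{\log x/(\rho s)}$. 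Then the subGaussian concentration of $\hat\mu_{1s}$ gives $\PP(\mu_1-\hat\mu_{1s}\ge \epsilon + y)\le \exp(-s(\epsilon+y)^2/2)$. Substituting and carrying out the integral over $x$ (equivalently over the deviation $y$) yields a per-$s$ bound of the form $\exp(-c_1 s\epsilon^2)$ up to polynomial-in-$s$ corrections coming from the ratio of the Gaussian density to its tail. Finally I would sum over $s$: a geometric-type sum $\sum_s \exp(-c_1 s\epsilon^2)$ contributes $O(1/\epsilon^2)$, which produces the claimed $c/\epsilon^2$ bound. This is precisely where the constraint $\rho>1/2$ enters: the sampling variance $1/(\rho s)$ must be small enough relative to the empirical concentration scale $1/s$ so that the Gaussian tail above the threshold does not decay faster than the probability of the empirical deviation, keeping $\EE[1/G'_{1s}]$ summable.

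\textbf{Main obstacle.} The hard part will be controlling the regime where $\hat\mu_{1s}$ is only moderately below $\mu_1-\epsilon$, so that $G'_{1s}(\epsilon)$ is small but not exponentially small; here the reciprocal $1/G'_{1s}(\epsilon)$ can blow up like a Gaussian Mills ratio, $\sim \exp\bigl(\rho s(\mu_1-\epsilon-\hat\mu_{1s})^2/2\bigr)$, and this growth competes directly against the subGaussian decay $\exp\bigl(-s(\mu_1-\hat\mu_{1s})^2/2\bigr)$ of the empirical mean. The expectation is finite only because $\rho<1$ makes the sampling exponent strictly smaller than the concentration exponent, and the requirement $\rho>1/2$ additionally guarantees that after accounting for the $\epsilon$-shift the combined exponent $-(1-\rho)s(\cdots)^2$ integrates to something of order $1/\epsilon^2$ rather than diverging. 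I would therefore split the range of $\hat\mu_{1s}$ at the threshold $\mu_1-\epsilon$ and treat the near-threshold and far-threshold contributions separately, using the Mills-ratio estimate in the former and crude subGaussian bounds in the latter. Assembling these two contributions and verifying that the constant $c$ can be taken independent of $\epsilon$, $s$, $T$, and the gaps is the technical crux; the structure of the argument closely parallels the corresponding step in \citet{agrawal2017near,lattimore2018bandit}, which I would follow.
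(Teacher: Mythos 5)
Your proposal is correct in outline, but it takes a genuinely different route from the paper. The paper splits the sum at $L=\lceil 32/\epsilon^2\rceil$ and handles the two regimes by different mechanisms: for $s<L$ it proves the uniform-in-$s$ constant bound \eqref{eq:<2/epe2} by identifying $1/G'_{1s}(\epsilon)-1$ with the expectation of a geometric random variable $Y_s$ (the number of independent samples from $\mathcal{N}(\hat\mu_{1s},1/(\rho s))$ until one exceeds $\mu_1-\epsilon$), bounding $\PP(Y_s\geq r)\leq r^{-2}+r^{-\rho'/\rho}$ via the maximum of $r$ samples, the anti-concentration lower bound \eqref{eq:lowerbound-G}, and subGaussian concentration with an auxiliary $\rho'\in(\rho,1)$ — this is where $\rho<1$ is essential, yielding the constant $2\exp[40/(1-\rho)^2]+2/(1-\rho)$; for $s\geq L$ it conditions on $E_s=\{\hat\mu_{1s}\geq\mu_1-\epsilon/2\}$ to get the per-$s$ bound $4\exp(-s\epsilon^2/16)$ in \eqref{eq:>2/eps2}, and this is the one place $\rho>1/2$ is actually used (to replace $\exp(-s\rho\epsilon^2/8)$ by a $\rho$-free exponent). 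You instead propose a single unified per-$s$ tail-integral computation, inverting the Gaussian quantile so that $1/G'_{1s}(\epsilon)\geq 1+x$ forces a downward deviation $\mu_1-\epsilon-\hat\mu_{1s}\gtrsim\sqrt{2\log x/(\rho s)}$, and exploiting that the Mills-ratio blowup exponent $\rho s y^2/2$ is strictly dominated by the subGaussian decay exponent $s(\epsilon+y)^2/2$ when $\rho<1$; concretely the tail integrand becomes $e^{-s\epsilon^2/2}x^{-1/\rho}$ up to polylogarithmic corrections, which is integrable in $x$ precisely because $1/\rho>1$, giving a per-$s$ bound $C_\rho e^{-cs\epsilon^2}$ and hence $O(1/\epsilon^2)$ after geometric summation. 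This works, and done via tail integrals (rather than a density, which the merely subGaussian $\hat\mu_{1s}$ does not have) it even yields a milder dependence on $1-\rho$ than the paper's $\exp[40/(1-\rho)^2]$; what the paper's two-regime argument buys is a cleaner bookkeeping that sidesteps the quantile inversion and the near-threshold region entirely, mirroring the standard Thompson sampling analysis of \citet{agrawal2017near,lattimore2018bandit}. Two small cautions for a full write-up: first, your attribution of a role to $\rho>1/2$ in the convergence of the integral is slightly off — finiteness needs only $\rho<1$, and $\rho>1/2$ is a technical convenience for uniform constants in the near-threshold regime (e.g., merging exponents $e^{-\rho s\epsilon^2/8}$ and $e^{-s\epsilon^2/8}$); second, your phrase ``up to polynomial-in-$s$ corrections'' must be made harmless, since a genuinely growing $\mathrm{poly}(s)$ prefactor in front of $e^{-c_1 s\epsilon^2}$ would degrade the sum to a higher power of $1/\epsilon^2$ — in a careful execution via tail probabilities no such growing factor appears, and the near-threshold contribution should be handled as you suggest, by splitting at $\hat\mu_{1s}\geq\mu_1-\epsilon/2$ so that $1/G'_{1s}(\epsilon)-1\leq e^{-\rho s\epsilon^2/8}$ there while the intermediate band has probability at most $e^{-s\epsilon^2/8}$.
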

Similar quantities are also bounded in \citet{agrawal2017near,lattimore2018bandit}, which are essential for proving the near optimal problem-independent regret bound for Thompson sampling. However, the upper bound in Lemma \ref{lem:mian-bounding-I2} is sharper than that in previous papers due to the scaling parameter $\rho$ we choose in our MOTS algorithm. In fact, the requirement $\rho\in(1/2,1)$ is necessary to obtain such an improved upper bound. In the next lemma, we will show that if we choose $\rho=1$ as is done in existing work, the second term in the right-hand side of \eqref{eq:minimax-arm-i} will have a nontrivial lower bound. 
\begin{lemma}[Lower Bound]
\label{lem:decom=false}
Assume $K\log T\leq \sqrt{T}$. If we set $\rho=1$, then there exists a bandit instance with $\Delta_i=2\sqrt{K\log T/T}$ for all $i\in \{2,3,\cdots,K\}$ such that for any $\epsilon>0$
\begin{align}
\label{eq:seps^2}
    \EE\bigg[ \frac{1}{G'_{1s}(\epsilon)}-1\bigg]\geq \frac{e^{-\frac{s\epsilon^2}{2}}}{s\epsilon^2} ,
\end{align}
and the decomposition in \eqref{eq:minimax-arm-i} will lead to
\begin{align*}
    K\Delta_i\cdot  \EE\bigg[\sum_{s=1}^{T-1} \bigg( \frac{1}{G'_{1s}(\Delta_i/2)}-1\bigg)\bigg]= \Omega(\sqrt{KT \log T}). 
\end{align*}
\end{lemma}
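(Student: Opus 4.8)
The plan is to reduce everything to a one-dimensional Gaussian integral and then isolate the contribution of the event that the optimal arm is underestimated. For the lower-bound instance I take all rewards Gaussian with unit variance (hence $1$-subGaussian) and $\mu_i=\mu_1-\Delta_i$ with $\Delta_i=2\sqrt{K\log T/T}$. Since $\rho=1$, after arm $1$ has been pulled $s$ times its empirical mean satisfies $\hat\mu_{1s}\sim\mathcal N(\mu_1,1/s)$ and the sampling law is $D'_{1s}=\mathcal N(\hat\mu_{1s},1/s)$. Writing $Z:=\sqrt s(\hat\mu_{1s}-\mu_1)\sim\mathcal N(0,1)$ and $a:=\sqrt s\,\epsilon$, a direct computation (using $1-\Phi(x)=\Phi(-x)$) gives $G'_{1s}(\epsilon)=1-F'_{1s}(\mu_1-\epsilon)=\Phi(Z+a)$, so that $\EE[1/G'_{1s}(\epsilon)-1]=\EE_Z[(1-\Phi(Z+a))/\Phi(Z+a)]$, where $\Phi$ and $\phi$ denote the standard normal CDF and PDF.

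To prove \eqref{eq:seps^2} I would lower bound this last expectation using the Gaussian tail (Mills) bound $\Phi(u)\le \phi(u)/|u|$, valid for $u<0$. Completing the square in the Gaussian density and restricting the integral to the underestimation event $\{Z\le -a\}$ extracts a factor $e^{-a^2/2}$; on this event $\Phi(Z+a)\le 1/2$, so the integrand is at least $1/(2\Phi(Z+a))$, and the Mills bound turns the remainder into the elementary $\tfrac12\int_0^\infty v\,\mathrm{d}v\,e^{-av}=1/(2a^2)$. This yields $\EE[1/G'_{1s}(\epsilon)-1]\gtrsim e^{-a^2/2}/a^2=e^{-s\epsilon^2/2}/(s\epsilon^2)$, i.e.\ \eqref{eq:seps^2} up to a universal constant. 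The conceptual point this makes is that with $\rho=1$ there is no variance shrinkage to dampen the heavy left tail of $1/\Phi(Z+a)$, so the expectation genuinely blows up like $1/a^2$ for small $a$; this is precisely the slack that the choice $\rho\in(1/2,1)$ removes in Lemma \ref{lem:mian-bounding-I2}.

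For the second display I set $\epsilon=\Delta_i/2=\sqrt{K\log T/T}$, so $\epsilon^2=K\log T/T$ and $1/\epsilon^2=T/(K\log T)$. Summing the per-term bound gives $\sum_{s=1}^{T-1} e^{-s\epsilon^2/2}/(s\epsilon^2)=\epsilon^{-2}\sum_{s=1}^{T-1}e^{-s\epsilon^2/2}/s$. I lower bound this harmonic-type sum by keeping only the terms with $s\le 2/\epsilon^2$, where $e^{-s\epsilon^2/2}\ge e^{-1}$, which leaves $\gtrsim\log\big(\min\{T-1,\,2/\epsilon^2\}\big)$. Here the hypothesis $K\log T\le\sqrt T$ enters: it forces $1/\epsilon^2=T/(K\log T)\ge\sqrt T$, so $\min\{T-1,\,2/\epsilon^2\}\ge\sqrt T$ and the sum is $\gtrsim\log T$. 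Hence $\sum_{s=1}^{T-1}\EE[1/G'_{1s}(\Delta_i/2)-1]\gtrsim (T/(K\log T))\cdot\log T=T/K$, and multiplying by $K\Delta_i$ and using $\Delta_i T=2\sqrt{KT\log T}$ gives $K\Delta_i\cdot(T/K)=\Delta_i T=\Omega(\sqrt{KT\log T})$, as claimed.

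The main obstacle is the anti-concentration estimate of the first two paragraphs: one must show that $\EE[1/G'_{1s}(\epsilon)-1]$ really grows like $e^{-s\epsilon^2/2}/(s\epsilon^2)$ rather than staying bounded, which requires correctly capturing the low-probability but high-magnitude event that $\hat\mu_{1s}$ underestimates $\mu_1$, and then checking via $K\log T\le\sqrt T$ that the range of $s$ over which these terms have size $\Theta(1/\epsilon^2)$ lies inside $[1,T-1]$ and is long enough to produce the extra $\log T$ factor. Matching the exact constant $1$ in \eqref{eq:seps^2} (rather than just the order) requires a marginally more careful accounting of the discarded $\{Z>-a\}$ contribution, but this is immaterial for the $\Omega(\sqrt{KT\log T})$ conclusion, which is the whole purpose of the lemma: it shows that under the decomposition \eqref{eq:minimax-arm-i} the naive choice $\rho=1$ cannot beat $\sqrt{KT\log T}$, motivating $\rho\in(1/2,1)$.
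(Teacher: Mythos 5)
Your proof is correct and follows essentially the same route as the paper's: both restrict to the underestimation event $\{\hat\mu_{1s}\le \mu_1-\epsilon\}$, invoke the Gaussian lower-tail (Mills) bound on $G'_{1s}(\epsilon)$, complete the square to reduce the expectation to $\int_0^\infty v e^{-av}\,dv = 1/a^2$ with $a=\sqrt{s}\,\epsilon$, and then truncate the sum at $s\lesssim 1/\epsilon^2$ (the paper truncates at $s\le\sqrt{T}$, which is equivalent under $K\log T\le\sqrt{T}$) to extract the extra $\log T$ factor. Your constant $1/2$ in \eqref{eq:seps^2} versus the paper's $1$ is immaterial for the $\Omega(\sqrt{KT\log T})$ conclusion --- and in fact the paper's own derivation silently drops a ``$-1$'' term at the corresponding step, so your explicit handling via $\Phi(Z+a)\le 1/2$ on the underestimation event is, if anything, the more careful accounting.
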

The above lemma shows that if we set $\rho=1$, 
the decomposition in \eqref{eq:minimax-arm-i} will lead to an unavoidable $\Omega(\sqrt{KT\log T})$ problem-independent regret. 
Combined with Lemma \ref{lem:mian-bounding-I2}, it indicates that our choice of $\rho\in(1/2,1)$ in MOTS is crucial to improve the previous analysis and obtain a better  regret bound.  When the reward distribution is Bernoulli, it is worth noting that \citet{agrawal2017near}  achieved an improved  regret $O(\sqrt{KT\log K})$ by using Gaussian priors. Meanwhile, they also proved that this regret bound is unimprovable for Thompson sampling using Gaussian priors, which leaves a gap in achieving the minimax optimal regret $O(\sqrt{KT})$. In the following proof of Theorem \ref{theorem:minimax-optimal}, we will show that the clipped Gaussian distribution suffices to close this gap and achieve the $O(\sqrt{KT})$ minimax regret. Moreover, in Theorem~\ref{theorem_J_distribution}, we will further show that MOTS-$\cJ$ can achieve the minimax optimal regret by using the Rayleigh distribution and does not need the requirement on the scaling parameter $\rho$, which is crucial in proving the asymptotic optimality simultaneously. 

Now, we are ready to prove the minimax optimality of MOTS.
\begin{proof}[Proof of Theorem \ref{theorem:minimax-optimal}]
Recall that $\hat{\mu}_{is}$ is the average reward of arm $i$ when it has been played $s$ times. We define $\Delta$ as follows:
\begin{equation}
\label{eq:def-Delta}
\Delta=\mu_1-\min_{1\leq s\leq T}\Bigg\{\hat{\mu}_{1s}+\sqrt{\frac{\alpha}{s}\log^{+}\bigg(\frac{T}{sK}\bigg)}\Bigg\}.
\end{equation}
The regret of Algorithm \ref{alg:mots} can be decomposed as follows. 
\begin{align}
     \label{eq:minimax-decompose-}
R_{\mu}(T)& =\sum_{i:\Delta_i>0}\Delta_i\mathbb{E}[T_i(T)] \notag \\
    &\leq \mathbb{E}[2T\Delta]+\EE\left[\sum_{i:\Delta_i>2\Delta}\Delta_iT_i(T) \right]  \notag \\
    &\leq \mathbb{E}[2T\Delta]+8\sqrt{KT}  +\EE\left[\sum_{i:\Delta_i>\max\{2\Delta, 8\sqrt{K/T}\}}\Delta_i T_i(T)\right].
\end{align}
The first term in \eqref{eq:minimax-decompose-} can be bounded as:
\begin{equation} \label{eq:minimax-TDelta}\small
\begin{split}
    \mathbb{E}[2T\Delta] =2T \int_{0}^{\infty} \mathbb{P}(\Delta\geq x) \dd x \leq  2T \int_{0}^{\infty} \min\bigg\{1, \frac{15K}{Tx^2}\bigg\} \dd x = 4\sqrt{15KT},
      \end{split}
\end{equation}
where the inequality comes from Lemma \ref{lemma:high_prob_all_positive} since 
\begin{align}
    &\PP\Bigg(\mu_1-\min_{1\leq s\leq T}\Bigg\{\hat{\mu}_{1s}+\sqrt{\frac{\alpha}{s}\log^+\bigg(\frac{T}{sK}\bigg)}\Bigg\}\geq x\Bigg) \notag\\
    =&\PP\Bigg(\exists 1\leq s\leq T: \mu_1-\hat{\mu}_{1s}-\sqrt{\frac{\alpha}{s}\log^+\bigg(\frac{T}{sK}\bigg)}-x\geq 0 \Bigg).\notag
\end{align}
Define set $S=\{i: \Delta_i>\max\{2\Delta,8\sqrt{K/T} \}\}$. Now we focus on term $\sum_{i\in S}\Delta_i T_i(T)$. Note that the update rules of Algorithm~\ref{alg:mots} ensure $D_i(t+1)=D_i(t)$ ($t\geq K+1$) whenever $A_t\neq i$.  We define 
\begin{align}
  \tau_{is}=\hat{\mu}_{is}+\sqrt{\frac{\alpha}{s}\log^{+} \bigg(\frac{T}{sK}\bigg)}.
\end{align}
By the definition in \eqref{eq:shrink_range}, we have $\tau_{is}=\tau_{i}(t)$ when $T_i(t)=s$. From the definition of $\Delta$ in \eqref{eq:def-Delta}, for $i\in S$, we have
\begin{align} \label{eq:best-arm-lower-bound}
     \tau_{1s}=\hat{\mu}_{1s}+\sqrt{\frac{\alpha}{s}\log^{+} \bigg(\frac{T}{sK} \bigg)}  \geq \mu_1-\Delta \geq \mu_1-\frac{\Delta_i}{2}.
\end{align}
Recall the definition of $D_{1s}$. Let $\theta_{1s}$ be a sample from the clipped distribution $D_{1s}$. As mentioned in Section~\ref{sec:sample_theta}, we obtain $\theta_{1s}$ with the following procedure. We first sample $\tilde{\theta}_{1s}$  from distribution $\mathcal{N}(\hat{\mu}_{1s},1/(\rho s))$. If $\tilde{\theta}_{1s}< \tau_{1s}$, we set $\theta_{1s}=\tilde{\theta}_{1s}$; otherwise, we set $\theta_{1s}=\tau_{1s}$. \eqref{eq:best-arm-lower-bound} implies  that $\mu_1-\Delta_i/2\leq\tau_{1s}$, where $\tau_{1s}$ is the boundary for clipping. Therefore, $\PP(\tilde{\theta}_{1s}\geq \mu_1-\Delta_i/2)=\PP({\theta}_{1s} \geq \mu_1-\Delta_i/2)$. By definition, $F'_{is}$ is the CDF of $\mathcal{N}(\hat{\mu}_{is},1/(\rho s))$ and $G'_{is}(\epsilon)=1-F'_{is}(\mu_1-\epsilon)$. Therefore, for any $i\in S$, $G_{1s}(\Delta_i/2)=\PP({\theta}_{1s} \geq \mu_1-\Delta_i/2)= \PP(\tilde{\theta}_{1s}\geq \mu_1-\Delta_i/2)=G'_{1s}(\Delta_i/2)$. 

Using~\eqref{eq:minimax-arm-i} of Lemma~\ref{lem:thom-bound} and setting  $\epsilon={\Delta_i}/{2}$, for any $i\in S$, we have
\begin{equation}\label{eq:minimax-Deltai/2} 
\begin{split}
\Delta_i \mathbb{E}[T_i(T)]&\leq \Delta_i+ \Delta_i \cdot \mathbb{E}\bigg[\sum_{t=K+1}^{T-1}\ind\{ A_t=i,E_{i}^{c}(t)\}\bigg]+ \Delta_i\cdot \mathbb{E}\bigg[\sum_{s=1}^{T-1}\left(\frac{1}{G_{1s}(\Delta_i/2)}-1\right)\bigg]\\
& = \Delta_i+\underbrace{\Delta_i \cdot \mathbb{E}\bigg[\sum_{t=K+1}^{T-1}\ind\{ A_t=i,E_{i}^{c}(t)\}\bigg]}_{I_1} +\underbrace{\Delta_i\cdot \mathbb{E}\bigg[\sum_{s=1}^{T-1}\left(\frac{1}{G'_{1s}(\Delta_i/2)}-1  \right)\bigg]}_{I_2} .
\end{split}
\end{equation} 
\textbf{Bounding term $I_1$:} Note that
\begin{align*}
     E_{i}^c(t) & =\bigg\{\theta_i(t)> \mu_1-\frac{\Delta_i}{2} \bigg\} \subseteq \Bigg\{\hat{\mu}_{i}(t)  +\sqrt{\frac{\alpha}{T_i(t)}\log^{+}\bigg( \frac{T}{KT_{i}(t)} \bigg)}>\mu_1-\frac{\Delta_i}{2}  \Bigg\}. 
\end{align*}
We define the following notation:
\begin{equation}
\small\kappa_i=\sum_{s=1}^{T}\ind \Bigg \{\hat{\mu}_{is}+\sqrt{\frac{\alpha}{s}\log^+\bigg(\frac{T}{sK}\bigg)}> \mu_1-\frac{\Delta_i}{2} \Bigg \},
\end{equation}
which immediately implies that
{ \begin{align} \label{eq:minimax-final}
  I_1= \Delta_i \cdot \mathbb{E}\bigg[\sum_{t=K+1}^{T-1}\ind\{ A_t=i,E_{i}^{c}(t)\}\bigg] \leq\Delta_i\EE[\kappa_i].
\end{align}}%
To further bound \eqref{eq:minimax-final}, we have
\begin{align}\label{eq:minimax-kappa}
\Delta_i\EE[\kappa_i]&=
  \Delta_i\EE\Bigg[\sum_{s=1}^{T}\ind \Bigg \{\hat{\mu}_{is}+\sqrt{\frac{\alpha}{s}\log^+\bigg(\frac{T}{sK}\bigg)}> \mu_1-\frac{\Delta_i}{2} \Bigg \}\Bigg] \notag \\
   & \leq \Delta_i \sum_{s=1}^{T}\PP \Bigg \{\hat{\mu}_{is}-\mu_i+\sqrt{\frac{\alpha}{s}\log^+\bigg(\frac{T}{sK}\bigg)}> \frac{\Delta_i}{2} \Bigg \} \notag \\
   & \leq \Delta_i+\frac{12}{\Delta_i}+\frac{4\alpha}{\Delta_i}\Bigg( \log^+ \bigg( \frac{T\Delta_i^2}{4K} \bigg)+\sqrt{2\alpha\pi\log^+\bigg(\frac{T\Delta_i^2}{4K}}\bigg) \Bigg),
\end{align}
where the first inequality is due to the fact that $\mu_1-\mu_i=\Delta_i$ and the second one is by Lemma~\ref{lem:sumTpro}.
It can be verified that $h(x)= x^{-1}\log^{+} (ax^2)$ is monotonically decreasing for $x\geq e/\sqrt{a}$ and any $a>0$. Since $\Delta_i\geq8\sqrt{K/T}> e/\sqrt{T/(4K)}$, we have $\log(T\Delta_i^2/(4K))/\Delta_i\leq \sqrt{T/K}$. 
Plugging this into \eqref{eq:minimax-kappa},  we have $\EE[\Delta_i\kappa_i] = O(\sqrt{T/K}+\Delta_i)$.


\noindent\textbf{Bounding term $I_2$:} applying Lemma~\ref{lem:mian-bounding-I2}, we immediately obtain
\begin{equation}
\label{eq:results-bounding-I_2}
    I_2=\Delta_i \mathbb{E}\bigg[\sum_{s=1}^{T-1}\left(\frac{1}{G'_{1s}(\Delta_i/2)}-1\right)\bigg] = O\bigg(\sqrt{\frac{T}{K}}\bigg).
\end{equation}
Substituting \eqref{eq:minimax-TDelta}, \eqref{eq:minimax-Deltai/2}, \eqref{eq:minimax-kappa}, and \eqref{eq:results-bounding-I_2} into \eqref{eq:minimax-decompose-}, we complete the proof of Theorem~\ref{theorem:minimax-optimal}. 
\end{proof}

\section{Experiments}
\begin{figure*}[ht]
\subfigure[$K=50$, $\epsilon=0.2$]{\label{fig:k50-02}\includegraphics[height=0.245\textwidth]{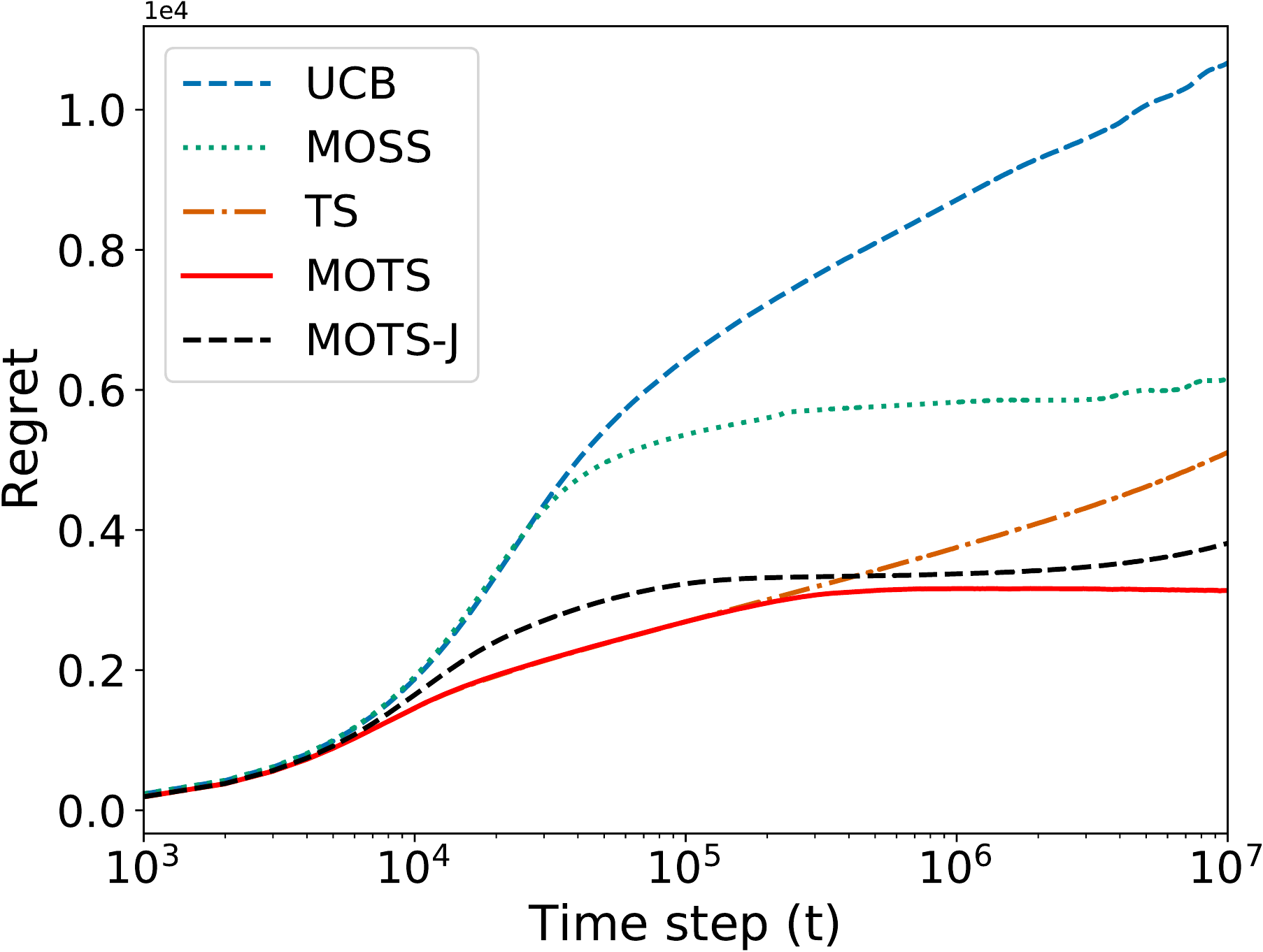}}
\subfigure[$K=50$, $\epsilon=0.1$]{\label{fig:k50-01}\includegraphics[height=0.245\textwidth]{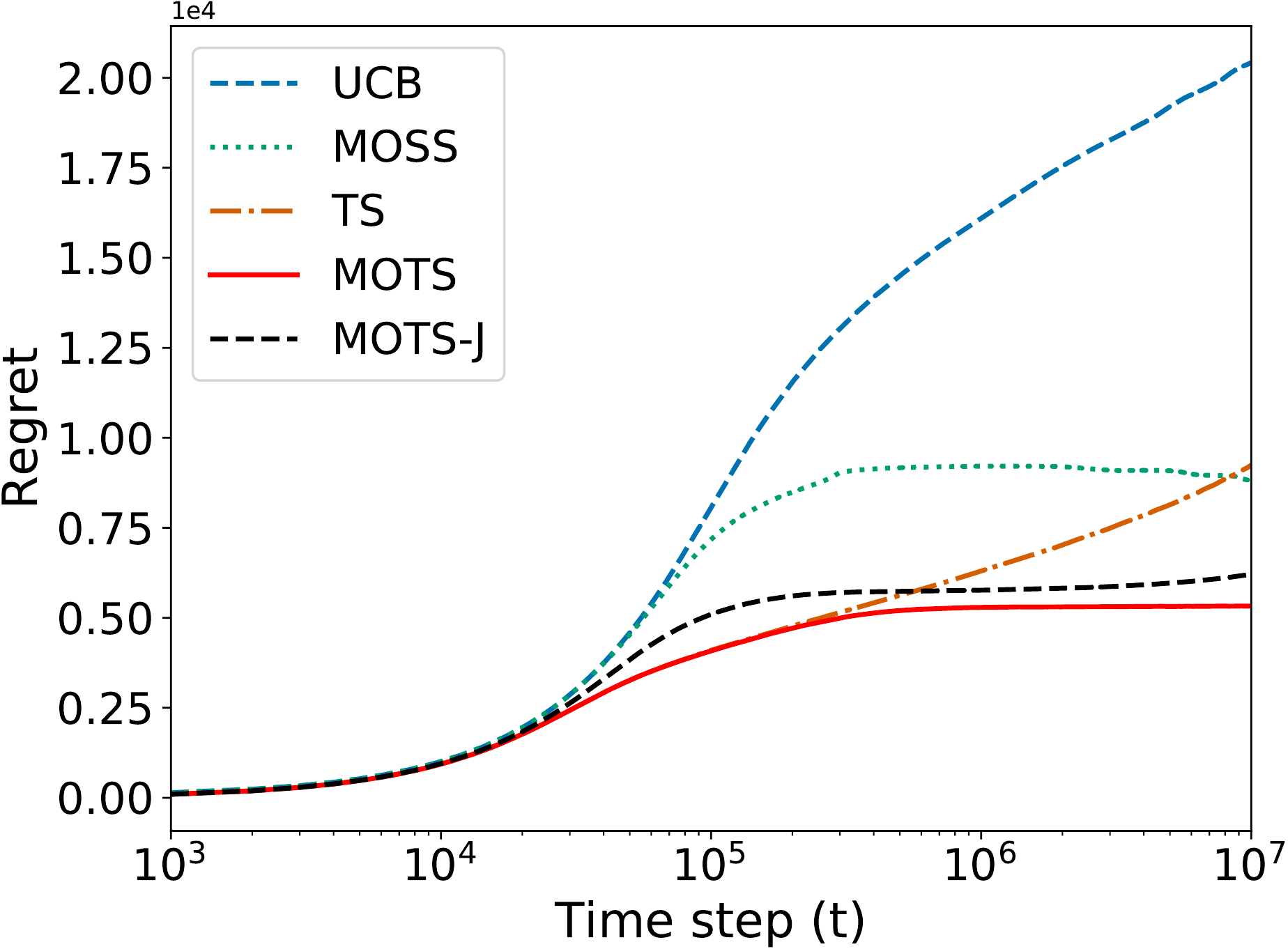}}
\subfigure[$K=50$, $\epsilon=0.05$]{\label{fig:k50-005}\includegraphics[height=0.245\textwidth]{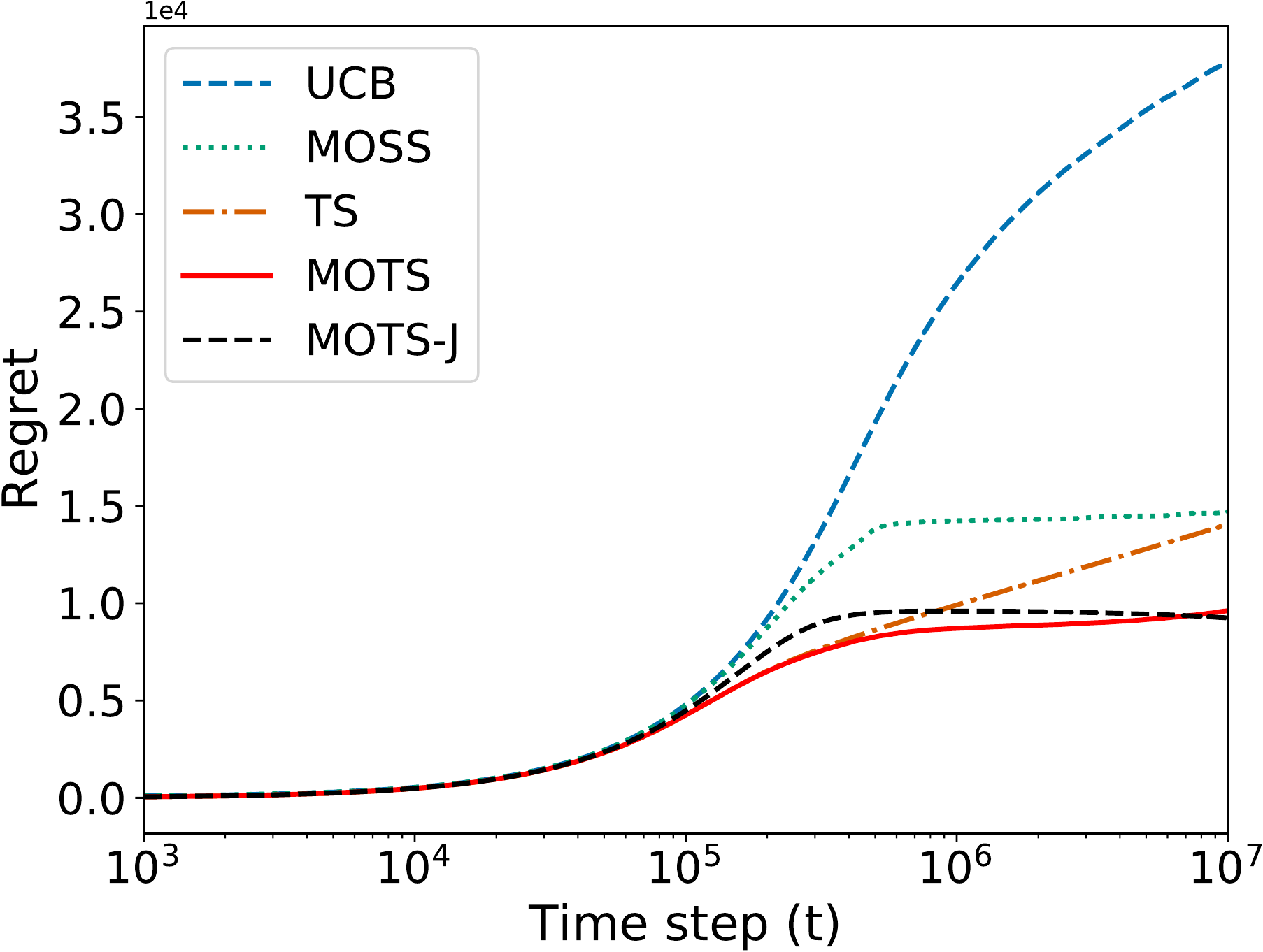}}
\caption{The regret for different algorithms with $K=50$ and $\epsilon\in \{0.2,0.1,0.05\}$. The experiments are averaged over $6000$ repetitions.}
\vspace{-2mm}
\label{fig:k50}
\end{figure*}
\begin{figure}[h]
\subfigure[$K=100$, $\epsilon=0.2$]{\label{fig:k10-02}\includegraphics[height=0.245\textwidth]{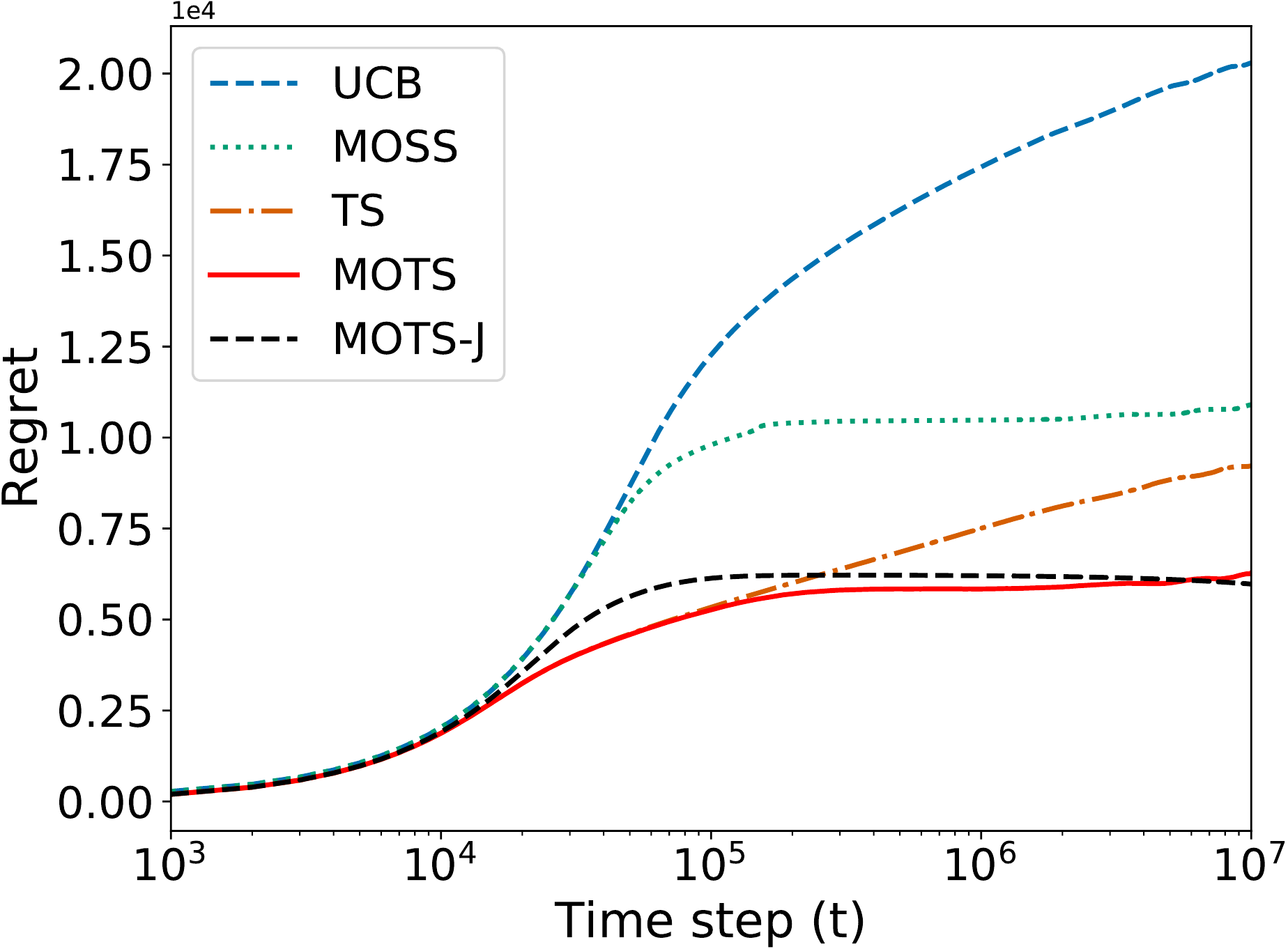}}
\subfigure[$K=100$, $\epsilon=0.1$]{\label{fig:k10-01}\includegraphics[height=0.245\textwidth]{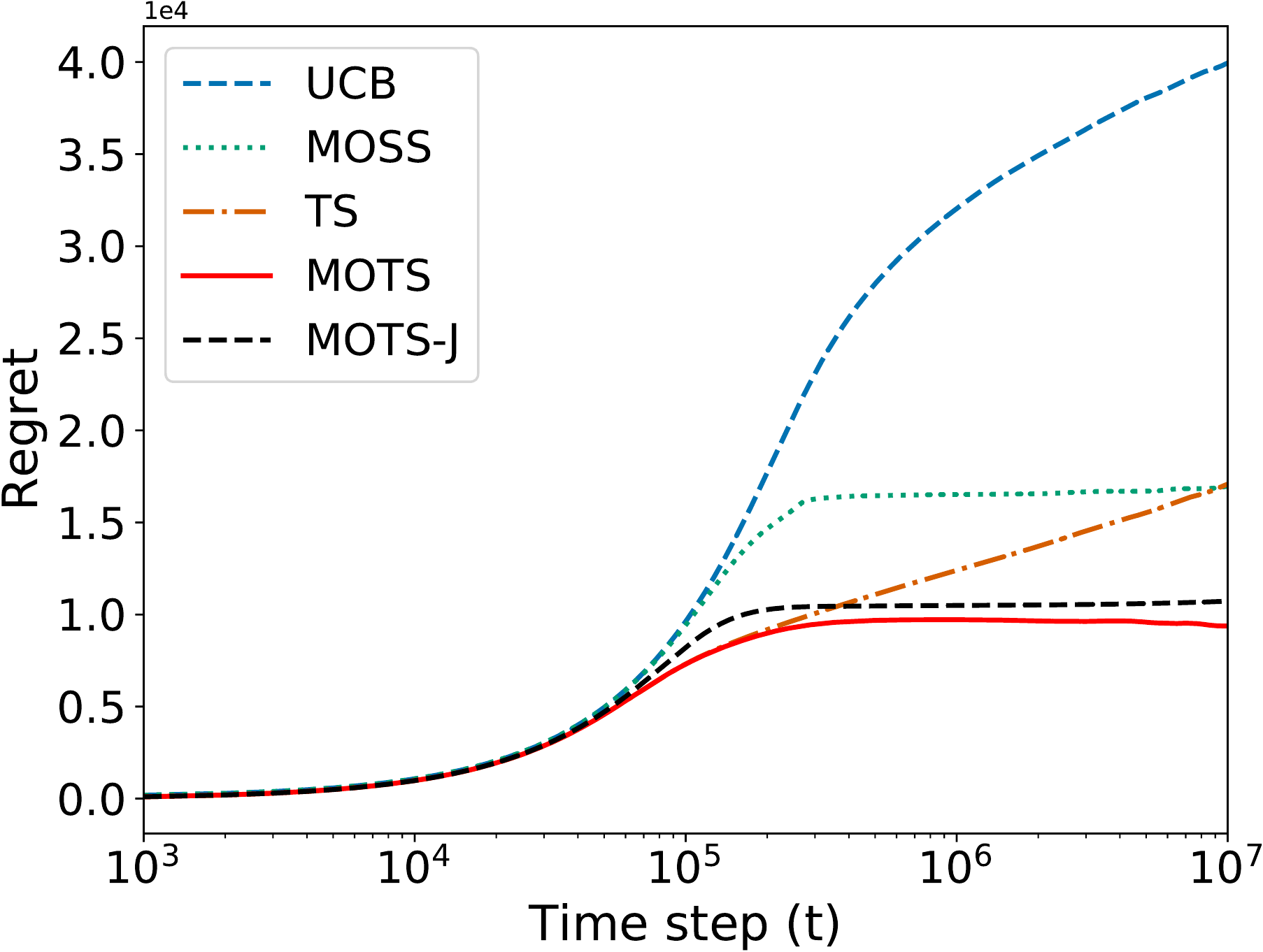}}
\subfigure[$K=100$, $\epsilon=0.05$]{\label{fig:k10-005}\includegraphics[height=0.245\textwidth]{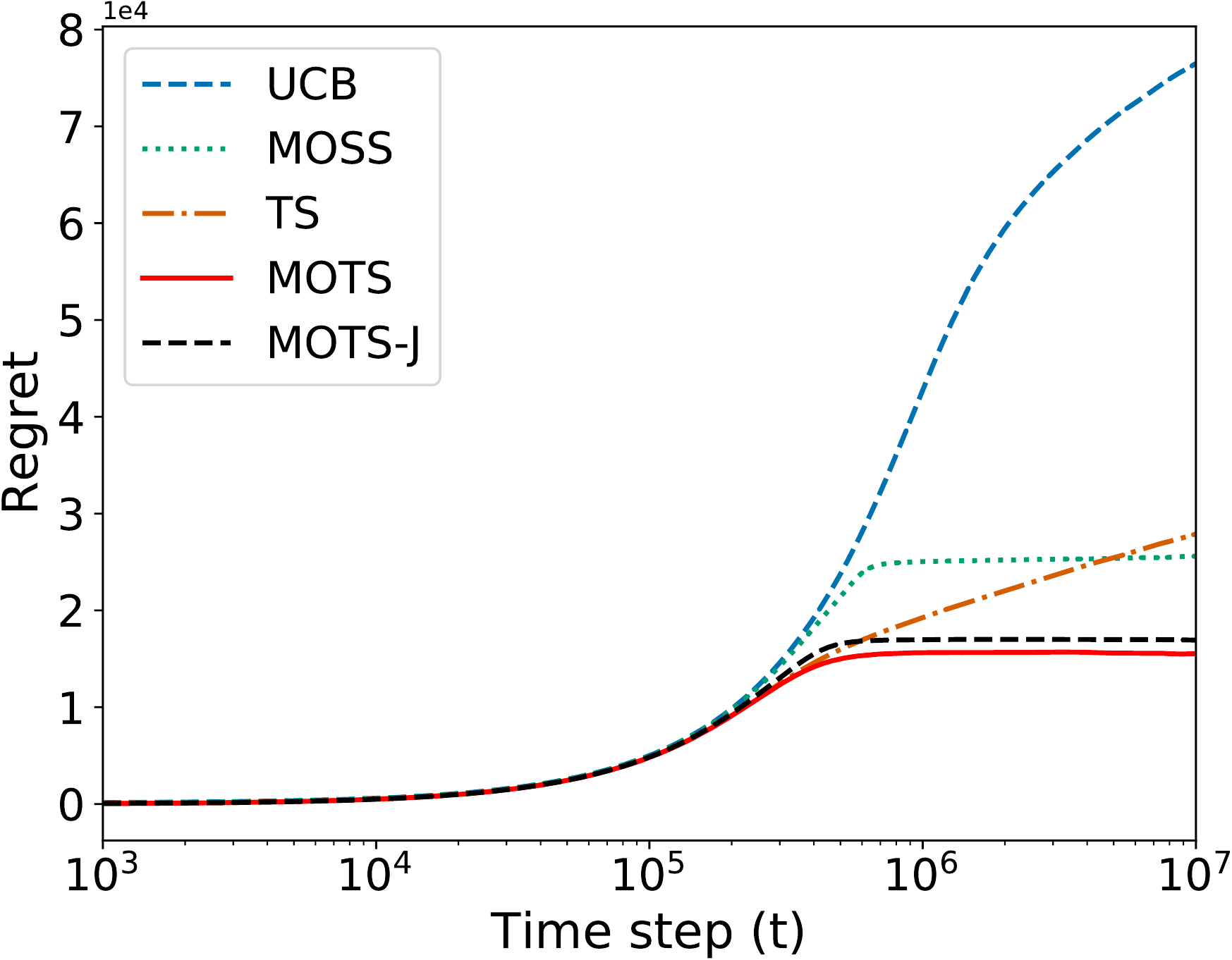}}
\caption{The regret for different algorithms with $K=100$ and $\epsilon\in \{0.2,0.1,0.05\}$. The experiments are averaged over $6000$ repetitions.} 
\vspace{-2mm}
\label{fig:k10}
\end{figure}

In this section, we experimentally compare our proposed algorithms MOTS and MOTS-$\cJ$ with existing algorithms for multi-armed bandit problems with Gaussian rewards. Baseline algorithms include MOSS~\citep{audibert2009minimax}, 
UCB~\citep{katehakis1995sequential}, 
and Thompson sampling with Gaussian priors (TS for short)~\citep{agrawal2017near}. We consider two settings: $K=50$ and $K=100$, where $K$ is the number of arms. In both settings, each arm follows an independent Gaussian distribution.
The best arm has expected reward  1 and variance 1, while the other $K-1$ arms have expected reward  $1-\epsilon$ and variance 1.
We vary $\epsilon$ with values $0.2, 0.1, 0.05$ in different experiments. The total number of time steps
$T$ is set to $10^7$. In all experiments, the parameter $\rho$ for MOTS defined in Section \ref{sec:sample_theta} is set to $0.9999$. 
Since we focus on Gaussian rewards, we set $\alpha=2$ in  \eqref{eq:shrink_range} for both MOTS and MOTS-$\cJ$.  

For MOTS-$\cJ$, we need to sample instances from  distribution $\cJ(\mu,\sigma^2)$, of which the PDF is defined in \eqref{eq:def_pdf_of_J}. To sample from $\cJ$, we use the well known inverse transform sampling technique by
 first computing the corresponding inverse CDF,
and then uniformly choosing a random number in $[0,1]$, which is then used to calculate the random number sampled from $\cJ(\mu,\sigma^2)$.

In the setting of $K=50$, Figures \ref{fig:k50-02}, \ref{fig:k50-01}, and \ref{fig:k50-005} report the regrets of all algorithms when $\epsilon$ is 0.2, 0.1, 0.05 respectively.
For all $\epsilon$ values, MOTS consistently outperforms the baselines for all time step $t$, and  MOTS-$\cJ$ outperforms the baselines especially when  $t$ is large.
For instance, in Figure \ref{fig:k50-005}, when time step $t$ is $T=10^7$, the regret of MOTS and MOTS-$\cJ$ are 9615 and 9245 respectively, while the regrets of TS, MOSS, and UCB are 14058, 14721, and 37781 respectively.


In the setting of $K=100$, Figures \ref{fig:k10-02}, \ref{fig:k10-01}, and \ref{fig:k10-005} report the regrets of MOTS, MOTS-$\cJ$, MOSS, TS, and UCB when $\epsilon$ is 0.2, 0.1, 0.05 respectively. Again, for all $\epsilon$ values, when varying the time step $t$, MOTS consistently has the smallest regret,  outperforming all baselines, and MOTS-$\cJ$ outperforms all baselines especially when $t$ is large. 


In summary, our algorithms  consistently outperform TS, MOSS, and UCB when varying $\epsilon$, $K$, and $t$.

\section{Conclusion and Future Work}
We solved the open problem on the minimax optimality for Thompson sampling \citep{li2012open}. We proposed the MOTS algorithm and proved that it achieves the minimax optimal regret $O(\sqrt{KT})$ when rewards are generated from subGaussian distributions. In addition, we propose a variant of MOTS called MOTS-$\cJ$ that simultaneously achieves the minimax and asymptotically optimal regret for $K$-armed bandit problems when rewards are generated from Gaussian distributions.
Our experiments demonstrate the superior performances of  MOTS and MOTS-$\cJ$ compared with the state-of-the-art bandit algorithms.

Interestingly, our experimental results show that the performance of MOTS is never worse than that of MOTS-$\cJ$. Therefore, it would be an interesting future direction to investigate whether the proposed MOTS with clipped Gaussian distributions can also achieve both minimax and asymptotic optimality for multi-armed bandits.


\appendix

\section{Proofs of Theorems}
\label{theorem:proof}
In this section, we provide the proofs of Theorems \ref{theorem:asymptotic-optimal}, \ref{theorem-new3} and \ref{theorem_J_distribution}.
\subsection{Proof of Theorem \ref{theorem:asymptotic-optimal}}\label{sec:proof_of_asy_subg}
To prove Theorem \ref{theorem:asymptotic-optimal}, we need the following technical lemma.
\begin{lemma}\label{lem:asym1} 
For any $\epsilon_T>0$, $\epsilon>0$ that satisfies $\epsilon+\epsilon_T<\Delta_i$, it holds that
\begin{align*}
  \mathbb{E}\bigg[\sum_{s=1}^{T-1}\ind\{G'_{is}(\epsilon)>1/T\}\bigg]\leq 1+ \frac{2}{\epsilon_T^2}+\frac{2\log T}{\rho(\Delta_i-\epsilon-\epsilon_T)^2}.
\end{align*} 
\end{lemma}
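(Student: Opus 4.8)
The plan is to decompose the indicator according to whether the empirical mean $\hat\mu_{is}$ concentrates near its true value $\mu_i$. Introducing the threshold $\mu_i+\epsilon_T$, I would split
$$\ind\{G'_{is}(\epsilon)>1/T\} \le \ind\{G'_{is}(\epsilon)>1/T,\ \hat\mu_{is}\le \mu_i+\epsilon_T\} + \ind\{\hat\mu_{is}> \mu_i+\epsilon_T\},$$
and bound the expectations of the two resulting sums separately. The first captures the typical regime where $\hat\mu_{is}$ is well-behaved, and the second captures a rare large-deviation event.

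For the first (concentrated) term, the key observation is that $G'_{is}(\epsilon)=1-\Phi\big(\sqrt{\rho s}\,(\mu_1-\epsilon-\hat\mu_{is})\big)$, where $\Phi$ is the standard normal CDF. On the event $\hat\mu_{is}\le\mu_i+\epsilon_T=\mu_1-\Delta_i+\epsilon_T$, the assumption $\epsilon+\epsilon_T<\Delta_i$ guarantees $\mu_1-\epsilon-\hat\mu_{is}\ge \Delta_i-\epsilon-\epsilon_T>0$, so the Gaussian tail bound $1-\Phi(x)\le \exp(-x^2/2)$ yields
$$G'_{is}(\epsilon)\le \exp\!\bigg(-\frac{\rho s(\Delta_i-\epsilon-\epsilon_T)^2}{2}\bigg).$$
Requiring this upper bound to exceed $1/T$ then forces $s<2\log T/\big(\rho(\Delta_i-\epsilon-\epsilon_T)^2\big)$, so on the concentrated event the first sum \emph{deterministically} counts at most $2\log T/\big(\rho(\Delta_i-\epsilon-\epsilon_T)^2\big)$ indices $s$, regardless of the randomness.

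For the second (deviation) term, I would use that arm $i$'s rewards are $1$-subGaussian, so $\hat\mu_{is}-\mu_i$ is $(1/s)$-subGaussian and $\mathbb{P}(\hat\mu_{is}>\mu_i+\epsilon_T)\le \exp(-s\epsilon_T^2/2)$. Summing the resulting geometric series over $s$ and using $e^{x}-1\ge x$ gives $\sum_{s=1}^{T-1}\mathbb{P}(\hat\mu_{is}>\mu_i+\epsilon_T)\le 1/(e^{\epsilon_T^2/2}-1)\le 2/\epsilon_T^2$. Adding the two bounds, with one unit of slack to absorb the rounding of the integer count in the first term, produces the claimed inequality.

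The main obstacle is the first term: one must recognize that $G'_{is}(\epsilon)$ depends on $s$ both through the shrinking variance $1/(\rho s)$ and through the random $\hat\mu_{is}$, and that conditioning on concentration of $\hat\mu_{is}$ converts the probabilistic condition $G'_{is}(\epsilon)>1/T$ into a purely deterministic cap on the range of admissible $s$. This is precisely where the factor $\rho$ and the gap $\Delta_i-\epsilon-\epsilon_T$ enter the logarithmic term that ultimately drives the asymptotic rate. Everything else reduces to a routine subGaussian tail estimate and a geometric sum.
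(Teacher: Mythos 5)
Your proposal is correct and follows essentially the same route as the paper's proof: the same split on the concentration event $\hat\mu_{is}\le\mu_i+\epsilon_T$, the same observation that on this event the Gaussian tail of $\mathcal{N}(\hat\mu_{is},1/(\rho s))$ forces $G'_{is}(\epsilon)\le 1/T$ once $s\ge 2\log T/\big(\rho(\Delta_i-\epsilon-\epsilon_T)^2\big)$, and the same geometric-series bound $\sum_s e^{-s\epsilon_T^2/2}\le 2/\epsilon_T^2$ for the deviation event. Your indicator decomposition is just a cleaner phrasing of the paper's conditional-probability bound $\PP(A)\le\PP(A\mid B)+1-\PP(B)$, so there is nothing substantive to distinguish the two arguments.
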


\begin{proof}[Proof of Theorem \ref{theorem:asymptotic-optimal}]
Let $Z(\epsilon)$ be the following event
\begin{align}\label{eq:event_z_eps}
Z(\epsilon)=\bigg\{\forall s\in [T]: \hat{\mu}_{1s}+\sqrt{\frac{\alpha}{s}\log^{+}\Big(\frac{T}{sK}\Big)}\geq \mu_1-\epsilon\bigg\}.
\end{align}
For any arm $i\in[K]$, we have
\begin{align}\label{eq:asym-main}
   \mathbb{E}[T_i(T)]&\leq\EE[T_i(T) \mid Z(\epsilon)]\PP(Z(\epsilon))+T(1-\PP[Z(\epsilon)])\notag\\
  & \leq 2+ \mathbb{E}\bigg[\sum_{s=1}^{T-1}\left(\frac{1}{G_{1s}(\epsilon)}-1\right) \bigg| Z(\epsilon) \bigg] +T(1-\PP[Z(\epsilon)])+{ \mathbb{E}\bigg[\sum_{s=1}^{T-1}\ind\{G_{is}(\epsilon)>1/T\}\bigg]} \notag\\ 
  & \leq 2+  \mathbb{E}\bigg[\sum_{s=1}^{T-1}\left(\frac{1}{G'_{1s}(\epsilon)}-1\right)\bigg] +T(1-\PP[Z(\epsilon)])+\mathbb{E}\bigg[\sum_{s=1}^{T-1}\ind\{G_{is}(\epsilon)>1/T\}\bigg] \notag \\
 & \leq 2+  \mathbb{E}\bigg[\sum_{s=1}^{T-1}\left(\frac{1}{G'_{1s}(\epsilon)}-1\right)\bigg] +T(1-\PP[Z(\epsilon)])+\mathbb{E}\bigg[\sum_{s=1}^{T-1}\ind\{G'_{is}(\epsilon)>1/T\}\bigg],
\end{align}
where the second inequality is due to \eqref{eq:asymptotically-arm-i} in Lemma \ref{lem:thom-bound}, the third inequality is due to the fact that conditional on event $Z(\epsilon)$ defined in \eqref{eq:event_z_eps} we have $G_{1s}(\epsilon)=G'_{1s}(\epsilon)$, and the last inequality is due to the fact that $G_{is}(\epsilon)=G'_{is}(\epsilon)$  for
\begin{equation}
\hat{\mu}_{is}+\sqrt{\frac{\alpha}{s}\log^{+}\Big(\frac{T}{sK}\Big)}\geq \mu_1-\epsilon,
\end{equation}
and $G_{is}(\epsilon)=0\leq G'_{is}(\epsilon)$ for 
\begin{equation}
\hat{\mu}_{is}+\sqrt{\frac{\alpha}{s}\log^{+}\Big(\frac{T}{sK}\Big)}< \mu_1-\epsilon.
\end{equation}
Let $\epsilon=\epsilon_T=1/\log \log T$. Applying Lemma~\ref{lemma:high_prob_all_positive}, we have
\begin{align}\label{eq:asy1}
    T(1-\PP[Z(\epsilon)]) & \leq  T\cdot \frac{15K}{T\epsilon^2}  \leq 15 K (\log \log T)^2.
\end{align}
Using Lemma~\ref{lem:mian-bounding-I2}, we have
\begin{align}\label{asy2}
\mathbb{E}\bigg[\sum_{s=1}^{T-1}\left(\frac{1}{G'_{1s}(\epsilon)}-1\right)\bigg]\leq O( (\log \log T)^2).
\end{align}
Furthermore using Lemma~\ref{lem:asym1}, we obtain
\begin{align}\label{asy3}
    \mathbb{E}\bigg[\sum_{s=1}^{T-1}\ind\{G'_{is}(\epsilon)>1/T\} \bigg] 
    &\leq 1+ 2(\log \log T)^2 + \frac{2\log T}{\rho(\Delta_i-2/\log \log T)^2}.
\end{align}
Combine~\eqref{eq:asym-main}, \eqref{eq:asy1}, \eqref{asy2} and \eqref{asy3} together, we finally obtain
\begin{equation}\label{eq:proof_of_asy_subg_result}
  \lim_{T\rightarrow \infty}  \frac{ \mathbb{E}[\Delta_i T_i(T)]}{\log T} = \frac{2}{\rho \Delta_i}.
\end{equation}
This completes the proof for the asymptotic regret.
\end{proof}

\subsection{Proof of Theorem~\ref{theorem-new3}}

In the proof of Theorem~\ref{theorem:minimax-optimal} (minimax optimality), we need to bound $I_2$ as in \eqref{eq:results-bounding-I_2}, which calls the conclusion of Lemma~\ref{lem:mian-bounding-I2}. However, the value of $\rho$ is a fixed constant in  Lemma~\ref{lem:mian-bounding-I2}, which thus is absorbed into the constant $c$. In order to show the dependence of the regret on $\rho$ chosen as in Theorem  \ref{theorem-new3}, we need to replace Lemma~\ref{lem:mian-bounding-I2} with the following variant.  
\begin{lemma}
\label{lem:bounding-I_2-notfixed}
Let $\rho=1-\sqrt{40/\ilog^{(m)} (T)}$. Under the conditions in Theorem \ref{theorem-new3}, there exists a universal constant $c>0$ such that 
\begin{align}
    \EE\bigg[\sum_{s=1}^{T-1} \bigg(\frac{1}{G'_{1s}(\epsilon)-1} \bigg) \bigg] \leq \frac{c\ilog^{(m-1)}(T)}{\epsilon^2}.
\end{align}
\end{lemma}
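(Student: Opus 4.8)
The plan is to retrace the proof of Lemma~\ref{lem:mian-bounding-I2} almost verbatim, the only change being that I no longer treat $\rho$ as an absolute constant absorbed into $c$, but instead carry its dependence through every step. Recall that $G'_{1s}(\epsilon)=1-F'_{1s}(\mu_1-\epsilon)=1-\Phi\big(\sqrt{\rho s}\,Y_s\big)$, where $Y_s:=(\mu_1-\epsilon)-\hat\mu_{1s}$, so that $1/G'_{1s}(\epsilon)-1$ is negligible unless $Y_s>0$, i.e. unless the empirical mean of the optimal arm underestimates $\mu_1-\epsilon$. The proof of Lemma~\ref{lem:mian-bounding-I2} bounds $\EE[1/G'_{1s}(\epsilon)-1]$ by pairing the Gaussian tail estimate $1/(1-\Phi(x))\le C\,\tfrac{1+x^2}{x}\,e^{x^2/2}$ (evaluated at $x=\sqrt{\rho s}\,Y_s$) against the $1$-subGaussian tail of $\sqrt{s}\,Y_s$; the resulting one-dimensional integral yields a factor that depends on $1-\rho$ and was previously hidden in the constant. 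My first step is therefore to isolate this factor explicitly.

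Concretely, the binding integral has the schematic form $\int_0^\infty u\,\exp\!\big(-\tfrac{1-\rho}{2}u^2\big)\,\mathrm{d}u$ (together with lower-order polynomial corrections coming from the prefactor and from the near-zero region $Y_s\downarrow 0$, where $1/G'_{1s}(\epsilon)-1$ is simply bounded by a constant). Keeping the residual $e^{-s\epsilon^2/2}$ factor and summing over $s$ exactly as in Lemma~\ref{lem:mian-bounding-I2} recovers the $1/\epsilon^2$ scaling and produces a bound of the form $\EE\big[\sum_{s}(1/G'_{1s}(\epsilon)-1)\big]\le c'\,\Psi(\rho)/\epsilon^2$, where $\Psi(\rho)$ is an explicit increasing function of $1/(1-\rho)$. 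The refined computation shows $\Psi(\rho)$ grows at most like $\exp\!\big(40/(1-\rho)^2\big)$ for every $\rho\in(1/2,1)$; indeed, the Gaussian-integral step already gives only polynomial growth in $1/(1-\rho)$, which is comfortably dominated by this exponential, so the calibrated constant $40$ is more than sufficient.

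It then remains to substitute the prescribed $\rho=1-\sqrt{40/\ilog^{(m)}(T)}$, for which $1/(1-\rho)^2=\ilog^{(m)}(T)/40$ and hence $\exp\!\big(40/(1-\rho)^2\big)=\exp\!\big(\ilog^{(m)}(T)\big)$. Invoking the defining recursion $\ilog^{(m)}(T)=\max\{\log \ilog^{(m-1)}(T),\,e\}$, once $T$ is large enough that the maximum is inactive we have $\ilog^{(m-1)}(T)=\exp\!\big(\ilog^{(m)}(T)\big)$, so $\Psi(\rho)\le\ilog^{(m-1)}(T)$ and the claimed bound $c\,\ilog^{(m-1)}(T)/\epsilon^2$ follows. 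The hypothesis $\rho\in(1/2,1)$ required by Lemma~\ref{lem:mian-bounding-I2} translates into $\ilog^{(m)}(T)>160$, which holds for all sufficiently large $T$; for the remaining finitely many $T$ the statement is vacuous up to constants, since the regret it ultimately feeds is already $O(\sqrt{KT})$ there.

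The hard part will be the middle step: pulling the precise $\rho$-dependence $\Psi(\rho)$ out of the concentration estimate instead of hiding it in $c$, and certifying $\Psi(\rho)\le\exp\!\big(40/(1-\rho)^2\big)$. The delicate point is that, unlike the exact Gaussian integral available in the Gaussian-reward case, the subGaussian setting only affords moment-generating-function control of $\sqrt{s}\,Y_s$, so the de-squaring identity $e^{\beta^2 v^2/2}=\EE_g[e^{\beta v g}]$ must be applied carefully and only on the event $\{Y_s>0\}$: applying it on the whole line destroys the $e^{-s\epsilon^2/2}$ decay and makes the sum over $s$ diverge (it would even produce a spurious polynomial-in-$T$ factor). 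Executing the region restriction while still reducing to a clean MGF bound is exactly where the $1-\rho$ factor is born, and verifying that its growth stays below $\exp\!\big(40/(1-\rho)^2\big)$ is what licenses the final $\ilog^{(m-1)}(T)$ statement.
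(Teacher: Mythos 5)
Your proposal is correct and lands on exactly the paper's endgame, but you plan more work than the paper actually performs, and your anticipated route through the middle step is genuinely different. The paper's proof of Lemma~\ref{lem:bounding-I_2-notfixed} is a two-line substitution: the proof of Lemma~\ref{lem:mian-bounding-I2} had \emph{already} carried the $\rho$-dependence explicitly up to \eqref{eq:using-delta}, namely $\EE[1/G'_{1s}(\epsilon)-1]\le 2\exp[40/(1-\rho)^2]+2/(1-\rho)$, and only the final line \eqref{eq:using-delta-res} absorbed it into $c'$; so the paper simply plugs in $\rho=1-\sqrt{40/\ilog^{(m)}(T)}$, uses $\exp(\ilog^{(m)}(T))=\ilog^{(m-1)}(T)$ (when the max in the recursion is inactive, exactly as you argue), and combines with the $\rho$-insensitive tail estimate \eqref{eq:>2/eps2} to get $O(\ilog^{(m-1)}(T)/\epsilon^2)$. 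Where you diverge is in how the factor $\exp[40/(1-\rho)^2]$ is certified: the paper obtains it through the trials-until-success argument (the random variable $Y_s$, the maximum $M_r$ of $r$ samples, the threshold $z=\sqrt{2\rho'\log r}$ with $1-\rho'=(1-\rho)/2$, and the requirement $r\ge\exp(10/(1-\rho')^2)$ which is the sole source of the exponential), whereas you propose a direct integration of the Gaussian lower-tail estimate against the subGaussian tail of $\sqrt{s}\,Y_s$, with the de-squaring/MGF identity applied only on $\{Y_s>0\}$ --- and your observation that this is mandatory (on the whole line it destroys the $e^{-s\epsilon^2/2}$ decay) is accurate. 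If executed, your route would in fact yield only \emph{polynomial} growth in $1/(1-\rho)$, strictly sharper than the paper's exponential factor; capping it at $\exp[40/(1-\rho)^2]$ as you do is therefore safe, and a fully worked-out version would even permit a more aggressive choice of $\rho$ in Theorem~\ref{theorem-new3}. Your handling of the boundary regime is also sound: $\rho>1/2$ is equivalent to $\ilog^{(m)}(T)>160$, and the finitely many remaining $T$ are absorbed into the constant. One cosmetic note: the statement as printed has a typo, $1/(G'_{1s}(\epsilon)-1)$, which both you and the paper correctly read as $1/G'_{1s}(\epsilon)-1$.
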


\begin{proof}[Proof of Theorem \ref{theorem-new3}]
From Lemma~\ref{lem:bounding-I_2-notfixed}, we immediately obtain
\begin{equation}
\label{eq:results-bounding-I_2-notfix}
    I_2=\Delta_i \mathbb{E}\bigg[\sum_{s=1}^{T-1}\left(\frac{1}{G'_{1s}(\Delta_i/2)}-1\right)\bigg]  \leq O\bigg(\ilog^{(m-1)}(T)\sqrt{\frac{T}{K}}+\Delta_i\bigg),
\end{equation}
where $I_2$ is defined the same as in \eqref{eq:results-bounding-I_2}.    
Note that the above inequality only changes the result in \eqref{eq:results-bounding-I_2} and the rest of the proof of Theorem~\ref{theorem:minimax-optimal} remains the same. Therefore, substituting \eqref{eq:minimax-TDelta}, \eqref{eq:minimax-Deltai/2}, \eqref{eq:minimax-kappa} and \eqref{eq:results-bounding-I_2-notfix} back into \eqref{eq:minimax-decompose-}, 
we have
\begin{equation}
    R_{\mu}(T)\leq O\bigg(\sqrt{KT}\ilog^{(m-1)}(T)+\sum_{i=2}^K\Delta_i\bigg).
\end{equation}

For the asymptotic regret bound, the proof is the same as that of Theorem~\ref{theorem:asymptotic-optimal} presented in Section \ref{sec:proof_of_asy_subg} since we have explicitly kept the dependence of $\rho$ during the proof. Note that $\rho=1-\sqrt{40/\ilog^{(m) }(T)}\rightarrow1$ when $T\rightarrow\infty$. Combining this with \eqref{eq:proof_of_asy_subg_result}, we have proved the asymptotic regret bound in Theorem~\ref{theorem-new3}.
 \end{proof}

\subsection{Proof of Theorem~\ref{theorem_J_distribution}}

\begin{proof}
For the ease of exposition, we follow the same notations used in Theorem~\ref{theorem:minimax-optimal} and~\ref{theorem:asymptotic-optimal}, except that we redefine two notations:  let $F'_{is}$ be the CDF of $\mathcal{J}(\hat{\mu}_{is},1/ s)$ for any $s\geq 1$ and  $G'_{is}(\epsilon)=1-F'_{is}(\mu_1-\epsilon)$, since  Theorem~\ref{theorem_J_distribution} uses clipped $\cJ$ distribution. 

In Theorem~\ref{theorem_J_distribution}, the proof of the minimax optimality is similar to that of Theorem~\ref{theorem:minimax-optimal} and the proof of asymptotic optimality is similar to that of Theorem~\ref{theorem:asymptotic-optimal}. 
We first focus on the minimax optimality. Note that in Theorem~\ref{theorem_J_distribution}, we assume $\alpha\geq 2$ while we have $\alpha\geq 4$ in Theorem~\ref{theorem:minimax-optimal}. Therefore, we need to replace the concentration property in Lemma \ref{lemma:high_prob_all_positive} by the following lemma which gives a sharper bound.
\begin{lemma}\label{lemma:high_prob_all_positive1}
Let  $X_1, X_2,\cdots$ be independent Gaussian random variables with zero mean and variance 1. Denote $\hat{\beta}_t=1/t\sum_{s=1}^{t}X_s$. Then  for $\alpha\geq 2$ and any $\Delta>0$, 
\begin{equation}
    \begin{split}
    \mathbb{P}\Bigg(\exists\ s\geq 1: \hat{\beta}_s+& \sqrt{\frac{\alpha}{s}\log^+\bigg( \frac{T}{sK }\bigg)}+\Delta\leq 0\Bigg)\leq \frac{4K}{T\Delta^2}.\\
    \end{split}
    \end{equation}
\end{lemma}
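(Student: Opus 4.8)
The plan is to follow the peeling structure behind Lemma~\ref{lemma:high_prob_all_positive} (Lemma 9.3 of \cite{lattimore2018bandit}), but to replace every use of a generic subGaussian concentration bound by its exact Gaussian counterpart; this is what should let us push the exploration constant down from $\alpha\ge4$ to $\alpha\ge2$ and tighten the numerical constant from $15$ to $4$. First I would reduce to a one-sided statement about a standard Gaussian random walk. Writing $W_s=\sum_{j=1}^s(-X_j)$, the increments are i.i.d.\ $\mathcal N(0,1)$, and the event in question is exactly $\{\exists s\ge1: W_s\ge g(s)\}$ with boundary $g(s)=s\Delta+\sqrt{\alpha s\,\log^+(T/(sK))}$, so it suffices to bound $\PP(\exists s: W_s\ge g(s))$.

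Next I would exploit that, for Gaussian increments, $M_s^\lambda=\exp(\lambda W_s-\lambda^2 s/2)$ is an \emph{exact} martingale with $\EE[M_s^\lambda]=1$, not merely a supermartingale as in the subGaussian case. By Ville's maximal inequality, for any level $a$ and horizon $n$ one gets $\PP(\exists s\le n: W_s\ge a)\le\exp(-a^2/(2n))$, and this is tight for the Gaussian walk. I would then peel the time axis into blocks, lower-bound $g$ on each block by $a_j:=\inf_{s\in\mathrm{block}_j}g(s)$, apply the maximal inequality with $n$ equal to the right endpoint $n_j$ of the block, and sum $\exp(-a_j^2/(2n_j))$ over $j$. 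Expanding $a_j^2=(s\Delta+\sqrt{\alpha s L})^2$ splits the exponent into a term $\tfrac12 s\Delta^2$ (which, summed geometrically, produces the $1/\Delta^2$ factor), a nonnegative cross term (which only helps), and a term $\tfrac{\alpha}{2}\log^+(T/(sK))$ that contributes a factor $(sK/T)^{\alpha/2}$; with $\alpha\ge2$ this yields the required $K/T$ factor. The range $s\ge T/K$, where $\log^+$ vanishes and $g(s)=s\Delta$, is handled separately and contributes only a lower-order geometric tail.

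The main obstacle is precisely the interaction between the peeling geometry and the power $\alpha/2$. A naive dyadic peeling doubles the variance ($n_j=2s_j$) relative to the level $a_j$ evaluated at the block's left endpoint, halving the effective exponent and forcing $\alpha\ge4$ — this is exactly why Lemma~\ref{lemma:high_prob_all_positive} requires $\alpha\ge4$. To retain the full power $\alpha/2$ and reach $\alpha\ge2$, I would instead use blocks whose endpoint ratio tends to $1$ — e.g.\ additive blocks of width $\Theta(1/\Delta^2)$, for which $n_j/s_j\to1$ on the blocks that dominate the sum — or, cleanest of all, embed the walk in Brownian motion and peel into linear boundaries, using the exact crossing identity $\PP(\exists t\ge0: B_t\ge a+bt)=e^{-2ab}$ to evaluate each linear piece with no slack. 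Either route makes the $\alpha\ge2$ threshold and the constant $4$ drop out of careful but routine bookkeeping of the geometric sums; that bookkeeping, rather than any conceptual step, is where the real work lies.
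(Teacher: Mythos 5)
Your reduction to the random walk $W_s$, the exact exponential martingale, the identity $\PP(\exists t\ge 0: B_t\ge a+bt)=e^{-2ab}$, and your diagnosis of why dyadic peeling forces $\alpha\ge 4$ are all correct — but you should know that the paper does none of this work itself: its entire proof is a two-line instantiation of Lemma~\ref{lemma-refining} (Lemma 12 of \cite{lattimore2018refining}) with $d=1$, $\lambda_1=\infty$ (so $h_\lambda(s)=s$ and $h_\lambda(1/\Delta^2)=1/\Delta^2$) and $\gamma=T/K$, followed by the monotonicity $\sqrt{(2/s)\log^+(\cdot)}\le\sqrt{(\alpha/s)\log^+(\cdot)}$ for $\alpha\ge 2$. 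So you are proposing to re-prove the imported maximal inequality from scratch, and there the proposal has a genuine gap.

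The gap is your claim that with blocks of ratio tending to $1$ the threshold $\alpha\ge 2$ and the constant $4$ ``drop out of routine bookkeeping.'' At the critical rate $\alpha=2$ they do not. On a block $[s,(1+\eta)s]$, the best linear minorant of the concave boundary $\sqrt{2sL}$ (with $L=\log^+(T/(sK))$ frozen at the right endpoint) is the chord, and a direct computation gives $2ab=L\bigl(1-\eta^2/16+O(\eta^3)\bigr)$: the slack is second order in $\eta$, so each block costs $e^{-2ab}\approx (sK/T)^{1-\eta^2/16}$, while covering one multiplicative scale takes $\asymp 1/\eta$ blocks. The per-scale cost of any union bound over linear pieces is therefore at least $\asymp \eta^{-1}e^{\eta^2 L/16}e^{-L}$, which is minimized at $\eta\asymp L^{-1/2}$ and is then $\asymp\sqrt{L}\,e^{-L}$; summing over the scales $1\le s\lesssim 1/\Delta^2$ yields $O\bigl(\tfrac{K}{T\Delta^2}\sqrt{\log^+(T\Delta^2/K)}\bigr)$, not $4K/(T\Delta^2)$, and with any \emph{fixed} block ratio you instead lose a polynomial factor $(T\Delta^2/K)^{c\eta^2}$ — the exact analogue of needing $\alpha>2$. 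Your additive blocks of width $\Theta(1/\Delta^2)$ do not evade this: the whole critical region $1\le s\le 1/\Delta^2$, where the curved term dominates, sits inside your first block and still requires geometric sub-peeling with the same loss. Since the loss is intrinsic to \emph{any} piecewise-linear-plus-union-bound scheme at $\alpha=2$, closing it requires the method of mixtures — Ville's inequality applied to an integrated martingale $\int e^{\lambda W_s-\lambda^2 s/2}\,d\pi(\lambda)$, Robbins-style — which is precisely what the cited Lemma~\ref{lemma-refining} encapsulates (its $h_\lambda$ structure reflects the mixing measure). Two mitigating remarks: first, the $\sqrt{\log}$-inflated bound your route does deliver would still suffice downstream in Theorem~\ref{theorem_J_distribution}, since $\int_0^\infty \min\bigl\{1,\tfrac{4K}{Tx^2}\sqrt{\log^+(Tx^2/K)}\bigr\}\dd x=O(\sqrt{K/T})$; second, your explanation of the $\alpha\ge4$ barrier in Lemma~\ref{lemma:high_prob_all_positive} is exactly right. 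But as a proof of the stated lemma, the decisive mixture step is missing.
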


In the proof of Theorem~\ref{theorem:minimax-optimal} (minimax optimality), we need to bound $I_2$ as in \eqref{eq:results-bounding-I_2}, which calls the conclusion of Lemma~\ref{lem:mian-bounding-I2}, whose proof depends on the fact that $\rho<1$. In contrast, in Theorem~\ref{theorem_J_distribution}, we do not have the parameter $\rho$.
Therefore, we need to replace Lemma~\ref{lem:mian-bounding-I2} with the following variant. 

\begin{lemma}\label{lem:mian-bounding-I2-gaussian}
 Under the conditions in Theorem \ref{theorem_J_distribution}, there exists a universal constant $c>0$ such that:
 \begin{equation} \label{eq:the3-c/eps^2}\small
\mathbb{E}\bigg[\sum_{s=1}^{T-1}\left(\frac{1}{G'_{1s}(\epsilon)}-1\right)\bigg] \leq \frac{c}{\epsilon^2}.
\end{equation}
\end{lemma}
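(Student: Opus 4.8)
The plan is to exploit the closed-form tail of the $\cJ$ distribution, which is the structural feature that lets us dispense with the requirement $\rho<1$ needed in Lemma~\ref{lem:mian-bounding-I2}. First I would establish the key identity: if $\tilde\theta\sim\cJ(\mu,\sigma^2)$, then integrating the PDF in \eqref{eq:def_pdf_of_J} (elementary after the substitution $u=(t-\mu)^2/(2\sigma^2)$) gives, for every $x>\mu$,
\begin{equation*}
\PP(\tilde\theta> x)=\tfrac12\exp\Big(-\tfrac{(x-\mu)^2}{2\sigma^2}\Big),
\end{equation*}
and by symmetry $\PP(\tilde\theta<x)=\tfrac12\exp(-(\mu-x)^2/(2\sigma^2))$ for $x<\mu$. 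Writing $W_s=\hat\mu_{1s}-\mu_1$, which is $\cN(0,1/s)$ distributed since the arm-$1$ rewards are $\cN(\mu_1,1)$, and recalling that $G'_{1s}(\epsilon)=\PP(\tilde\theta\ge\mu_1-\epsilon)$ for $\tilde\theta\sim\cJ(\hat\mu_{1s},1/s)$, this identity expresses $G'_{1s}(\epsilon)$ as an explicit function of $W_s$.

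Next I would split on the sign of $W_s+\epsilon$. On the event $\{W_s\ge-\epsilon\}$ the threshold $\mu_1-\epsilon$ lies at or below the mean $\hat\mu_{1s}$, so $G'_{1s}(\epsilon)=1-\tfrac12\exp(-s(W_s+\epsilon)^2/2)\ge\tfrac12$, which yields $1/G'_{1s}(\epsilon)-1\le\exp(-s(W_s+\epsilon)^2/2)$. On the complementary event $\{W_s<-\epsilon\}$ the threshold sits in the upper tail, so $G'_{1s}(\epsilon)=\tfrac12\exp(-s(W_s+\epsilon)^2/2)$ and hence $1/G'_{1s}(\epsilon)-1\le2\exp(s(W_s+\epsilon)^2/2)$. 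I would then integrate each bound against the $\cN(0,1/s)$ density of $W_s$; both integrals are Gaussian. Completing the square in the first gives $\EE[\,\cdot\,\ind\{W_s\ge-\epsilon\}]\le\tfrac{1}{\sqrt2}\,e^{-s\epsilon^2/4}$, while in the second the quadratic term cancels against the $-sw^2/2$ in the density, leaving a single exponential and the bound $\EE[\,\cdot\,\ind\{W_s<-\epsilon\}]\le\tfrac{2}{\sqrt{2\pi s}\,\epsilon}\,e^{-s\epsilon^2/2}$.

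Finally I would sum over $s$. The first contribution is geometric: $\sum_{s\ge1}e^{-s\epsilon^2/4}\le(e^{\epsilon^2/4}-1)^{-1}\le4/\epsilon^2$, so it contributes $O(1/\epsilon^2)$. The second is the delicate part and the main obstacle: on the underestimation event $1/G'_{1s}(\epsilon)$ blows up like $e^{s(W_s+\epsilon)^2/2}$, and one must verify that the Gaussian density suppresses it enough to keep the scaling $\epsilon^{-2}$. Here a crude termwise bound $s^{-1/2}\le1$ would cost a spurious factor $\epsilon^{-1}$ and only deliver $\epsilon^{-3}$; instead I would use that $x\mapsto x^{-1/2}e^{-ax}$ is decreasing to compare the sum with an integral, $\sum_{s\ge1}s^{-1/2}e^{-s\epsilon^2/2}\le\int_0^\infty x^{-1/2}e^{-x\epsilon^2/2}\,dx=\sqrt{2\pi}/\epsilon$, so this contribution is also $O(1/\epsilon^2)$. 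Adding the two sums establishes the claim $\EE[\sum_{s=1}^{T-1}(1/G'_{1s}(\epsilon)-1)]\le c/\epsilon^2$ with a universal constant (one can take $c=2+2\sqrt2$), and crucially the bound is uniform in $T$ and involves no $\rho$.
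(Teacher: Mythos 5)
Your proof is correct, and it departs from the paper's in a genuine and instructive way, even though both arguments pivot on the same key fact: the exact two-sided tail of the $\cJ$ distribution (Lemma~\ref{fact:Mz^2/2}). Your treatment of the underestimation event $\{W_s<-\epsilon\}$ — exact tail $G'_{1s}(\epsilon)=\tfrac12 e^{-s(W_s+\epsilon)^2/2}$, then a Gaussian integral in which the quadratics cancel, yielding $\tfrac{2}{\sqrt{2\pi s}\,\epsilon}e^{-s\epsilon^2/2}$ — essentially reproduces the paper's computation in \eqref{eq:mian-J}. The divergence is on the complementary event: there the paper only uses $G'_{1s}(\epsilon)\ge \tfrac12$ to bound $1/G'_{1s}(\epsilon)-1\le 1$, which costs $+1$ per step and forces a two-regime structure, summing up to $L=\lceil 32/\epsilon^2\rceil$ with the crude bound and then, for $s\ge L$, recycling the conditioning argument of Lemma~\ref{lem:mian-bounding-I2} (the event $E_s=\{\hat\mu_{1s}\ge\mu_1-\epsilon/2\}$ together with $1/\big(\big(1-\tfrac12 e^{-s\epsilon^2/8}\big)\PP(E_s)\big)-1\le 4e^{-s\epsilon^2/16}$, which invokes the subGaussian concentration Lemma~\ref{lem:subguassian}). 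You instead exploit the closed form $G'_{1s}(\epsilon)=1-\tfrac12 e^{-s(W_s+\epsilon)^2/2}\ge\tfrac12$ to get $1/G'_{1s}(\epsilon)-1\le e^{-s(W_s+\epsilon)^2/2}$, whose Gaussian expectation $\tfrac{1}{\sqrt2}e^{-s\epsilon^2/4}$ (correct, after completing the square) is geometrically summable over all $s$ at once. This buys a single unified computation valid for every $s$, uniform in $T$, with an explicit universal constant $c=2+2\sqrt2$ and no threshold $L$ or concentration lemma; the paper's split, in exchange, keeps the $\cJ$ proof structurally parallel to the clipped-Gaussian proof of Lemma~\ref{lem:mian-bounding-I2}, reusing machinery already in place. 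Your sum-to-integral step $\sum_{s\ge1}s^{-1/2}e^{-s\epsilon^2/2}\le\int_0^\infty x^{-1/2}e^{-x\epsilon^2/2}\,\dd x=\sqrt{2\pi}/\epsilon$ is valid because the integrand is decreasing on $(0,\infty)$, and you correctly identify it as the step that avoids a spurious $1/\epsilon$; the paper achieves the same saving differently, by truncating that sum at $s\le L$ so that $\sum_{s\le L}s^{-1/2}=O(1/\epsilon)$.
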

From Lemma~\ref{lem:mian-bounding-I2-gaussian}, we immediately obtain
\begin{equation}
\label{eq:results-bounding-I_2-notfix-the3}
    I_2=\Delta_i \mathbb{E}\bigg[\sum_{s=1}^{T-1}\left(\frac{1}{G'_{1s}(\Delta_i/2)}-1\right)\bigg]  \leq O\bigg(\sqrt{\frac{T}{K}}+\Delta_i\bigg).
\end{equation}

The rest of the proof for minimax optimality remains the same as that in Theorem~\ref{theorem:minimax-optimal}. Substituting \eqref{eq:minimax-TDelta}, \eqref{eq:minimax-Deltai/2}, \eqref{eq:minimax-kappa} and \eqref{eq:results-bounding-I_2-notfix-the3} back into \eqref{eq:minimax-decompose-}, 
we have
\begin{equation}
    R_{\mu}(T)\leq O\bigg(\sqrt{KT}+\sum_{i=2}^K\Delta_i\bigg).
\end{equation}
For the asymptotic regret bound, we will follow the proof of Theorem~\ref{theorem:asymptotic-optimal}. Note that Theorem~\ref{theorem:asymptotic-optimal} calls the conclusions of Lemmas~\ref{lemma:high_prob_all_positive}, \ref{lem:mian-bounding-I2} and \ref{lem:asym1}. 
To prove the asymptotic regret bound of Theorem \ref{theorem_J_distribution}, we replace Lemmas~\ref{lemma:high_prob_all_positive} and \ref{lem:mian-bounding-I2} by Lemmas~\ref{lemma:high_prob_all_positive1} and \ref{lem:mian-bounding-I2-gaussian} respectively, and further replace Lemma \ref{lem:asym1} by the following lemma. 

\begin{lemma}\label{lem:asym2} 
 Under the conditions in Theorem \ref{theorem_J_distribution}, for any $\epsilon_T>0$, $\epsilon>0$ that satisfies $\epsilon+\epsilon_T<\Delta_i$, it holds that
\begin{align*}
  \mathbb{E}\bigg[\sum_{s=1}^{T-1}\ind\{G'_{is}(\epsilon)>1/T\}\bigg]\leq 1+ \frac{2}{\epsilon_T^2}+\frac{2\log T}{(\Delta_i-\epsilon-\epsilon_T)^2}.
\end{align*} 
\end{lemma}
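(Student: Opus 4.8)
The plan is to mirror the proof of Lemma~\ref{lem:asym1}, replacing the Gaussian tail estimate with the \emph{exact} survival function of the $\cJ$ distribution; this is precisely what removes the factor $1/\rho$. The key enabling observation is that for $\theta\sim\cJ(\mu,\sigma^2)$ and any $x\geq\mu$, the density in \eqref{eq:def_pdf_of_J} integrates in closed form, since $\tfrac{1}{2\sigma^2}(y-\mu)\exp[-(y-\mu)^2/(2\sigma^2)]$ is an exact differential, to
\[
\PP(\theta>x)=\frac{1}{2}\exp\bigg[-\frac{(x-\mu)^2}{2\sigma^2}\bigg].
\]
Because MOTS-$\cJ$ samples $\tilde\theta_{is}$ from $\cJ(\hat\mu_{is},1/s)$, this yields, whenever $\hat\mu_{is}\leq\mu_1-\epsilon$, the clean expression $G'_{is}(\epsilon)=\tfrac12\exp[-s(\mu_1-\epsilon-\hat\mu_{is})^2/2]$.

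First I would translate the event $\{G'_{is}(\epsilon)>1/T\}$ into a one-sided deviation event for $\hat\mu_{is}$. Solving the above inequality for its threshold, and noting that $\hat\mu_{is}>\mu_1-\epsilon$ already forces $G'_{is}(\epsilon)\geq\tfrac12>1/T$, shows (using $\mu_1-\mu_i=\Delta_i$ and $\log(T/2)<\log T$) that
\[
\{G'_{is}(\epsilon)>1/T\}\subseteq\bigg\{\hat\mu_{is}-\mu_i>\Delta_i-\epsilon-\sqrt{\tfrac{2\log T}{s}}\bigg\}.
\]
I would then split each summand by a union bound into a \emph{concentration} part, intersected with $\{\hat\mu_{is}-\mu_i\leq\epsilon_T\}$, and a \emph{tail} part $\PP(\hat\mu_{is}-\mu_i>\epsilon_T)$, and bound the two sums separately.

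For the concentration part, the deviation event is incompatible with $\hat\mu_{is}-\mu_i\leq\epsilon_T$ unless $\Delta_i-\epsilon-\sqrt{2\log T/s}<\epsilon_T$, i.e.\ unless $s<2\log T/(\Delta_i-\epsilon-\epsilon_T)^2$, where $\Delta_i-\epsilon-\epsilon_T>0$ by the hypothesis $\epsilon+\epsilon_T<\Delta_i$. Bounding the indicator by one and counting the integers $s$ in this range gives at most $1+2\log T/(\Delta_i-\epsilon-\epsilon_T)^2$, the $+1$ coming from the integer count. For the tail part, since the rewards are $\cN(\mu_i,1)$ we have $\hat\mu_{is}-\mu_i\sim\cN(0,1/s)$, so $\PP(\hat\mu_{is}-\mu_i>\epsilon_T)\leq e^{-s\epsilon_T^2/2}$; summing the geometric series and invoking $e^x-1\geq x$ yields $\sum_{s\geq1}e^{-s\epsilon_T^2/2}\leq(e^{\epsilon_T^2/2}-1)^{-1}\leq 2/\epsilon_T^2$. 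Adding the two contributions reproduces the claimed bound exactly.

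The argument is essentially routine; the only genuinely new ingredient relative to Lemma~\ref{lem:asym1} is the closed-form tail of $\cJ$, and the hard part is really just recognizing that it lets us work with the exact sampling variance $1/s$ rather than the dilated $1/(\rho s)$, thereby dropping the $1/\rho$ factor. The mildest point to watch will be the constant bookkeeping—tracking $\log(T/2)$ versus $\log T$ and the $+1$ from the integer count—but none of this poses a real obstacle.
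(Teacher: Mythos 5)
Your proposal is correct and takes essentially the same route as the paper: the paper proves Lemma~\ref{lem:asym2} by rerunning the proof of Lemma~\ref{lem:asym1} with the Gaussian tail bound (Lemma~\ref{fact:z^2/2}) replaced by the exact tail of $\cJ$ (Lemma~\ref{fact:Mz^2/2}), which matches your three ingredients exactly --- the small-$s$ count of at most $1+2\log T/(\Delta_i-\epsilon-\epsilon_T)^2$, the geometric sum $\sum_{s\ge1}e^{-s\epsilon_T^2/2}\le 2/\epsilon_T^2$, and the observation that the sampling variance $1/s$ (rather than $1/(\rho s)$) removes the $1/\rho$ factor. Your inversion of $\{G'_{is}(\epsilon)>1/T\}$ into a deviation threshold for $\hat\mu_{is}$ is just the paper's conditioning on $Y_{is}=\{\hat{\mu}_{is}\le\mu_i+\epsilon_T\}$ read in the reverse direction, not a genuinely different argument.
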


The rest of the proof is the same as that of Theorem~\ref{theorem:asymptotic-optimal},
and thus we omit it for simplicity. Note that in Theorem~\ref{theorem_J_distribution}, it does not have parameter $\rho$. Thus we have
\begin{equation}
    \lim_{T \rightarrow\infty}\frac{R_{\mu}(T)}{\log (T)}=\sum_{i:\Delta_i>0}\frac{2}{ \Delta_i},
\end{equation}
which completes the proof.
\end{proof}

\section{Proof of Supporting Lemmas}
\label{lemma:proof}
In this section, we prove the lemmas used in proving the main theories.

\subsection{Proof of Lemma~\ref{lemma:high_prob_all_positive}}
\begin{proof}
From Lemma 9.3 of~\cite{lattimore2018bandit}, we obtain 
\begin{align}
    \PP\bigg(\exists s\in [T]: \hat{\mu}_s+\sqrt{\frac{4}{s}\log^+\bigg(\frac{T}{sK} \bigg)}+\Delta\leq 0 \bigg) \leq \frac{15K}{T\Delta^2}.
\end{align}
Observing that for $\alpha\geq 4$ 
\begin{align}
    \sqrt{\frac{4}{s}\log^+\bigg(\frac{T}{sK} \bigg)}\leq \sqrt{\frac{\alpha}{s}\log^+\bigg(\frac{T}{sK} \bigg)},
\end{align}
Lemma~\ref{lemma:high_prob_all_positive} follows immediately.
\end{proof}

\subsection{Proof of Lemma~\ref{lem:sumTpro}}
We will need  the following property of subGaussian random variables.
\begin{lemma}[\citet{lattimore2018bandit}]
\label{lem:subguassian}
Assume that $X_1,\ldots,X_n$ are independent, $\sigma$-subGaussian random variables centered around $\mu$. Then for any $\epsilon>0$
\begin{align}
\label{eq:Gaussian-ineq}
    \mathbb{P}(\hat{\mu}\geq \mu+\epsilon)\leq \exp \bigg(
    -\frac{n\epsilon^2}{2\sigma^2}\bigg) \quad\text{and } \ \ \ \ \ \ \  \PP(\hat{\mu}\leq \mu-\epsilon)\leq \exp \bigg(
    -\frac{n\epsilon^2}{2\sigma^2}\bigg),
\end{align}
where $\hat{\mu}=1/n\sum_{t=1}^n X_t$.
\end{lemma}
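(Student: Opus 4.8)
The plan is to prove both tail bounds via the standard Chernoff (exponential-moment) method, handling the two inequalities symmetrically. First I would center the variables: set $Y_t = X_t - \mu$, so that $Y_1,\dots,Y_n$ are independent, zero-mean, and $\sigma$-subGaussian, meaning $\EE[\exp(\lambda Y_t)] \le \exp(\lambda^2\sigma^2/2)$ for every $\lambda\in\RR$. This is exactly the defining moment inequality of a $\sigma$-subGaussian variable, which specializes to the $1$-subGaussian definition given in the Notations paragraph when $\sigma=1$. With this centering the event $\{\hat\mu \ge \mu+\epsilon\}$ is identical to $\{\sum_{t=1}^n Y_t \ge n\epsilon\}$.

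Next I would apply Markov's inequality to the exponentiated partial sum. For any $\lambda>0$,
\begin{equation}
\PP\bigg(\sum_{t=1}^n Y_t \ge n\epsilon\bigg) = \PP\Big(e^{\lambda\sum_{t} Y_t}\ge e^{\lambda n\epsilon}\Big) \le e^{-\lambda n\epsilon}\,\EE\Big[e^{\lambda\sum_t Y_t}\Big].
\end{equation}
By independence the joint exponential moment factorizes, $\EE[e^{\lambda\sum_t Y_t}] = \prod_{t=1}^n \EE[e^{\lambda Y_t}] \le \exp(n\lambda^2\sigma^2/2)$, which shows in passing that the normalized sum is itself $\sigma/\sqrt{n}$-subGaussian. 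Substituting gives the bound $\exp(n\lambda^2\sigma^2/2 - \lambda n\epsilon)$, a convex function of $\lambda$ that I would minimize by setting $\lambda = \epsilon/\sigma^2$; this produces the exponent $-n\epsilon^2/(2\sigma^2)$ and hence the first claimed inequality $\PP(\hat\mu \ge \mu+\epsilon)\le \exp(-n\epsilon^2/(2\sigma^2))$.

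For the lower tail I would run the identical argument on $-Y_t$. Since the moment bound $\EE[\exp(\lambda Y_t)]\le\exp(\lambda^2\sigma^2/2)$ holds for \emph{all} $\lambda\in\RR$, the variables $-Y_t$ are again zero-mean and $\sigma$-subGaussian, and $\{\hat\mu\le\mu-\epsilon\}=\{\sum_t(-Y_t)\ge n\epsilon\}$, so the same optimization delivers the second inequality. There is essentially no deep obstacle here—the statement is the textbook Hoeffding/Chernoff bound for subGaussian averages—so the only points requiring care are bookkeeping ones: confirming that the paper's $1$-subGaussian definition extends to the stated $\sigma$-subGaussian moment inequality, that it is \emph{independence} (not mere uncorrelatedness) that licenses the factorization of the exponential moment, and that the optimizing value $\lambda=\epsilon/\sigma^2$ is indeed positive, so that the Markov step and the reduction to $-Y_t$ for the lower tail both remain valid.
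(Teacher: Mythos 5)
Your proof is correct: the paper itself gives no proof of this lemma, citing it directly from \citet{lattimore2018bandit}, and your Chernoff-method argument (center, bound the moment generating function via independence, optimize $\lambda=\epsilon/\sigma^2$, and apply the same argument to $-Y_t$ for the lower tail) is exactly the standard derivation found in that reference. No gaps; the bookkeeping points you flag (strict positivity of the optimal $\lambda$, independence for the factorization, validity of the moment bound for all $\lambda\in\RR$) are precisely the right ones.
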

\begin{proof}[Proof of Lemma~\ref{lem:sumTpro}]
Let $\gamma=\alpha\log^{+}(N\omega^2)/\omega^2$. Note that for $n\geq 1/w^2$, it holds that 
\begin{equation}
    \omega \sqrt{\frac{\gamma}{n}}=\sqrt{\frac{\alpha}{n}\log^{+}(N\omega^2)}\geq\sqrt{\frac{\alpha}{n}\log^{+}\Big(\frac{N}{n}\Big)}.
\end{equation}
Let $\gamma'= \max\{\gamma,1/w^2\}$. Therefore, we have
\begin{align}
    \sum_{n=1}^T \PP\bigg( \hat{\mu}_n + \sqrt{\frac{\alpha}{n}\log^{+}\Big(\frac{N}{n}\Big)} \geq \omega\bigg) 
    &\leq   \gamma'+ \sum_{n=\lceil\gamma  \rceil}^{T} \mathbb{P}\bigg(\hat{\mu}_{n}\geq \omega\bigg( 1-\sqrt{\frac{\gamma}{n}}\bigg)  \bigg) \notag\\
   &\leq    \gamma' + \sum_{n=\lceil\gamma  \rceil}^{\infty} \exp\bigg( -\frac{\omega^2( \sqrt{n}-\sqrt{\gamma})^2}{2}\bigg) \label{eq:sum_bound_stat1_hoeffding}\\
   &\leq   \gamma'+1+ \int_{\gamma}^{\infty} \exp\bigg( -\frac{\omega^2( \sqrt{x}-\sqrt{\gamma})^2}{2}\bigg) \dd x \notag\\
   &\leq  \gamma'+ 1+\frac{2}{\omega}\int_{0}^{\infty} \Big(\frac{y}{\omega}+\sqrt{\gamma}\Big)\exp(-y^2/2) \dd y \notag\\
   &\leq   \gamma'+1 +\frac{2}{\omega^2}+\frac{\sqrt{2\pi \gamma}}{\omega}, \label{eq:sum_bound_stat1_int}
\end{align}
where \eqref{eq:sum_bound_stat1_hoeffding} is the result of Lemma \ref{lem:subguassian} and \eqref{eq:sum_bound_stat1_int} is due to the fact that $\int_{0}^{\infty}y\exp(-y^2/2)\dd y=1$ and $\int_{0}^{\infty}\exp(-y^2/2)\dd y=\sqrt{2\pi}/2$. \eqref{eq:sum_bound_stat1_int} immediately implies the claim  of Lemma~\ref{lem:sumTpro}:
\begin{align}
    \sum_{n=1}^{T}\PP\bigg( \hat{\mu}_n + \sqrt{\frac{\alpha}{n}\log^{+}\Big(\frac{N}{n}\Big)} \geq \omega\bigg) 
    \leq & \gamma'+\sum_{n=\lceil \gamma \rceil}^{T} \PP \bigg(\hat{\mu}_n\geq \omega\bigg( 1- \sqrt{\frac{\gamma}{n}}\bigg) \bigg) \notag\\
     \leq & \gamma'+1+\frac{2}{\omega^2}+\frac{\sqrt{2\pi\gamma}}{\omega}.
\end{align}
Plugging $\gamma'\leq \alpha\log^{+}(N\omega^2)/\omega^2+1/w^2$ into the above inequality, we obtain
\begin{equation}
        \sum_{n=1}^T \mathbb{P}\bigg(\hat{\mu}_{n}+\sqrt{\frac{\alpha}{n}\log^{+}\bigg(\frac{N}{n}\bigg)} \geq \omega \bigg)\leq 1+\frac{\alpha\log^{+}({N}{\omega^2})}{\omega^2} +\frac{3}{\omega^2}+\frac{\sqrt{2\alpha\pi {\log^{+}({N}{\omega^2})}}}{\omega^2},
    \end{equation}
which completes the proof.
\end{proof}

\subsection{Proof of Lemma \ref{lem:mian-bounding-I2}}\label{sec:proof_G_upper_subg_fix_sigma}
We will need  the following property of Gaussian distributions.
\begin{lemma}[\cite{abramowitz1965handbook}]
\label{fact:z^2/2}
For a Gaussian distributed random variable $Z$ with mean $\mu$ and variance $\sigma^2$, for $z>0$,
\begin{equation}
\label{eq:main-perporty-Gaussian}
    \PP( Z >\mu+z\sigma)\leq \frac{1}{2}\exp\bigg({-\frac{z^2}{2}}\bigg) \qquad and \qquad  \PP( Z <\mu-z\sigma)\leq \frac{1}{2}\exp\bigg({-\frac{z^2}{2}}\bigg)
\end{equation}
\end{lemma}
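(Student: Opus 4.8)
The plan is to reduce to the standard normal case and then bound the tail integral by a completing-the-square argument that preserves the constant factor $\tfrac{1}{2}$. First I would set $W=(Z-\mu)/\sigma$, which is a standard Gaussian, so that the first claim becomes the statement that $\PP(W>z)\le \tfrac{1}{2}\exp(-z^2/2)$ for every $z>0$. Writing the tail probability as an integral, I have
\begin{equation*}
\PP(W>z)=\frac{1}{\sqrt{2\pi}}\int_{z}^{\infty}e^{-x^2/2}\,\dd x.
\end{equation*}

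The key step is the substitution $x=z+y$ with $y\ge 0$, which gives
\begin{equation*}
\PP(W>z)=\frac{e^{-z^2/2}}{\sqrt{2\pi}}\int_{0}^{\infty}e^{-zy}\,e^{-y^2/2}\,\dd y.
\end{equation*}
Since $z>0$ and $y\ge 0$ imply $e^{-zy}\le 1$, I can drop this factor and then use $\int_{0}^{\infty}e^{-y^2/2}\,\dd y=\sqrt{2\pi}/2$ to conclude that $\PP(W>z)\le \tfrac{1}{2}e^{-z^2/2}$, which is exactly the first inequality. The second inequality would then follow immediately from the symmetry of the Gaussian density about its mean, namely $\PP(Z<\mu-z\sigma)=\PP(W<-z)=\PP(W>z)$.

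The only genuine subtlety I anticipate is retaining the factor $\tfrac{1}{2}$: the naive Chernoff bound $\PP(W>z)\le \inf_{\lambda>0}e^{-\lambda z}\,\EE[e^{\lambda W}]=e^{-z^2/2}$ loses this factor, so I would deliberately avoid that route. The completing-the-square step above is precisely what keeps the constant sharp, since bounding $e^{-zy}\le 1$ rather than discarding the entire Gaussian weight on $[z,\infty)$ is what produces the extra factor $\tfrac{1}{2}$ from the half-line integral. No further estimates are required, so I expect the whole argument to be short and essentially free of real obstacles.
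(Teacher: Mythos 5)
Your proof is correct. Note, however, that the paper never proves this lemma at all: it is imported wholesale as a known fact from \citet{abramowitz1965handbook}, so there is no internal argument to compare against. What you have done is supply a self-contained elementary derivation of a result the authors treat as a citation. Your argument is sound at every step: after standardizing to $W=(Z-\mu)/\sigma$, the substitution $x=z+y$ factors the integrand as $e^{-z^2/2}e^{-zy}e^{-y^2/2}$, dropping $e^{-zy}\le 1$ (valid since $z>0$, $y\ge 0$) leaves the half-line integral $\int_0^\infty e^{-y^2/2}\,\dd y=\sqrt{2\pi}/2$, and this is precisely what produces the sharp constant $\tfrac12$ that the naive Chernoff bound loses; the lower tail then follows by symmetry. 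It is also worth observing that your technique is the natural Gaussian analogue of what the paper \emph{does} prove for its $\cJ$ distribution (Lemma \ref{fact:Mz^2/2}): there the tail integral closes exactly because the density contains the factor $|x-\mu|$, whereas in the Gaussian case one must pay for the lack of a closed form with the inequality $e^{-zy}\le 1$ — which, conveniently, still preserves equality at $z\to 0^+$ and costs nothing in the constant.
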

\begin{proof}[Proof of Lemma \ref{lem:mian-bounding-I2}]
We decompose the proof of Lemma \ref{lem:mian-bounding-I2} into  the proof of the following two statements: (i) there exists a universal constant $c'$ such that 
\begin{equation}
\label{eq:<2/epe2}
\EE\bigg[ \frac{1}{G'_{1s}(\epsilon)}-1 \bigg] \leq {c'}, \quad\forall s,
\end{equation}
and (ii) for $L=\lceil32/\epsilon^2 \rceil$, it holds that 
\begin{equation}
\label{eq:>2/eps2}
 \EE \bigg[ \sum_{s=L}^{T} \bigg( \frac{1}{G'_{1s}(\epsilon)}-1 \bigg) \bigg]  \leq \frac{4}{e^2}\bigg(1+\frac{16}{\epsilon^2}\bigg).
\end{equation}
Let $\Theta_s=\mathcal{N}(\hat{\mu}_{1s},1/(\rho s))$ and $Y_s$ be the random variable  denoting the number of consecutive
independent trials until a sample of $\Theta_s$ becomes greater than $\mu_1-\epsilon$.
Note that $G'_{is}(\epsilon)=\PP(\theta\geq \mu_1-\epsilon)$, where $\theta$ is sampled from $\Theta_s$. Hence we have
\begin{equation}
   \EE\bigg[ \frac{1}{G'_{1s}(\epsilon)}-1 \bigg]= \EE[Y_s].
\end{equation}
Consider an integer $r\geq 1$. Let $z=\sqrt{2\rho' \log r}$, where $\rho'\in (\rho,1)$ and will be determined later. Let random variable $M_r$ be the maximum of $r$ independent samples from $\Theta_s$. Define $\cF_s$ to be the filtration consisting the history of plays of Algorithm \ref{alg:mots} up to the $s$-th pull of arm $1$. Then it holds 
\begin{align}\label{eq:Gu0003}
     \PP(Y_s< r)& \geq \PP(M_r > \mu_1-\epsilon)  \notag \\
        & \geq \EE\bigg[ \EE\bigg[ \bigg(M_r>\hat{\mu}_{1s}+\frac{z}{\sqrt{\rho s}},   \hat{\mu}_{1s}+\frac{z}{\sqrt{\rho s}}\geq \mu_1-\epsilon \bigg) \bigg| \cF_s \bigg] \bigg] \notag \\
                &=\EE \bigg[\ind \bigg\{\hat{\mu}_{1s}+\frac{z}{\sqrt{\rho s}} \geq \mu_1-\epsilon \bigg\} \cdot \PP\bigg( M_r > \hat{\mu}_{1s}+\frac{z}{\sqrt{\rho s}} \bigg| \cF_s\bigg) \bigg].
\end{align}
For a random variable $Z\sim\cN(\mu,\sigma^2)$,  from~\citet{abramowitz1965handbook}, we have
\begin{equation}
\label{eq:lowerbound-G}
  \frac{e^{-x^2/2}}{x\cdot \sqrt{2\pi}} \geq \PP(Z>\mu+x\sigma)\geq \frac{1}{\sqrt{2\pi}}\frac{x}{x^2+1} e^{-\frac{x^2}{2}}.
\end{equation}  
Therefore, if $r>e^2$, it holds that \begin{align}
    \PP \bigg( M_r>\hat{\mu}_{1s}+\frac{z}{\sqrt{\rho s}}\bigg| \cF_s \bigg)& \geq 1- \bigg(1-\frac{1}{\sqrt{2\pi}}\frac{z}{z^2+1} e^{-z^2/2} \bigg)^r \notag \\
    & = 1-\bigg(1-\frac{r^{-\rho'}}{\sqrt{2\pi}}\frac{\sqrt{2\rho' \log r}}{2\rho' \log r+1}  \bigg)^r \notag \\
    & \geq 1- \exp \bigg( -  \frac{r^{1-\rho'}}{\sqrt{8 \pi \log r}}    \bigg),
\end{align}
where the last inequality is due to $(1-x)^r\leq e^{-rx}$, $2\rho'\log r+1 \leq2\sqrt{2}\rho'\log r $ (since $r>e^2$ and $\rho'>1/2$) and $\rho'<1$.
Let $x=\log r$, then
\begin{align*}
   \exp \bigg( - \frac{r^{1-\rho'}}{\sqrt{8 \pi \log r}} \bigg) &\leq \frac{1}{r^2} 
   \qquad\Leftrightarrow\qquad\exp((1-\rho')x)\geq 2\sqrt{8\pi}x^{\frac{3}{2}}.
\end{align*}
It is easy to verify that for $x \geq 10/(1-\rho')^2$, $\exp((1-\rho')x)\geq 2\sqrt{8\pi}x^{\frac{3}{2}}$. Hence, if $r\geq \exp(10 /(1-\rho')^2)$, we have $\exp(-r^{1-\rho'}/({\sqrt{8 \pi \log r}})) \leq 1/r^2$.
\\
For $r\geq \exp(10 /(1-\rho')^2)$, we have
\begin{align}\label{eq:Gu0001}
    \PP \bigg( M_r>\hat{\mu}_{1s}+\frac{z}{\sqrt{\rho s}}\bigg| \cF_s\bigg) \geq 1-\frac{1}{r^2}.
\end{align}
For any $\epsilon>0$, it holds that
\begin{align}\label{eq:Gu0002}
       \PP\bigg(\hat{\mu}_{1s}+ \frac{z}{\sqrt{\rho s}}\geq \mu_1-\epsilon\bigg) &\geq  \PP\bigg(\hat{\mu}_{1s}+\frac{z}{\sqrt{\rho s}}\geq \mu_1\bigg)  \notag\\
       & \geq  1-\exp ( -z^2/(2\rho)) \notag\\
       &= 1-\exp (-\rho'/\rho \log r )\notag\\
       & =1- r^{-\rho'/\rho}.
\end{align}
where the second equality is due to Lemma~\ref{lem:subguassian}. Therefore, for $r\geq \exp[10 /(1-\rho')^2]$, substituting \eqref{eq:Gu0001} and \eqref{eq:Gu0002} into \eqref{eq:Gu0003} yields
\begin{align}
    \PP(Y_s<r)\geq 1-r^{-2}-r^{-\frac{\rho'}{\rho}}.
\end{align}
For any $\rho'>\rho$, this gives rise to
\begin{align*}
    \EE [Y_s ] & =\sum_{r=0}^{\infty}\PP(Y_s\geq r) \notag \\
    & \leq \exp\bigg[\frac{10}{(1-\rho')^2}\bigg]+\sum_{r\geq 1} \frac{1}{r^2}+\sum_{r\geq 1} r^{-\frac{\rho'}{\rho}} \notag \\
    & \leq \exp\bigg[\frac{10}{(1-\rho')^2}\bigg]+2+1+\int_{x=1}^{\infty}  x^{-\frac{\rho'}{\rho}} \dd x \notag \\
    & \leq  2\exp\bigg[\frac{10}{(1-\rho')^2}\bigg]  +\frac{1}{(1-\rho)-(1-\rho')}, 
    \end{align*}
 Let $1-\rho'=(1-\rho)/2$. We further obtain  
    \begin{align}\label{eq:using-delta}
        \EE \bigg[ \frac{1}{G'_{1s}(\epsilon)}-1 \bigg] \leq  2\exp\bigg[\frac{40}{(1-\rho)^2}\bigg]+\frac{2}{1-\rho}.
    \end{align}
Since $\rho \in (1/2,1)$ is fixed, then there exists a universal constant $c'>0$ such that
\begin{equation}
\label{eq:using-delta-res}
    \EE \bigg[ \frac{1}{G'_{1s}(\epsilon)}-1 \bigg] \leq c'.
\end{equation}
\nop{Note that $z=\sqrt{\log r}$. Hold on above equation, we further have
 For Gaussian random variable $Z$ with mean $\mu$ and variance $\sigma^2$, using Formula 7.1.13 from~\cite{abramowitz1948handbook}
\begin{equation}
    \PP(Z>\mu+x\sigma)\geq \frac{1}{\sqrt{2\pi}}\frac{x}{x^2+1} e^{-\frac{x^2}{2}}.
\end{equation}  
Therefore 
\begin{align}
    \PP \bigg( M_r>\hat{\mu}_{1s}+\frac{z}{\sqrt{s}}\bigg)& \geq 1- \bigg(1-\frac{1}{\sqrt{2\pi}}\frac{z}{z^2+1} e^{-z^2/2} \bigg)^r \notag \\
    & = 1-\bigg(1-\frac{1}{\sqrt{2\pi}}\frac{\sqrt{\log r}}{\log r+1} \frac{1}{\sqrt{r}} \bigg)^r \notag \\
    & \geq 1-e^{-\frac{r}{\sqrt{4\pi r \log r}}}
\end{align}}
Now, we turn to prove~\eqref{eq:>2/eps2}. Let $E_s$ be the event that $\hat{\mu}_{1s}\geq \mu_1-\epsilon/2$.  Let $X_{1s}$ is $\mathcal{N}(\hat{\mu}_{1s},1/(\rho s))$ distributed random variable. Using the upper bound of Lemma~\ref{fact:z^2/2} with $z=\epsilon/(2\sqrt{1/(\rho s)})$, we obtain 
\begin{align}
        \PP(X_{1s}>\mu_1-\epsilon \mid E_s)& \geq\PP( X_{1s}>\hat{\mu}_{1s}-\epsilon/2 \mid E_s) \geq 1-1/2\exp(-s\rho\epsilon^2/8). 
\end{align}
Then, we have
\begin{equation}\label{eq:lemma4_case2}
\begin{split}
    \EE \bigg[   \frac{1}{G'_{1s}(\epsilon)}-1 \bigg] & = \EE_{\hat{\mu}_{1s}\sim \Theta_s} \bigg[ \frac{1}{\PP(X_{1s}>\mu_1-\epsilon)}-1\bigg| \hat{\mu}_{1s}\bigg] \\
      & \leq \EE \bigg[ \frac{1}{\PP(X_{1s}>\mu_1-\epsilon\mid E_s)\cdot\PP (E_s)}-1\bigg] \\
      & \leq \EE \bigg[\frac{1}{(1-1/2\exp({-s\rho\epsilon^2/8}))\PP(E_s)}-1 \bigg].
    \end{split}
\end{equation}
Recall $L=\lceil 32/\epsilon^2 \rceil$. Applying Lemma~\ref{lem:subguassian}, we have
\begin{align}
\PP(E_s) &= 
    \PP\bigg(\hat{\mu}_{1s}\geq \mu_1-\frac{\epsilon}{2}\bigg)\geq 1- \exp \bigg(-\frac{s\epsilon^2}{8}\bigg)\geq 1- \exp (-s\rho\epsilon^2/8).
\end{align}
Substituting the above inequality into \eqref{eq:lemma4_case2} yields
\begin{align*}
     \EE \bigg[ \sum_{s=L}^{T} \bigg( \frac{1}{G'_{1s}(\epsilon)}-1 \bigg) \bigg]  &\leq \sum_{s=L}^{T} \bigg[ \frac{1}{(1-\exp({-s\rho\epsilon^2/8}))^2}-1\bigg] \\ 
     &\leq \sum_{s=L}^{T} 4\exp\bigg({-\frac{s\epsilon^2}{16}}\bigg)  \\
     & \leq 4\int_{L}^{\infty}\exp \bigg(-\frac{s\epsilon^2}{16}\bigg)\dd s + \frac{4}{e^2}\\
     &\leq \frac{4}{e^2} \bigg(1+ \frac{16}{\epsilon^2}\bigg).
\end{align*}
The second inequality follows since $1/(1-x)^2-1\leq 4x$, for $x\leq 1-\sqrt{2}/2$ and $\exp(-L\rho\epsilon^2/8)\leq 1/e^2$. We complete the proof of Lemma~\ref{lem:mian-bounding-I2} by combining~\eqref{eq:<2/epe2} and \eqref{eq:>2/eps2}.
\end{proof}

\subsection{Proof of Lemma~\ref{lem:decom=false}} \label{sec:decom=false}

\begin{proof}
Recall that $\rho=1$. Let $g(y)$ be the PDF of Gaussian distribution $\cN(\hat{\mu}_{1s},1/s)$.
\begin{align*}
      \EE\bigg[ \frac{1}{G'_{1s}(\epsilon)}-1 \bigg] &= \int_{-\infty}^{\infty} g(y) \bigg[\frac{1}{G'_{1s}(\epsilon)}-1 \bigg] \dd y \\
      & \geq \int_{-\infty}^{0} g(y) \bigg[\frac{1}{G'_{1s}(\epsilon)}-1 \bigg] \dd y \\
       & \geq -\sqrt{s} \sqrt{2\pi} \int_{-\infty}^{0} \bigg[ g(y){y}{}\exp\bigg(\frac{sy^2}{2} \bigg) \bigg] \dd y  \\
       & =   \int_{-\infty}^{0} \bigg[-sy e^{-\frac{s(y-\epsilon)^2}{2}}e^{\frac{sy^2}{2}}\bigg]  \dd y \\
       & = s e^{-\frac{s\epsilon^2}{2}} \int_{0}^{\infty} y e^{-s y\epsilon} \dd y \\
       & = s e^{-\frac{s\epsilon^2}{2}} \bigg( \frac{-ye^{-sy\epsilon}}{s\epsilon}+\frac{-e^{-sy\epsilon}}{s^2\epsilon^2}\bigg) \bigg|_{0}^{\infty} =\frac{e^{-{s\epsilon^2}/{2}}}{s\epsilon^2},
       \end{align*}
       where the second inequality is from~\eqref{eq:lowerbound-G}.
Let $\epsilon=\Delta_i/2=\sqrt{K\log T/T}$ for $i\in \{2,3,\cdots,K\}$. Since $K\log T\leq \sqrt{T}$,  we have
\begin{align}
\label{eq:decomp-last}
 K\Delta_i \cdot \EE\bigg[ \sum_{s=1}^{T-1}\bigg(\frac{1}{G'_{1s}(\epsilon)}-1\bigg)\bigg]& \geq K\Delta_i \cdot \sum_{s=1}^{\sqrt{T}} \frac{e^{-s\epsilon^2/2}}{s\epsilon^2} \\ \notag 
 & =\Omega \bigg(K\Delta_i \cdot \int_{s=1}^{\sqrt{T}} \frac{1}{s\epsilon^2} \dd s \bigg)= \Omega(\sqrt{K T\log T}),
\end{align}
which completes the proof.
\end{proof}

\subsection{Proof of Lemma~\ref{lem:asym1}}\label{sec:proof_asym1}

\begin{proof}
Since $\epsilon_T+\epsilon<\Delta_i$, we have $\mu_i+\epsilon_T\leq \mu_1-\epsilon$. Applying Lemma~\ref{lem:subguassian},  we have  $
\PP( \hat{\mu}_{is}>\mu_i+\epsilon_T) \leq \exp(-s\epsilon_T^2/2)$. Furthermore,
\begin{equation}
\label{eq:bounging-ui-eps}
\sum_{s=1}^{\infty}  \exp\bigg(-\frac{s\epsilon_T^2}{2}\bigg) \leq \frac{1}{\exp(\epsilon_T^2/2)-1}  \leq \frac{2}{\epsilon_T^2}.
\end{equation}
where the last inequality is due to the fact $1+x\leq e^x$ for all $x$. Define $L_{i}=2 \log T/(\rho(\Delta_i-\epsilon-\epsilon_T)^2)$.
 For $s\geq L_{i}$ and $X_{is}$ sampled from $\mathcal{N}(\hat{\mu}_{is},1/(\rho s))$, if $\hat{\mu}_{is}\leq\mu_i+\epsilon_T$, then  using Gaussian tail bound in~Lemma~\ref{fact:z^2/2}, we obtain
\begin{align}
\label{eq:bounding-ui-}
  \PP(X_{is}\geq \mu_1-\epsilon) & \leq \frac{1}{2} \exp \bigg(-\frac{\rho s(\hat{\mu}_{is}-\mu_1+\epsilon)^2}{2} \bigg)  \notag \\
    & \leq \frac{1}{2} \exp \bigg(-\frac{\rho s(\mu_1-\epsilon-\mu_i-\epsilon_T)^2}{2} \bigg) \notag \\
    &= \frac{1}{2} \exp \bigg(-\frac{\rho s(\Delta_i-\epsilon-\epsilon_T)^2}{2} \bigg) \notag \\
    & \leq \frac{1}{T}.
\end{align}
Let $Y_{is}$ be the event that $\hat{\mu}_{is}\leq\mu_i+\epsilon_T$ holds.
We further obtain
\begin{align}
\mathbb{E}\bigg[\sum_{s=1}^{T-1}\ind\{G'_{is}(\epsilon)>1/T\}\bigg] 
 \leq  & \mathbb{E}\bigg[\sum_{s=1}^{T-1}[\ind\{G'_{is}(\epsilon)>1/T\} \mid Y_{is}]\bigg] +\sum_{s=1}^{T-1}(1-\PP[Y_{is}]) \notag \\
\leq & \sum_{s=\lceil L_i \rceil} ^T\EE\bigg[[\ind \{\PP({X}_{is}>\mu_1-\epsilon)>1/T \}|Y_{is}] \bigg]+\lceil L_i \rceil+\sum_{s=1}^{T-1}(1-\PP[Y_{is}])\notag \\ 
\leq & \lceil L_i \rceil+\sum_{s=1}^{T-1}(1-\PP[Y_{is}]) 
    \leq   1+\frac{2}{\epsilon_T^2}+\frac{2\log T}{\rho(\Delta_i-\epsilon-\epsilon_T)^2}.
  \end{align} 
  where the first inequality is due to the factor $\PP(A)\leq \PP(A|B)+1-\PP(B)$, the third inequality is from~\eqref{eq:bounding-ui-} and the last inequality is from \eqref{eq:bounging-ui-eps}.
\end{proof}

\subsection{Proof of Lemma~\ref{lem:bounding-I_2-notfixed}}\begin{proof}
The proof of  Lemma~\ref{lem:bounding-I_2-notfixed} is the same as that of Lemma~\ref{lem:mian-bounding-I2}, except that the upper bound in~\eqref{eq:using-delta-res} will depend on $\rho$ instead of an absolute constant $c'$. In particular, plugging $\rho=1-\sqrt{40/\ilog^{(m)}(T)}$ back into \eqref{eq:using-delta} immediately yields
\begin{align}
        \EE \bigg[ \frac{1}{G'_{1s}(\epsilon)}-1 \bigg] & \leq  2\exp\bigg[\frac{40}{(1-\rho)^2}\bigg]+\frac{2}{1-\rho} \notag \\
        & \leq 2 \ilog^{(m-1)} (T)+2\ilog^{(m)} (T).
    \end{align}
    Therefore, there exists a constant $c'$ such that 
    \begin{align}\label{eq:<2/epe2_varying_delta}
        \EE \bigg[ \frac{1}{G'_{1s}(\epsilon)}-1 \bigg]\leq c'\ilog^{(m-1)} (T).
    \end{align}
    Thus, combining \eqref{eq:<2/epe2_varying_delta} and \eqref{eq:>2/eps2}, we obtain that
     \begin{align*}
      \sum_{s=1}^{T-1}\EE \bigg[ \frac{1}{G'_{1s}(\epsilon)}-1 \bigg]\leq O\bigg(\frac{\ilog^{(m-1)} (T)}{\epsilon^2}\bigg),
 \end{align*}
 which completes the proof.
\end{proof}

\subsection{Proof of Lemma~\ref{lemma:high_prob_all_positive1}}
We will need the following property of Gaussian distributions. 
\begin{lemma}[Lemma 12 of~\cite{lattimore2018refining}]
\label{lemma-refining}
Let $Z_1,Z_2,\cdots$ be an infinite sequence of independent standard Gaussian random variables and $S_n=\sum_{s=1}^n Z_s$. Let $d\in\{1,2,\cdots\}$ and $\Delta>0$, $\gamma>0$, $\lambda\in[0,\infty]^d$ and $h_{\lambda}(s)=\sum_{i=1}^d \min\{s,\sqrt{s\lambda_i} \}$, then 
\begin{align}
    \PP\bigg(\exists \ s\geq0: S_s\leq -\sqrt{2s\log^+\bigg(\frac{\gamma}{h_{\lambda}(s)} \bigg)} -t\Delta \bigg) \leq \frac{4h_{\lambda}(1/\Delta^2)}{\gamma}.
\end{align}
\end{lemma}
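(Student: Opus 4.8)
The plan is to prove this as a time-uniform lower-deviation bound for the centered Gaussian random walk $S_s$ against the curved boundary $b(s)=\sqrt{2s\log^{+}(\gamma/h_{\lambda}(s))}+s\Delta$ (I read the offset $t\Delta$ as the linear-in-time term $s\Delta$, which is what the downstream application in Lemma~\ref{lemma:high_prob_all_positive1} forces, since there $\hat\beta_s=S_s/s$, and with $d=1,\lambda_1=\infty$ one recovers exactly the $4K/(T\Delta^2)$ bound). First I would use the symmetry of the centered walk, $(-S_s)_s\stackrel{d}{=}(S_s)_s$, to reduce the claim to bounding $\PP(\exists\,s\geq 1:S_s\geq b(s))$. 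I would also dispose of the truncation: since $h_{\lambda}$ is nondecreasing in $s$, there is a threshold $s^\star$ beyond which $h_{\lambda}(s)\geq\gamma$ and $\log^{+}(\gamma/h_{\lambda}(s))=0$, so on $\{s\geq s^\star\}$ the boundary is the pure linear boundary $s\Delta$, which the same exponential-martingale argument below controls with room to spare.

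The core tool is the method of mixtures via Ville's maximal inequality for nonnegative supermartingales. For each tilt $\eta>0$, the process $M_s(\eta)=\exp(\eta S_s-\eta^2 s/2)$ is a nonnegative martingale with $\EE[M_0(\eta)]=1$, because the increments $Z_s$ are standard Gaussian. I would then form the mixture $\bar M_s=\int_0^\infty M_s(\eta)\,\mu(d\eta)$ for a prior $\mu$ to be chosen; $\bar M_s$ is again a nonnegative martingale with mean $\mu((0,\infty))$, so Ville's inequality gives $\PP(\exists\,s:\bar M_s\geq 1)\leq\mu((0,\infty))$. The remaining work is to choose $\mu$ so that (i) the crossing event $\{S_s\geq b(s)\}$ forces $\bar M_s\geq 1$, and (ii) the total mass $\mu((0,\infty))$ equals $h_{\lambda}(1/\Delta^2)/\gamma$ up to the stated factor $4$. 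For (i) I would lower-bound $\bar M_s$ by restricting the integral to a neighborhood of the maximizer $\eta^\star=S_s/s$ of $\eta\mapsto\eta S_s-\eta^2 s/2$, whose value $S_s^2/(2s)$ on the crossing event is at least $\log^{+}(\gamma/h_{\lambda}(s))$ plus strictly positive cross terms coming from the $s\Delta$ part; this is exactly the width at which a Laplace-type estimate of $\int\exp(-\tfrac{s}{2}(\eta-\eta^\star)^2)\mu(d\eta)$ crosses the needed level, with the linear term absorbed by seating the lower support edge of $\mu$ near $\eta=\Delta$.

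Since $h_{\lambda}(s)=\sum_{i=1}^d\min\{s,\sqrt{s\lambda_i}\}$ is a sum of $d$ pieces, each changing regime at $s=\lambda_i$, I would take $\mu$ to be a sum of $d$ component priors, the $i$-th tuned to the scale $\lambda_i$, so that the mixture reproduces the per-coordinate contributions to $h_{\lambda}$ and the total mass splits as $\sum_i$ of per-coordinate masses that assemble into $h_{\lambda}(1/\Delta^2)/\gamma$. An equivalent and arguably more transparent route is a peeling argument: partition time into blocks on which $h_{\lambda}(s)$ (hence the log factor) is essentially constant, replace $b(s)$ by a linear boundary on each block, apply the single-tilt crossing bound $\PP(\exists\,s\in\text{block}:S_s\geq a+cs)\leq e^{-2ac}$ obtained from $M_s(2c)$ and Doob/optional stopping, and sum the resulting series, which converges geometrically precisely because the $\log^{+}(\gamma/h_{\lambda}(s))$ shape makes the block failure probabilities decay like $h_{\lambda}/\gamma$ times a summable factor.

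The hard part, in either route, is matching the exact shape of the boundary and nailing the constant: choosing the mixing prior (or, in the peeling version, the block endpoints) so that both regimes of each $\min\{s,\sqrt{s\lambda_i}\}$ are handled simultaneously and the per-coordinate contributions add up to $h_{\lambda}(1/\Delta^2)/\gamma$ rather than to something larger. Controlling the Laplace/peeling approximation error uniformly in $s$ across the regime change at each $\lambda_i$, and confirming that the $\log^{+}$ truncation region contributes negligibly, is where the bookkeeping concentrates; the martingale and Ville/Doob ingredients are otherwise standard.
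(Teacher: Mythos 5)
You cannot be compared against the paper's own proof here, because the paper does not contain one: this lemma is imported verbatim (citation to Lemma~12 of \citet{lattimore2018refining}) and used as a black box in the proof of Lemma~\ref{lemma:high_prob_all_positive1}. So your proposal has to be judged against the proof in the cited reference and on its own terms. On the plus side, you correctly diagnose the typo (the offset $t\Delta$ must be read as $s\Delta$, which is indeed what the application with $\hat\beta_s = S_s/s$ forces), and your toolkit is the right one and matches the style of argument in the source: exponential martingales $\exp(\eta S_s - \eta^2 s/2)$, Ville's inequality, and a mixture (equivalently, peeling, which is a discrete mixture) adapted to the two regimes of each $\min\{s,\sqrt{s\lambda_i}\}$. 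You also correctly avoid the trap of using the fixed-horizon maximal bound $\PP(\exists\, s\le n: S_s \le -x)\le e^{-x^2/(2n)}$ per block, which would only yield $(h_\lambda/\gamma)^{1/2}$; the infinite-horizon line-crossing bound $\PP(\exists\, s: S_s \le -a-cs)\le e^{-2ac}$ is the right primitive.

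That said, the proposal defers exactly the steps where the lemma is nontrivial, and two of your dismissals are too quick. First, the $d$-component structure: a per-coordinate union bound does not work, since $h_\lambda(s)\le d\max_i h_i(s)$ only gives the event inclusion with $\gamma$ replaced by $\gamma/d$, and summing the $d$ one-dimensional bounds then produces $4d\,h_\lambda(1/\Delta^2)/\gamma$ --- a factor $d$ too large. Your suggestion of a single mixture whose prior is a sum of $d$ per-coordinate components is the correct way to avoid this loss (the masses add while the boundary involves the sum $h_\lambda$), but this is precisely the construction you leave unexecuted, including the calibration that yields the constant $4$ and the uniform Laplace estimate across the regime change at each $\lambda_i$. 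Second, the truncation region: you claim the range where $\log^{+}(\gamma/h_\lambda(s))=0$ is handled ``with room to spare'' by the pure linear boundary, but the bound $e^{-2ac}$ with $a=0$ is vacuous there. One must anchor the line, e.g.\ write $s\Delta \ge \tfrac{1}{2}s^\star\Delta + \tfrac{1}{2}s\Delta$ for $s\ge s^\star$ (where $h_\lambda(s^\star)=\gamma$), obtaining $e^{-s^\star\Delta^2/2}$, and then use the sublinearity $h_\lambda(ts)\le t\,h_\lambda(s)$ for $t\ge 1$ to convert $\gamma = h_\lambda(s^\star)\le s^\star\Delta^2\, h_\lambda(1/\Delta^2)$ into the required comparison with $h_\lambda(1/\Delta^2)/\gamma$. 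These are fixable, and your plan is a credible reconstruction of the cited proof's architecture, but as written it is a roadmap with the decisive quantitative bookkeeping --- the prior/block calibration and the constant --- still open.
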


\begin{proof}[Proof of Lemma~\ref{lemma:high_prob_all_positive1}]
Using Lemma~\ref{lemma-refining} with {$\gamma=T/K$}, $d=1$ and $\lambda_1=\infty$, we have 
\begin{align}
    \PP\bigg( \exists s\geq 1: \hat{\beta}_s+\sqrt{\frac{2}{s}\log^+\bigg(\frac{T}{sK}\bigg)}+\Delta \leq 0\bigg)\leq \frac{4K}{T\Delta^2}.
\end{align}
 Note that for $\alpha\geq 2$ 
\begin{align}
    \sqrt{\frac{2}{s}\log^+\bigg(\frac{T}{sK} \bigg)}\leq \sqrt{\frac{\alpha}{s}\log^+\bigg(\frac{T}{sK} \bigg)},
\end{align}
Lemma follows.
\end{proof}

\subsection{Proof of Lemma \ref{lem:mian-bounding-I2-gaussian}}
Similar to the proof of Lemma \ref{sec:proof_G_upper_subg_fix_sigma}
, where we used the tail bound property of Gaussian distributions in
Lemma \ref{fact:z^2/2}, we need the following lemma for the tail bound of $\cJ$ distribution.
\begin{lemma}
\label{fact:Mz^2/2}
For a  random variable $Z\sim\cJ(\mu,\sigma^2)$, for any $z>0$,
\begin{equation}
\PP( Z>\mu+z\sigma)= \frac{1}{2}\exp\bigg({-\frac{z^2}{2}}\bigg) \qquad \text{and} \qquad \PP(Z<\mu-z\sigma)=\frac{1}{2}\exp\bigg({-\frac{z^2}{2}}\bigg).
\end{equation}
\end{lemma}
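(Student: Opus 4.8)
The plan is to establish both equalities by directly integrating the density $\phi_{\cJ}$ given in \eqref{eq:def_pdf_of_J}. Unlike the Gaussian tail bound in Lemma~\ref{fact:z^2/2}, here the relevant antiderivative is elementary, so I expect to obtain exact equalities rather than one-sided inequalities, and this is precisely the feature of $\cJ$ that makes it convenient.

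First I would write the upper tail as
$$\PP(Z > \mu + z\sigma) = \int_{\mu+z\sigma}^{\infty} \frac{1}{2\sigma^2}|x-\mu|\exp\Big[-\frac{1}{2}\Big(\frac{x-\mu}{\sigma}\Big)^2\Big]\dd x.$$
Since the lower limit $\mu + z\sigma$ exceeds $\mu$ for every $z>0$, throughout the domain of integration we have $|x-\mu| = x - \mu > 0$, so the absolute value can simply be dropped. Next I would substitute $u = (x-\mu)/\sigma$, which transforms the integral into $\int_{z}^{\infty} \tfrac{1}{2} u \exp(-u^2/2)\dd u$. The integrand equals $-\tfrac{1}{2}\tfrac{\dd}{\dd u}\exp(-u^2/2)$, so the antiderivative is $-\tfrac{1}{2}\exp(-u^2/2)$, and evaluating between $z$ and $\infty$ yields $\tfrac{1}{2}\exp(-z^2/2)$, exactly as claimed.

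For the lower tail, I would observe that $\phi_{\cJ}$ depends on $x$ only through $|x-\mu|$ and is therefore symmetric about $\mu$; hence $\PP(Z < \mu - z\sigma) = \PP(Z > \mu + z\sigma)$ and the same value follows. Alternatively one repeats the computation verbatim with the substitution $u = (\mu - x)/\sigma$. There is no genuine obstacle in this lemma: the entire content reduces to the elementary antiderivative of $u\exp(-u^2/2)$, with the only mild care point being the handling of $|x-\mu|$ on the restricted domain. As a sanity check, setting $z = 0$ recovers $\PP(Z > \mu) = \PP(Z < \mu) = 1/2$, confirming that $\phi_{\cJ}$ is a valid probability density and that the two tails partition the mass symmetrically.
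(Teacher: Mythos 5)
Your proof is correct and follows essentially the same route as the paper: both reduce the tail probability to the elementary integral of $u\exp(-u^2/2)$ (the paper works directly with $Z-\mu$, you substitute $u=(x-\mu)/\sigma$, which is the same computation), and both dispatch the lower tail by the symmetry of $\phi_{\cJ}$ about $\mu$. Your explicit handling of the absolute value and the $z=0$ sanity check are minor additions but do not change the argument.
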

\begin{proof}[Proof of Lemma \ref{lem:mian-bounding-I2-gaussian}]
 Let $L=\lceil32/\epsilon^2 \rceil$. We decompose the proof of Lemma \ref{lem:mian-bounding-I2-gaussian} into  the proof of the following two statements: (i) there exists a universal constant $c'$ such that 
 
\begin{equation}
\sum_{s=1}^{L}\EE\bigg[ \frac{1}{G'_{1s}(\epsilon)}-1 \bigg] \leq \frac{c'}{\epsilon^2}, \quad\forall s,
\end{equation}
and (ii) it holds that 
\begin{equation}
 \EE \bigg[ \sum_{s=L}^{T} \bigg( \frac{1}{G'_{1s}(\epsilon)}-1 \bigg) \bigg]  \leq \frac{4}{e^2}\bigg(1+\frac{16}{\epsilon^2}\bigg).
\end{equation}
{Replacing Lemma~\ref{fact:z^2/2} by Lemma~\ref{fact:Mz^2/2}}, the rest of the proof for Statement (ii) is the same as that of \eqref{eq:>2/eps2} in the proof of Lemma~\ref{lem:mian-bounding-I2} presented in Section \eqref{sec:proof_G_upper_subg_fix_sigma}. Hence, we only  prove Statement (i) here.

Let $\hat{\mu}_{1s}=\mu_1+x$.
  Let $Z$ be a sample from $\cJ(\hat{\mu}_{1s},1/s)$. For $x<-\epsilon$, applying  Lemma~\ref{fact:Mz^2/2} with
$z=-\sqrt{s}(\epsilon+x)>0$ yields 
\begin{align}
\label{eq:jdistribution-Gis}
   {G'_{1s}(\epsilon)}=\PP(Z>\mu_1-\epsilon)=\frac{1}{2}\exp\bigg(-\frac{s(\epsilon+x)^2}{2} \bigg).
\end{align}

Since $\hat{\mu}_{1s}\sim \cN(\mu_1,1/s)$, $x\sim \cN(0,1/s)$. Let $f(x)$ be the PDF of $ \cN(0,1/s)$. Note that $G'_{1s}(\epsilon)$ is a random variable with respect to $\hat{\mu}_{1s}$ and $\hat{\mu}_{1s}=\mu_1+x$,  we have
\begin{align}
\label{eq:mian-J}
   \mathbb{E}_{x\sim\cN(0,1/s)}\bigg[\left(\frac{1}{G'_{1s}(\epsilon)}-1\right)\bigg] 
    =&\int_{-\infty}^{-\epsilon} f(x)\left(\frac{1}{G'_{1s}(\epsilon)}-1\right) \dd x + \int_{-\epsilon}^{\infty} f(x)\left(\frac{1}{G'_{1s}(\epsilon)}-1\right) \dd x \notag \\
    \leq& \int_{-\infty}^{-\epsilon} f(x)\left(2\exp(\frac{s(\epsilon+x)^2}{2})-1\right) \dd x \notag \\
       & + \int_{-\epsilon}^{\infty} f(x)\left(\frac{1}{G'_{1s}(\epsilon)}-1\right) \dd x \notag \\
     \leq& \int_{-\infty}^{-\epsilon} f(x)\left(2\exp(\frac{s(\epsilon+x)^2}{2})-1\right) \dd x +\int_{-\epsilon}^{\infty} f(x) \dd x  \notag \\
    \leq & \int_{-\infty}^{-\epsilon} \left(\sqrt{\frac{{2s}}{{\pi }}}\exp(\frac{-sx
    ^2}{2})\exp(\frac{s(\epsilon+x)^2}{2}) \right) \dd x+1 \notag \\
    \leq & \sqrt{\frac{{2s}}{{\pi }}}\exp\bigg(\frac{s\epsilon^2}{2}\bigg)\int_{-\infty}^{-\epsilon} \exp(s\epsilon x)\dd x +1 \notag\\
    \leq & \frac{e^{-s\epsilon^2/2}}{\sqrt{s}\epsilon}+1,
\end{align}
where the first inequality is due to~\eqref{eq:jdistribution-Gis}, the second inequality follows since $\hat{\mu}_{1s}=\mu_1+x\geq \mu_1-\epsilon$ and then   $G'_{1s}(\epsilon)=\PP(Z>\mu_1-\epsilon)\geq 1/2$.

Note that for $s\leq L$, $e^{-s\epsilon^2/2}= O(1)$. From~\eqref{eq:mian-J}, we immediately obtain that for $L=\lceil\frac{32}{\epsilon^2}\rceil$, we have 
\begin{align}
    \sum_{s=1}^{L} \mathbb{E}\bigg[\left(\frac{1}{G'_{1s}(\epsilon)}-1\right)\bigg] = O\bigg(\sum_{s=1}^{L} \frac{1}{\sqrt{s}\epsilon} \bigg)=O\bigg(\int_{s=1}^{1/\epsilon^2}  \frac{1}{\sqrt{s}\epsilon} \dd s\bigg)=O\bigg(\frac{1}{\epsilon^2}\bigg),
\end{align}
which completes the proof.
\end{proof}

\subsection{Proof of Lemma~\ref{lem:asym2}}
\begin{proof}
Replacing Lemma~\ref{fact:z^2/2} by Lemma~\ref{fact:Mz^2/2}, the rest of the proof for Lemma~\ref{lem:asym2} is the same as the proof of Lemma~\ref{lem:asym1} presented in Section \ref{sec:proof_asym1}. Thus we omit it for simplicity.
\end{proof}

\section{Tail Bounds for $\cJ$ Distribution}
In this section, we provide the proof of the tail bounds of $\cJ$ distribution.
\begin{proof}[Proof of Lemma \ref{fact:Mz^2/2}]
According to the PDF of $\cJ$ defined in \eqref{eq:def_pdf_of_J}, for any $z>0$, we immediately have
\begin{align}
       \PP(Z-\mu>z\sigma) & = \int_{z\sigma}^{\infty} \frac{1}{2\sigma^2}x\exp\bigg[{-\frac{1}{2}\bigg(\frac{x}{\sigma}\bigg)^2}\bigg]  \dd x \notag \\
       & =  \frac{-\sigma^2}{2\sigma^2}\exp\bigg[-\frac{x^2}{2\sigma^2} \bigg] \bigg|^{\infty}_{z\sigma} \notag  \\
       & = \frac{1}{2} \exp \bigg(-\frac{z^2}{2} \bigg).
\end{align}
Similarly, for any $z>0$, it holds that
\begin{equation}
\PP( Z<\mu-z\sigma)=\frac{1}{2}\exp\bigg({-\frac{z^2}{2}}\bigg),
\end{equation}
which completes the proof.
\end{proof}

\bibliographystyle{ims}
\bibliography{MOTS.bib}

\end{document}